\newtheorem{theorem}{Theorem}
\newtheorem{lemma}[theorem]{Lemma}
\newtheorem{proposition}[theorem]{Proposition}
\newtheorem{remark}[theorem]{Remark}
\newtheorem{corollary}[theorem]{Corollary}
\newcommand{\norm}[1]{\left\lVert#1\right\rVert}
\renewcommand{\Re}{\operatorname{Re}}
\renewcommand{\Im}{\operatorname{Im}}
\newcommand{\dd}[1]{\mathrm{d}#1}
\newcommand{\e}{\text {e}}
\title{Rank-one matrix estimation: analytic time evolution \\ of gradient descent dynamics}
\author{ 
  {\hspace{1mm}Antoine Bodin} \\
    Communication Theory Laboratory\\
    École Polytechnique Fédérale de Lausanne\\
    Switzerland\\
    \texttt{antoine.bodin@epfl.ch} \\
	\And
  {\hspace{1mm}Nicolas Macris} \\
  Communication Theory Laboratory\\
  École Polytechnique Fédérale de Lausanne\\
  Switzerland\\
  \texttt{nicolas.macris@epfl.ch} \\
}
\begin{document}

\maketitle

\begin{abstract}%
We consider a rank-one symmetric matrix corrupted by additive noise. The rank-one matrix is formed by an $n$-component unknown vector on the sphere of radius $\sqrt{n}$, and we consider the problem of estimating this vector from the corrupted matrix in the high dimensional limit of $n$ large, by gradient descent for a quadratic cost function on the sphere.  Explicit formulas for the whole time evolution of the overlap between the estimator and unknown vector, as well as the cost, are rigorously derived. In the long time limit we recover the well known spectral phase transition, as a function of the signal-to-noise ratio. The explicit formulas also allow to point out interesting transient features of the time evolution. Our analysis technique is based on recent progress in random matrix theory and uses {\it local versions} of the semi-circle law.
\end{abstract}

\keywords{
  Gradient Descent \and Rank-one Matrix Estimation \and Phase Transitions \and Local Semi-circle Law
}

\section{Introduction} 
Gradient descent dynamic is at the root of machine learning methods, and in particular, its stochastic version augmented by various ad-hoc methods, has been very successful at finding "good" minima of cost functions \cite{Lecun98gradient-basedlearning}. However, rigorous detailed results on the full time evolution of the dynamics are scarce even for simple models and usual gradient descent.
In this contribution, we show how to completely solve for the whole time evolution for a simple paradigm of non-linear estimation; the problem of estimating a rank-one spike embedded in noise. 

Let 
$\theta^* \in \mathbb{S}^{n-1}(\sqrt n)$ a {\it hidden} vector on the $n-1$ dimensional sphere of radius $\sqrt n$, i.e., $\theta^* = (\theta_1^*, \ldots, \theta_n^*)^T$ and $\Vert \theta^*\Vert_2^2 = n$. We consider the {\it data} matrix $Y$ with elements
$Y = \theta^* \theta^{*T} + \sqrt \frac{n}{\lambda} \xi$
where $\lambda > 0$ is the signal-to-noise parameter and $\xi=(\xi_{i,j})_{1 \leq i,j\leq n}$ a symmetric random noise matrix with i.i.d $\xi_{i,j}$ for $i\leq j$. The goal is to recover $\theta^*$ given that $Y$ and $\lambda$ are known.
This model is usually considered for a gaussian noise symmetric matrix $\xi_{ij}\sim \mathcal{N}(0,1)$, $i\leq j$, and is variously called the noisy rank-one matrix estimation problem or the spiked Wigner model. In this paper all the results hold under the general assumption that  
$\mathbb{E} \xi_{ij} =0$, $\mathbb{E}\xi_{ij}^2 = 1 + O(\delta_{ij})$ and  for
all integers $p$ we have $\mathbb{E}\vert \xi_{ij}\vert^p$ finite.\footnote{The notation $O(\delta_{ij})$ means that the second moment of off-diagonal elements is $1$ but the variance of diagonal elements can be different. For example
$\xi_{ij} \sim\mathcal{N}(0, 1)$, $i<j$, and $\xi_{ii}\sim\mathcal{N}(0,2)$, corresponds to Wigner's Gaussian Orthogonal Ensemble.}

We consider the cost function ($\Vert \cdot \Vert_F$ the Frobenius norm)
\begin{equation}
\mathcal H(\theta) = \frac{1}{2 n^2}\norm{Y - \theta\theta^T}_F^2 -\frac{1}{2n^2} \norm{Y - \theta^*\theta^{*T}}_F^2 
\end{equation}
(normalized so that, $\mathcal H(\theta^*) = 0$, and at the same time, the limit $n\to +\infty$ is well defined) and want to characterize the time evolution of the estimator for $\theta^*$ provided by gradient descent dynamics on the sphere.
In gradient descent,
an initial (deterministic) vector $\theta_0 \in \mathbb{S}^{n-1}(\sqrt n)$ is updated through the autonomous ordinary differential equation
\begin{equation}\label{eq:mainode}
    \frac{d\theta_t}{dt} = - \eta \bigl( \nabla_\theta \mathcal H(\theta_t) - \frac{\theta_t}{n} \langle \theta_t, \nabla_\theta \mathcal H(\theta_t) \rangle\bigr)
\end{equation} 
where $\eta \in \mathbb R^*_+$ is a learning rate. The second term on the right hand side enforces the constraint  
$\theta_t \in \mathbb{S}^{n-1}(\sqrt n)$ at all times (see Appendix \ref{app:constraint}). 
The main quantities of interest to be computed are the time evolutions of the cost $\mathcal{H}(\theta_t)$ and overlap $q(t)=n^{-1}\langle\theta^*, \theta_{t}\rangle$ in the high-dimensional limit $n\to +\infty$. We note that the overlap is equivalent to the mean-square-error
$n^{-1}\Vert \theta_t -\theta^*\Vert_2^2 = 2\bigl(1 - \frac{\langle\theta^*, \theta_{t}\rangle}{n}\bigr)$.

{\bf Contribution:} We compute the full time evolution of the cost and overlap in the scaling limit $\lim_{n\to +\infty}\mathcal{H}(\theta_{t=\tau n/\eta})$ and 
$\lim_{n\to +\infty} \frac{\langle\theta^*, \theta_{t=\tau n/\eta}\rangle}{n}$ for all $\tau>0$. Explicit formulas are expressed solely in terms of a modified Bessel function of first order in theorems \ref{th:risktrack} and \ref{thmrisktracktrue} (section \ref{main-results}). 
The formulas allow to explore the asymptotic behavior as $\tau\to +\infty$, as well as transient behavior by computing one and two dimensional integrals numerically (section \ref{main-results}). In the long time limit we recover (analytically) as expected the phase transition at $\lambda=1$ with a limiting value of the overlap 
equal to ${\textrm sign}(\langle \theta^*, \theta_0\rangle)\sqrt{1-1/ \lambda} \mathds{1}(\lambda >1)$. 
This is the well known BBP-like phase transition found in the spectral method \cite{Pch2004TheLE, FP2006, baik2005}. The transient behavior also exhibits interesting features. For example, depending on the magnitude of the initial overlap $n^{-1}\langle \theta^*, \theta_0\rangle$ and $\lambda > 1$ for intermediate times we find that the overlap may display a maximum and then decrease to its limiting value. Such results may therefore give guidelines for applying early stopping during gradient descent to get a better estimate of the signal.
On the technical side the analysis is based on a set of integro-differential equations (derived in section \ref{integro-differential-equations}) satisfied by matrix elements of the resolvent of the noise matrix 
$\langle\theta^*, (\frac{1}{\sqrt n}\xi - z)^{-1}\theta_t\rangle$ and $\langle\theta_t, (\frac{1}{\sqrt n}\xi - z)^{-1}\theta_t\rangle$, $z\in \mathbb{C}\setminus\mathbb{R}$. These quantities concentrate with respect to the probability law of the noise matrix as $n\to +\infty$ (for deterministic $\theta^*$ and $\theta_0$). The main steps to prove concentration are explained in section \ref{concentration}. They combine concentration properties of the matrix elements of the resolvents with an adaptation of Gronwall type arguments to the integro-differential equations. Concentration of matrix elements of resolvents of random matrices amount to study the spectrum on a {\it local} scales. Such results are only a decade old in random matrix theory and go under the name of {\it local} semi-circle laws \cite{Erdoes_2008, alex2014, benaychgeorges:hal-01258444}. They have found many applications and here we provide one more.
In section \ref{analysis-gradient-descent} we present an exact analysis of the integro-differential equations and deduce the formulas for the time evolution of the overlap and cost. 


{\bf Related Work:} Starting with the early work of \cite{BurerMonteiroFirst, BurerMonteiroSecond} the efficiency of gradient descent techniques has been uncovered in recent years for a host of low-rank matrix recovery modern problems, e.g., in PCA, low-rank matrix factorization, matrix completion, phase retrieval, phase synchronization, \cite{pmlr-v70-ge17a, 10.5555/3157382.3157531, 10.5555/3305381.3305509, 10.5555/3045118.3045366, pmlr-v54-park17a, ling2019landscape, pmlr-v49-bandeira16}. We also refer to \cite{Chi_2019} for a general review and references. 
Underpinning the efficiency of gradient descent in such non-convex problems, is a high-level result \cite{pmlr-v49-lee16}, stating that when the landscape satisfies a {\it strict saddle property} (i.e., critical points  are strict saddles or minima) gradient descent with sufficiently small discrete step size and random initialization will converge almost surely to a minimum \cite{pmlr-v49-lee16}. 
The spiked Wigner models falls in this category at least for $n$ finite: critical points of the cost function on the sphere $\mathcal{S}^{n-1}(\sqrt n)$ are the eigenvectors of $Y$ and it is easy to show that almost surely (with respect to the noise matrix $\xi$) the largest eigenvector is a minimum while all the other ones are strict saddles. Therefore gradient descent will converge for small enough step size to the largest eigenvector and the spectral properties of $Y$ imply that for $\lambda > 1$ with high probability this largest eigenvector has an overlap with $\theta^*$ close to $\pm \sqrt{1 -1/\lambda}$ (these known facts are briefly reviewed in Appendix 
\ref{app:landscape}).

While these approaches are able to provide guarantees and convergence rates of gradient descent and  variants thereof, they do not provide the full time-evolution and do not say much about intermediate or transient times. This is what we achieve in this paper for the admittedly simple Wigner spiked models. We believe that the techniques used here can be extended to other problems of interest in regression and learning. Recently, pure gradient descent was studied for the much harder optimization of the cost of a mixed matrix-tensor inference problem \cite{mannelli2019afraid, mannelli2019passed}  (see also \cite{Sarao_Mannelli_2020}  for Langevin dynamics) and it was shown how the structure of saddles and minima determines the phase transition thresholds. This work is based on a set of very sophisticated integro-differential CSHCK equations \cite{Crisanti1993TheSI, PhysRevLett.71.173} with a long history in the framework of Langevin dynamics on spin-glass landscapes in statistical physics. While these derivation of the CSHCK equations for the inference problem are non-rigorous and their solution entirely numerical, they contain in principle the whole time evolution of the system (in the context of spin-glasses the formalism has been made rigorous \cite{BADAGA}). Our integro-differential equations and methods are entirely different (and involve different objects) even when specializing to the matrix case, but nevertheless it might be possible to retrieve our final analytical solution by adapting the  CSHCK equations to the matrix case as done in \cite{Cugliandolo1995} for the spherical spin-glass.

{\bf Organization of the paper:} The main theorems and illustrations of analytical formulas for the whole time-evolution of the overlap and cost are presented in section \ref{main-results}. The heart of the method presented here is contained in sections \ref{integro-differential-equations} (derivation of integro-differential equations), \ref{concentration} (local semi-circle laws and concentration of solutions), \ref{analysis-gradient-descent} (analytical solution of integro-differential equations). Appendices contain proofs, of intermediate results and technical material.

In the rest of the paper it is understood that the noise matrix $\xi$ satisfies: (i) $\mathbb{E}\xi_{ij} =0$,
(ii) $\mathbb{E}\xi_{ij}^2 = 1+ O(\delta_{ij})$, (iii) $\mathbb{E}\vert \xi_{ij}\vert^p$ finite for all $p\in \mathbb{N}$.
We use the notations $H = n^{-1/2}\xi$,  $\mathbb{P}$ for its probability law, and $X_n \overset{\mathbb{P}}{ \underset{n \to\infty}{\longrightarrow}} X$ for convergence in probability, i.e., $\lim_{n\to +\infty}\mathbb{P}(\vert X_n - X\vert >\epsilon) = 0$ for any $\epsilon >0$.

\section{Analytical solutions and illustrations}\label{main-results}

We solve gradient descent dynamics \eqref{eq:mainode} in the scaling limit $t = \tau n / \eta$, with fixed $\tau > 0$ and $n\to +\infty$.\footnote{Equivalently this corresponds to solve \eqref{eq:mainode} for a learning rate $\eta=n$, or if we would work with discrete time steps, these would be of order $1/n$. This is the order of magnitude  time steps in numerical experiments in paragraph \ref{subsubsec:experiments}.} 
The main quantities that we determine in the scaling limit are the overlap $q(\tau) = \frac{1}{n} \langle\theta^*, \theta_{n\tau/\eta}\rangle$ and the cost $\mathcal{H}(\theta_{n\tau/\eta})$. We remark that the overlap is directly linked to the mean-square error $n^{-1}\Vert \theta^* - \theta_{n\tau/\eta}\Vert^2 = 2(1 - q(\tau))$.

The initial condition $\theta_0$ is fixed such that $q(0) =\alpha$ where $\alpha\in [-1, 1]$ is independent of $n$. It will become clear that: (i) If $\theta_t$ is a solution with initial condition $q(0) = \alpha$ then $ - \theta_t$ is a solution with
$q(0) = -\alpha$; (ii) For $\alpha=0$ the solution remains trivial $q(\tau)= 0$. Therefore the reader can keep in mind that $\alpha > 0$ (all the analysis is valid for any $\alpha$ though). 

\subsection{Main results}

The solution of the gradient descent dynamics can be entirely expressed thanks to a scaled moment generating function of Wigner's semi-circle law $\mu_{\rm sc}(s) = \frac{1}{2\pi}\sqrt{4 -s^2}\, \chi_{[-2, 2]}(s)$,
\begin{align}\label{eq:MGF}
M_\lambda(\tau) = \int_{-\infty}^{\infty} ds \, \mu_{\rm sc}(s) e^{s \frac{\tau}{\sqrt\lambda}}
\end{align}
Setting $s= 2\cos\theta$ we have $M_\lambda(\tau) = 2\int_0^\pi \frac{d\theta}{\pi} (\sin\theta)^2 e^{\frac{2\tau}{\sqrt\lambda} \cos\theta}$. Integration by parts then shows that $M_\lambda(\tau) = \frac{\sqrt\lambda}{\tau} I_1(\frac{2\tau}{\sqrt{\lambda}})$ where 
$I_1(x) = \int_0^\pi \frac{d\theta}{\pi} (\cos\theta) e^{x\cos\theta}$ is a modified Bessel function of the first kind. 

\begin{theorem}[Time evolution of the overlap]\label{th:risktrack}
Let $\theta_0\in \mathbb{S}^{n-1}(\sqrt  n)$ an initial condition such that $q(0) = \frac{1}{n} \langle\theta^*, \theta_{0}\rangle = \alpha$ for a fixed $\alpha \in [-1, +1]$. The overlap converges in probability to a deterministic limit:
\begin{align}
q(\tau) \overset{\mathbb{P}}{ \underset{n \to\infty}{\longrightarrow}} \bar q(\tau) = \frac{\hat q(\tau)}{\sqrt{\hat p(\tau)}}
\end{align}
where 
\begin{align}
\hat q(\tau) = 
\alpha e^{(1+\frac{1}{\lambda}) \tau} 
\bigl[ 1 - \frac{1}{\lambda} \int_0^\tau ds\, e^{-(1+\frac{1}{\lambda}) s} M_\lambda(s) \bigr] 
\label{eq:hatq0}
\end{align}
and 
\begin{align}
\hat p(\tau) = M_\lambda(2\tau) + 2 \alpha \int_0^\tau ds\, \hat q(s) M_\lambda(2\tau-s)
        + \int_0^\tau\int_0^\tau dudv\, \hat q(u) \hat q(v) M_\lambda(2\tau-u-v).
\label{eq:hatp0}
\end{align}
\end{theorem}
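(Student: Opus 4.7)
The plan is to exploit the fact that projected gradient descent on the sphere for a quadratic cost is equivalent, after radial normalization, to an unconstrained linear flow. A direct calculation on the sphere $\|\theta\|^2 = n$ shows that the right-hand side of \eqref{eq:mainode}, in the scaling time variable $\tau$, is proportional to $Y\theta_\tau - (n^{-1}\theta_\tau^T Y\theta_\tau)\theta_\tau$, which is precisely the sphere-projection of the linear vector field $\tilde\theta \mapsto c Y\tilde\theta$ for an appropriate constant $c$. Hence $\theta_\tau = \sqrt n\,\tilde\theta_\tau/\|\tilde\theta_\tau\|$ with $\tilde\theta_\tau = e^{c\tau Y}\theta_0$, and the overlap is $q(\tau) = (n^{-1}\langle\theta^*,\tilde\theta_\tau\rangle)/\sqrt{n^{-1}\|\tilde\theta_\tau\|^2}$. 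It therefore suffices to identify the deterministic limits $\hat q(\tau) := \lim n^{-1}\langle\theta^*,\tilde\theta_\tau\rangle$ and $\hat p(\tau) := \lim n^{-1}\|\tilde\theta_\tau\|^2$, and set $\bar q = \hat q/\sqrt{\hat p}$.

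Next I would split $Y = \theta^*\theta^{*T} + (n/\sqrt\lambda)H$ and apply Duhamel's formula to the linear flow, expressing $\tilde\theta_\tau$ in terms of the pure Wigner semigroup $e^{c\tau H/\sqrt\lambda}$ applied to $\theta_0$ and $\theta^*$, driven by the scalar $\langle\theta^*,\tilde\theta_s\rangle$ for $s \in[0,\tau]$. Taking inner products with $\theta^*$ and with $\tilde\theta_\tau$ itself yields self-consistent Volterra integral equations for $n^{-1}\langle\theta^*,\tilde\theta_\tau\rangle$ and $n^{-1}\|\tilde\theta_\tau\|^2$, whose kernels are matrix elements $n^{-1}\langle u, e^{tH} v\rangle$ with $u,v\in\{\theta^*,\theta_0\}$. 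Equivalently, and as signposted in the abstract, one differentiates $\langle\theta^*, G(z)\tilde\theta_\tau\rangle$ and $\langle\tilde\theta_\tau, G(z)\tilde\theta_\tau\rangle$ in $\tau$ and uses the identity $G(z)H = I + zG(z)$ to obtain first-order integro-differential equations in $\tau$ at each $z \in\mathbb{C}\setminus\mathbb{R}$; this formulation is better suited to concentration.

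For the limit I would invoke the isotropic local semi-circle law: for $\Im z \gg n^{-1}$ and deterministic $u,v$ of norm $O(\sqrt n)$, $n^{-1}\langle u, G(z)v\rangle = (n^{-1}\langle u,v\rangle)\,m_{\rm sc}(z) + O(n^{-1/2+\epsilon})$ with high probability. Through Helffer--Sj\"ostrand (or a direct contour integral) this transfers to smooth functions of $H$, giving $n^{-1}\langle u, e^{tH}v\rangle \to (n^{-1}\langle u,v\rangle)\int e^{ts}d\mu_{\rm sc}(s)$. With $u,v\in\{\theta^*,\theta_0\}$ and $n^{-1}\langle\theta^*,\theta_0\rangle = \alpha$, $n^{-1}\|\theta^*\|^2 = n^{-1}\|\theta_0\|^2 = 1$, this turns the random kernels in the Volterra system into deterministic integrals expressible through $M_\lambda$. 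Since $\tilde\theta_\tau$ depends on $H$, one cannot feed it directly into the local law as a probe; instead one bootstraps, using the scalar concentration of $\hat q(\tau)$ and $\hat p(\tau)$ to close the system and propagating the estimates uniformly on $\tau \in [0,T]$ by a Gronwall argument applied to the integro-differential equations.

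Finally, the limiting convolution equation for $\hat q$ can be solved either by Laplace transform --- its pole structure is dictated by $1+\sqrt\lambda\,m_{\rm sc}(w) = 0$, which locates the BBP outlier at $w_* = \sqrt\lambda + 1/\sqrt\lambda$ --- or, more transparently, by reducing it to a first-order linear ODE whose variation-of-constants integration is exactly \eqref{eq:hatq0}; the analogous manipulation of the quadratic convolution for $\hat p$ produces \eqref{eq:hatp0}. Continuity of $(a,b) \mapsto a/\sqrt b$ then transfers convergence in probability from the pair $(n^{-1}\langle\theta^*,\tilde\theta_\tau\rangle, n^{-1}\|\tilde\theta_\tau\|^2)$ to $\bar q$. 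The main obstacle is the third step: the standard isotropic local law is stated for deterministic probes, so extending it along the random trajectory $\tilde\theta_\tau$ requires a bootstrap combined with Gronwall-type estimates on Volterra integral equations, uniformly in $\tau$ and with sufficient control near the bulk edge of the semi-circle where $m_{\rm sc}$ has a square-root singularity.
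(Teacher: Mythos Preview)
Your proposal is correct and, while it arrives at the same formulas through essentially the same analytical ingredients (isotropic local law, Laplace transform, Gronwall stability), the entry point is genuinely cleaner than the paper's. The paper works directly with the \emph{constrained} flow $\theta_\tau$, introduces the resolvent generating functions $Q_\tau(z),P_\tau(z)$, derives the \emph{nonlinear} integro-differential system \eqref{eq:integrodiff}, and only then discovers the linearizing substitution $\hat Q_\tau=e^{F(\tau)}\bar Q_\tau$, $\hat P_\tau=e^{2F(\tau)}\bar P_\tau$ with $F(\tau)=\int_0^\tau(\bar q^2+\bar p_1/\sqrt\lambda)$. Your observation that spherical gradient flow is the radial projection of the linear flow $\tilde\theta_\tau=e^{\tau Y/n}\theta_0$ is exactly this substitution seen geometrically: $e^{F(\tau)}=\|\tilde\theta_\tau\|/\sqrt n$, so $\hat q=n^{-1}\langle\theta^*,\tilde\theta_\tau\rangle$ and $\hat p=n^{-1}\|\tilde\theta_\tau\|^2$. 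Starting from the unconstrained flow, Duhamel immediately produces the Volterra equation $\hat q(\tau)=\alpha M_\lambda(\tau)+\int_0^\tau M_\lambda(\tau-s)\hat q(s)\,ds$ and the explicit quadratic expression for $\hat p$; your $\hat p$ formula is already \eqref{eq:hatp0} on the nose once the kernels $n^{-1}\langle u,e^{tH/\sqrt\lambda}v\rangle$ are replaced by their limits.

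What this buys you: in the Duhamel representation every probe vector is deterministic ($\theta^*,\theta_0$), so the isotropic local law applies directly to the kernels and the Gronwall argument is a standard Volterra stability estimate. The paper, by contrast, carries the random $\theta_\tau$ inside $Q_\tau(z),P_\tau(z)$ throughout and must control $\sup_{z\in\mathcal C}|Q_\tau(z)-\bar Q_\tau(z)|$ via a more laborious bootstrap (Appendix~\ref{app:proof-gronwall}). What the paper's route buys: the resolvent formulation is more systematic and may generalize more readily to settings where the unconstrained flow is not linear. One small correction: your claim that the Volterra equation for $\hat q$ reduces to a first-order linear ODE is true but not transparent---it relies on the quadratic self-consistency $G_{\rm sc}^2+zG_{\rm sc}+1=0$, which is exactly the step the paper uses in passing from the first to the second form of \eqref{eq:Lq0}; the Laplace route you also mention is the honest derivation.
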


\begin{theorem}[Time evolution of the cost]\label{thmrisktracktrue}
Under the same conditions as in theorem \ref{th:risktrack} the cost converges to a deterministic limit
$\mathcal{H}(\theta_{\tau n/\eta}) \overset{\mathbb{P}}{ \underset{n \to\infty}{\longrightarrow}}  1 - \frac{1}{2} \frac{d}{d\tau}\big\{\ln \hat p(\tau)\big\}$.
\end{theorem}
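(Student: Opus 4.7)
The plan is to reduce $\mathcal H(\theta_t)$ to quantities already tracked by Theorem~\ref{th:risktrack}. Expanding the Frobenius norms and using $\|\theta_t\|_2^2 = \|\theta^*\|_2^2 = n$ together with $Y = \theta^*\theta^{*T} + (n/\sqrt\lambda)H$ produces the elementary identity
\begin{equation*}
\mathcal H(\theta_t) \;=\; 1 - q(\tau)^2 \;+\; \frac{1}{n\sqrt\lambda}\bigl(\theta^{*T}H\theta^* \;-\; \theta_t^{T}H\theta_t\bigr).
\end{equation*}
Theorem~\ref{th:risktrack} handles the first two terms, so the task reduces to controlling the two quadratic forms $n^{-1}\theta^{*T}H\theta^*$ and $n^{-1}\theta_t^{T}H\theta_t$ in the scaling limit.

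The first form is a deterministic bilinear form against the centered entries of $H$; its mean is zero and its variance is $O(1/n)$ under the moment assumptions on $\xi$, so it vanishes in probability. The second is non-trivial because $\theta_t$ itself depends on $H$, but it is exactly the object handled by the un-normalized parametrization $\theta_t = \sqrt n\,\hat\theta_t/\|\hat\theta_t\|_2$ underlying the formulas \eqref{eq:hatq0}--\eqref{eq:hatp0}, where $\hat\theta_t$ obeys a linear-in-$\hat\theta$ evolution with a $\theta^*$-source and $n^{-1}\langle\theta^*,\hat\theta_t\rangle \to \hat q(\tau)$, $n^{-1}\|\hat\theta_t\|_2^2 \to \hat p(\tau)$. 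Differentiating $\hat p_n(\tau) := n^{-1}\|\hat\theta_t\|_2^2$ and using $n^{-1}\theta_t^{T}H\theta_t = \hat\theta_t^{T}H\hat\theta_t/\|\hat\theta_t\|_2^2$ together with $q(\tau) = \hat q_n(\tau)/\sqrt{\hat p_n(\tau)}$ yields the algebraic identity
\begin{equation*}
\frac{d}{d\tau}\ln\hat p_n(\tau) \;=\; 2\,q(\tau)^2 \;+\; \frac{2}{\sqrt\lambda}\cdot\frac{\theta_t^{T}H\theta_t}{n}.
\end{equation*}
Solving this for $(n\sqrt\lambda)^{-1}\theta_t^{T}H\theta_t$ and substituting into the expansion of $\mathcal H(\theta_t)$ cancels the $1-q^2$ contribution and leaves
\begin{equation*}
\mathcal H(\theta_t) \;=\; 1 \;+\; \frac{\theta^{*T}H\theta^*}{n\sqrt\lambda} \;-\; \frac{1}{2}\frac{d}{d\tau}\ln\hat p_n(\tau).
\end{equation*}

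Passing to the limit then yields the claim, provided one can interchange $\lim_{n\to\infty}$ with $d/d\tau$. Pointwise-in-$\tau$ concentration $\hat p_n(\tau) \to \hat p(\tau)$ is already delivered by the proof of Theorem~\ref{th:risktrack} via Section~\ref{concentration}; the main analytic obstacle is upgrading this to convergence of the time derivative. This is precisely where the local-semi-circle-law input of Section~\ref{concentration} is decisive: the resulting uniform-on-compacts bounds on the resolvent matrix elements $n^{-1}\langle\theta_t,(H-z)^{-1}\theta_t\rangle$ control $\hat p_n'(\tau)$ uniformly in $n$ on any compact $\tau$-interval, and applying the same concentration estimates uniformly in $\tau$ upgrades pointwise convergence to $C^1$-convergence on compacts. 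Equivalently, one can concentrate $n^{-1}\theta_t^{T}H\theta_t$ directly via a contour-integral representation against the resolvent, bypassing the derivative altogether; either route places the real work in the uniformity estimates already at play in the concentration arguments for Theorem~\ref{th:risktrack}.
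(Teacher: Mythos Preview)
Your expansion of the cost and the finite-$n$ identity $\mathcal H(\theta_t) = 1 + (n\sqrt\lambda)^{-1}\theta^{*T}H\theta^* - \frac12\frac{d}{d\tau}\ln\hat p_n(\tau)$ are correct, and the overall approach coincides with the paper's. The paper, however, sidesteps your derivative-interchange issue entirely: it works directly with $p_1(\tau) = n^{-1}\theta_t^{T}H\theta_t$, whose concentration $p_1(\tau)\to\bar p_1(\tau)$ is already established in Proposition~\ref{concentration-a-tout-temps-fini}, and then observes that by \emph{definition} $\hat p(\tau)=e^{2F(\tau)}$ with $F'(\tau)=\bar q(\tau)^2+\bar p_1(\tau)/\sqrt\lambda$, so that $\frac12(\ln\hat p)'=\bar q^2+\bar p_1/\sqrt\lambda$ holds trivially at the deterministic level. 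In other words, your ``alternative route'' at the end --- concentrating $n^{-1}\theta_t^{T}H\theta_t$ directly via the resolvent --- is precisely the paper's route, and it needs no estimates beyond those already used for Theorem~\ref{th:risktrack}. Your main path through the finite-$n$ object $\hat p_n$ and the limit--derivative interchange is a correct but unnecessary detour: since $(\ln\hat p_n)'=2(q^2+p_1/\sqrt\lambda)$ identically, convergence of this derivative is nothing more than convergence of $q$ and $p_1$, which you already have.
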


Using asymptotic properties of the Bessel function and the Laplace method it is possible to calculate the 
asymptotics of the integrals in \eqref{eq:hatq0} and \eqref{eq:hatp0} for $\tau\to +\infty$. We find for the overlap
$\lim_{\tau \to \infty} \bar q(\tau) = {\rm sign}(\alpha) \sqrt{1 - \lambda^{-1}} \mathds{1}(\lambda \geq 1)$. The overlap displays the well known phase transition at $\lambda=1$ also predicted by the spectral method. The asymptotic values can also be derived independently from theorem \ref{th:risktrack} by directly looking at the stationary equation $\nabla_\theta \mathcal H(\theta_\infty) - \frac{\theta_\infty}{n} \langle \theta_\infty, \nabla_\theta \mathcal H(\theta_\infty) = 0$. This is discussed in Appendix \ref{app:limiting-behavior-stoc} for completeness.
It is also possible to go one step further in the asymptotics to argue that at the transition $\lambda =1$ the power law behavior holds $q(t) \sim (\frac{2}{\pi\tau})^{1/4}$ (see Appendix \ref{app:asymptotic}). 


Besides the transition at $\lambda = 1$, for finite $\lambda$, a detailed analysis of the equations of theorem \ref{th:risktrack} which are described also in Appendix \ref{app:asymptotic} allows to derive the first order asymptotic behavior of $\bar q$ for large $\tau$. These tedious calculations are carried out analytically in detail and checked numerically. 
Specifically, in the regime $1 < \lambda < +\infty$ we find
\begin{equation}
    \bar q(\tau) - \text{sign}(\alpha) \sqrt{1-\frac{1}{\lambda}}
    \sim\frac{  \text{sign}(\alpha)  
        }{2 \sqrt{\pi} \lambda^\frac14 \sqrt{1-\frac{1}{\lambda}}  \left(1 - \frac{1}{\sqrt{\lambda}}\right)^2}  
        \tau^{-\frac32}  e^{ -(1-\frac{1}{\sqrt \lambda})^2 \tau } 
    \label{asymp_pos_initial}
\end{equation}
As for $0 < \lambda < 1$, we retrieve a power law behavior:
\begin{equation}
    \bar q(\tau)
    \sim  
    \frac{ \alpha \left( \frac{2}{\pi} \right)^\frac14  }{
        \lambda^\frac58 
        \left(1 - \frac{1}{\sqrt{\lambda}}\right)^2
        \sqrt{
        1 - \alpha^2 + \frac{\alpha^2}{\lambda (\frac{1}{\sqrt \lambda} - 1)^2 }
    }} \tau^{- \frac34}
\end{equation}

The noise-less regime $\lambda = +\infty$ is an elementary case for which the overlap can be obtained very simply. Taking the inner product of \eqref{eq:mainode} with $\theta^*$ we find the differential equation (for $t=\tau n/\eta$)
$\frac{dq(\tau)}{d\tau} = q(\tau) - q(\tau)^3$, $q(0) = \alpha$,
which has the solution
$q(\tau) = \alpha (\alpha^2 + (1-\alpha^2) e^{-2\tau})^{-1/2}$.
As we will see, in the noisy case there is no closed first order ODE for $q(\tau)$ and we must solve integro-differential equations for suitable generating functions (or an an infinite hierarchy of coupled differential equations for generalized overlaps).
As a sanity check, we can verify that theorem \ref{th:risktrack} leads to the same expression when $\lambda\to +\infty$. Explicitly, we find
$\lim_{\lambda \to +\infty} \hat q(\tau) = \alpha e^\tau$ and 
$\lim_{\lambda \to +\infty} \hat p(\tau) = 1 - \alpha^2 + \alpha^2 e^{2\tau}$ which implies the noiseless expression for 
the overlap.

\subsection{Discussion and numerical experiments}\label{subsec:numerics}

Theorems \ref{th:risktrack} and \ref{thmrisktracktrue} provide theoretical predictions for the full time evolution of the overlap and risk in the high dimensional limit $n\to +\infty$. In this section (and Appendix \ref{app:add-experiments}) we briefly illustrate and discuss this time evolution. Moreover in Appendix \ref{app:add-experiments} we also compare the theoretical predictions with simulations of discrete step size gradient descent for runs over multiple samples of $\xi$. 

\subsubsection{Time evolution of the overlap}

Figure \ref{fig:theoretical_curves} shows the theoretical overlap at all times $\tau \in \mathbb R^+$ for  two initial conditions $\alpha = 0.1$ and $\alpha= 0.5$ and any signal-to-noise ratio $\lambda$. Let us say a few words about the {\it transient behaviors} that are observed.
On the one hand, the closer $\alpha$ gets to $0$, the longer it takes for the gradient descent to "kick-in": the overlap stays longer close to $0$ before reaching its asymptotic behavior.
An additional example for $\alpha = 0.01$ illustrates this fact in Appendix \ref{app:add-experiments}. On the other hand, we clearly see that when the initial overlap $\alpha$ is not too close to $0$, the time evolution is {\it not} monotone even for $\lambda > 1$, and a specific bump is reached at early times where the overlap reaches a maximum before dropping down to its limit. In fact this is clearly suggested by \eqref{asymp_pos_initial} for $\alpha <1$. This can be seen in particular in the case $\alpha = 0.5$ in figure \ref{fig:theoretical_curves} (b). This suggests that in practice, in such situations, it may be worth using early-stopping techniques to optimize the estimation of the signal.

%
%

In the case $\lambda = 1$ one can show that $\hat q(\tau) = \alpha \left( I_0(2\tau) + I_1(2\tau) \right)$ (with modified Bessel functions of the first kind) and it is numerically much easier to evaluate the asymptotic behavior of $q(\tau)$. The calculation yields
$q(\tau) \sim \left(\frac{2}{\pi \tau}\right)^{\frac14}$ (see Appendix \ref{app:asymptotic}).
Furthermore plotting a family of curves with $\lambda = 1$ and $\alpha \in (0,1)$ in figure \ref{fig:special_experiment}, it appears that this asymptote also seems to act as an {\it upper-bound}.

\subsubsection{Time evolution of the cost}
We also have predictions for the evolution of cost at any time for any values of $(\alpha, \lambda)$. This is illustated in figure \ref{fig:special_experiment2}. As seen in the analysis of section \ref{cost-analysis}, Equ. \eqref{eq:Rqp} the cost has two additive contributions basically interpreted as $q(\tau)^2$ and $p_1(\tau) = n^{-1}\langle \theta_\tau, H \theta_\tau\rangle$. The second contribution equals $n^{-1}{\rm Tr} H \theta_\tau\theta_\tau^T$ can be interpreted as a similarity measure of the reconstructed matrix 
$\theta_\tau\theta\tau^T$ and the noise matrix $H$, and is thus a "proxy" for assessing over-fitting in this particular setting.
Interestingly, in the depicted example where $\lambda=2, \alpha=0.1$, $p_1(\tau)$ is shown to decrease the risk at early stages at a fast rate, until it slightly "heals" for $\tau \geq 3$. Conversely, when $\alpha = 0.5$, we see $p_1(\tau)$ does not decrease as much in early stages, and the healing phenomenon does not occur. At the same time, as observed on \ref{fig:theoretical_curves} (b)
$q(\tau)$ is not monotonous: it increases at early stages and decreases down to its limiting value later.

\begin{figure}
    \centering
    \subfigure[$\alpha = 0.1$]{\includegraphics[width=7cm]{./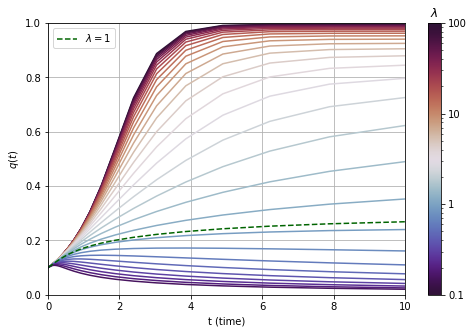} }
    \qquad
    \subfigure[$\alpha = 0.5$]{\includegraphics[width=7cm]{./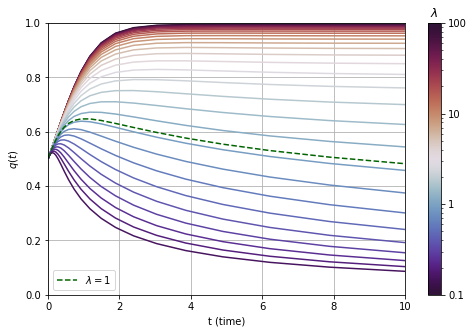} }
    \caption{Overlap as a function of time according to theorem \ref{th:risktrack} for two initial conditions and different 
    signal-to-noise ratios. Thick dotted line corresponds to $\lambda =1$ and tends to zero slowly as $(2/\pi\tau)^{1/4}$. For $\lambda <1$ the curves tend to zero and for $\lambda >1$ they tend to $\sqrt{1 - 1/\lambda}$. }
    \label{fig:theoretical_curves}
\end{figure}
\begin{figure}
    \centering
    \includegraphics[width=7cm]{./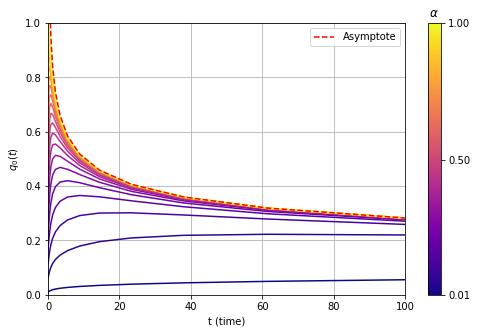} 
    \caption{Overlap comparison for $\lambda=1$ with a range of values for $\alpha$}
    \label{fig:special_experiment}
\end{figure}
\begin{figure}
    \centering
    \subfigure[$\alpha = 0.1$]{\includegraphics[width=7cm]{./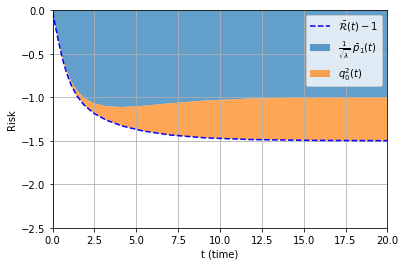} }
    \qquad
    \subfigure[$\alpha = 0.5$]{\includegraphics[width=7cm]{./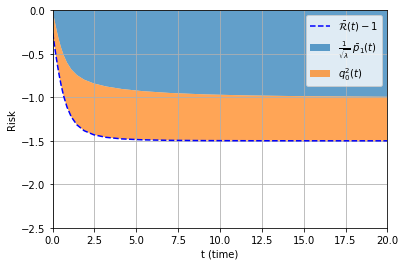} }
    \caption{Cost evolution for $\lambda=5$}
    \label{fig:special_experiment2}
\end{figure}



\section{Integro-differential equations}\label{integro-differential-equations}

We study gradient descent in a regime where $t = \tau n/\eta$, $n\to +\infty$, with $\tau$ fixed. Abusing slightly notation we set $\theta_{\tau n/\eta} \to \theta_{\tau}$ so that
equation \eqref{eq:mainode} reads
\begin{align}\label{eq:mainode2}
\frac{d\theta_\tau}{d\tau}  
= -  n \nabla_\theta \mathcal H(\theta_\tau) + \theta_\tau \langle \theta_\tau, \nabla_\theta \mathcal H(\theta_\tau) \rangle
=
\frac{1}{n^2} Y \theta_\tau - \frac{1}{n^2} \langle \theta_\tau, Y \theta_\tau \rangle \theta_\tau
\end{align}
We define $H = n^{-1/2} \xi$ the suitably normalized noise matrix. Besides the basic overlap $q(\tau) = \frac{1}{n}\langle \theta^*, \theta_\tau\rangle$, another one also plays an important role, namely  $p_1(\tau) = \frac{1}{n} \langle\theta_\tau, H \theta_\tau\rangle$.\\
Using $Y = \theta^* \theta^{*T} + \frac{n}{\sqrt{\lambda}} H$ we find
\begin{align}
\frac{d \theta_\tau}{d\tau}  
    =
    q(\tau)\theta^* + \frac{1}{\sqrt \lambda} H \theta_\tau  - \left( q(\tau)^2 + 
      \frac{p_1(\tau)}{\sqrt \lambda}
    \right) \theta_\tau
    \label{eq:continuity}
\end{align}
It is not possible to write down a closed set of equations that involve only $q(\tau)$ and $p_1(\tau)$, but only for a hierarchy of such objects, or for their generating functions. We now introduce these generating functions and then give the closed set of equations which they satisfy.

The $n\times n$ matrix $H = n^{-1/2} \xi$ is drawn with the probability law $\mathbb{P}$. 
Fix any small $\delta >0$ and let 
$\mathcal{S}_\delta^{n}$ the set of realizations of $H$ such that all eigenvalues fall in an interval $I_\delta =[-2 - \delta, 2 + \delta]$. Then $\mathbb{P}(\mathcal{S}_\delta^{n}) \to 1$ as $n\to +\infty$ (see for example \cite{Erdoes2011Survey}). In the rest of this section it is understood that 
$H\in \mathcal{S}_\delta^{n}$. 
In particular the resolvent matrix\footnote{Here $I$ is the identity $n\times n$ matrix and we will slightly abuse notation by omitting it and simply write $(H -z)^{-1}$.}
$\mathcal{R}(z) = (H - z I)^{-1}$ is well defined for 
$z\in\mathbb{C}\setminus I_\delta$ if $H\in \mathcal{S}_\delta^{n}$. 

For 
any contour $\mathcal{C} = \{z\in \mathbb{C} \mid z= \rho e^{i\theta}, \theta\in [0, 2\pi]\}$ with $\rho >2 + \delta$ we can define three generating functions
\begin{align}\label{eq:genfunctions}
Q_\tau(z) = \frac{1}{n}\langle \theta_\tau, \mathcal{R}(z) \theta^* \rangle, 
\quad
P_\tau(z) = \frac{1}{n}\langle \theta_\tau, \mathcal{R}(z) \theta_\tau \rangle,
\quad
R(z) = \frac{1}{n}\langle \theta^*, \mathcal{R}(z) \theta^* \rangle .
\end{align}
From standard holomorphic functional calculus for matrices (see for example \cite{schwartz1958linear}) we have 
\begin{align}
q(\tau)  = - \oint_{\mathcal{C}} \frac{dz}{2\pi i} Q_\tau(z), \quad 
p_1(\tau)  = - \oint_{\mathcal{C}} \frac{dz}{2\pi i} z P_\tau(z) .
\end{align}
Note that these two overlaps are part of a hierarchy of overlaps $q_k(\tau) \equiv \frac{\langle \theta^*, H^k\theta_\tau \rangle}{n} = - \oint_{\mathcal{C}} \frac{dz}{2\pi i}  z^k Q_\tau(z)$ and $p_k(\tau) \equiv \frac{\langle \theta_\tau, H^k \theta_\tau \rangle}{n} = - \oint_{\mathcal{C}} \frac{dz}{2\pi i} z^k P_\tau(z)$, $k\geq 1$, which can all be calculated by the methods of this paper (note $q(\tau)$ corresponds to $k=0$). 

\begin{proposition}\label{diff-equ-gen}
For any realization $H \in \mathcal{S}_\delta$ and any $z\in \mathbb{C}\setminus I_\delta$ the generating functions \eqref{eq:genfunctions} satisfy the integro-differential equation
\begin{align}\label{eq:integrodiff}
\begin{cases}
    \frac{d}{d\tau}Q_\tau(z) = q(\tau) R(z) 
                               + \frac{1}{\sqrt \lambda} (z Q_\tau(z) + q(\tau))
                                - \left(q^2(\tau) + \frac{1}{\sqrt \lambda} p_1(\tau) \right) Q_\tau(z) 
    \\
    \frac12 \frac{d}{d\tau} P_\tau(z) = q(\tau) Q_\tau(z) 
                                        + \frac{1}{\sqrt \lambda} (z P_\tau(z) + 1)
                                         - \left(q^2(\tau) + \frac{1}{\sqrt \lambda} p_1(\tau) \right) P_\tau(z)
\end{cases}
\end{align}
where $q(\tau) = -\oint_{\mathcal{C}} \frac{dz}{2\pi i} Q_\tau(z)$ and $p_1(\tau) = -\oint_{\mathcal{C}} \frac{dz}{2\pi i} z P_\tau(z)$. 
\end{proposition}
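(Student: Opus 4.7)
The plan is to obtain the two identities by directly differentiating $Q_\tau(z)$ and $P_\tau(z)$ with respect to $\tau$, plugging in the closed-form velocity \eqref{eq:continuity} for $d\theta_\tau/d\tau$, and then simplifying using two elementary facts: the symmetry of the resolvent $\mathcal{R}(z)^T=\mathcal{R}(z)$ (since $H$ is symmetric) and the resolvent identity
\begin{equation*}
H\mathcal{R}(z) = (H-z)\mathcal{R}(z) + z\mathcal{R}(z) = I + z\mathcal{R}(z),
\end{equation*}
together with the sphere constraint $\tfrac{1}{n}\langle\theta_\tau,\theta_\tau\rangle=1$. Since $H\in\mathcal{S}_\delta^n$ and $z\notin I_\delta$, the resolvent $\mathcal{R}(z)$ is well-defined and $z$-independent in $\tau$, so we may differentiate freely under the inner product.

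First I would treat $Q_\tau(z)$. Using that $\mathcal{R}(z)\theta^*$ is constant in $\tau$,
\begin{equation*}
\tfrac{d}{d\tau}Q_\tau(z)=\tfrac{1}{n}\langle \tfrac{d\theta_\tau}{d\tau},\mathcal{R}(z)\theta^*\rangle.
\end{equation*}
Substituting \eqref{eq:continuity} splits this into three pieces. The term $q(\tau)\theta^*$ contributes $q(\tau)R(z)$; the term $-(q(\tau)^2+p_1(\tau)/\sqrt\lambda)\theta_\tau$ contributes $-(q(\tau)^2+p_1(\tau)/\sqrt\lambda)Q_\tau(z)$; and for the remaining term I would move $H$ across the inner product using symmetry, then apply the resolvent identity to write
\begin{equation*}
\tfrac{1}{\sqrt\lambda}\tfrac{1}{n}\langle H\theta_\tau,\mathcal{R}(z)\theta^*\rangle=\tfrac{1}{\sqrt\lambda}\tfrac{1}{n}\langle\theta_\tau,(I+z\mathcal{R}(z))\theta^*\rangle=\tfrac{1}{\sqrt\lambda}\bigl(q(\tau)+zQ_\tau(z)\bigr).
\end{equation*}
Summing the three contributions gives the first equation of \eqref{eq:integrodiff}.

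Next I would handle $P_\tau(z)$. Because $P_\tau(z)$ is a symmetric bilinear form in $\theta_\tau$,
\begin{equation*}
\tfrac{d}{d\tau}P_\tau(z)=\tfrac{2}{n}\langle\tfrac{d\theta_\tau}{d\tau},\mathcal{R}(z)\theta_\tau\rangle,
\end{equation*}
which is why the factor $\tfrac12$ appears in the statement. Substituting \eqref{eq:continuity} and repeating the same manipulation—symmetry of $\mathcal{R}(z)$, the identity $H\mathcal{R}(z)=I+z\mathcal{R}(z)$, and now $\tfrac{1}{n}\langle\theta_\tau,\theta_\tau\rangle=1$—yields the second equation. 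The integral representations of $q(\tau)$ and $p_1(\tau)$ are then just the standard Cauchy/holomorphic functional calculus on $\mathcal{C}$ applied to $\mathcal{R}(z)$ and $z\mathcal{R}(z)$, making the system integro-differential.

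There is no real obstacle here; the computation is a one-page exercise once one recognizes the resolvent identity. The only point that warrants a brief justification is the interchange of $d/d\tau$ with the inner product, which is valid because $\theta_\tau$ is $C^1$ in $\tau$ (the right-hand side of \eqref{eq:mainode2} is a polynomial in $\theta_\tau$, giving local smoothness; the sphere constraint proved in Appendix \ref{app:constraint} together with continuity of the vector field gives global existence), and because $\mathcal{R}(z)$ is a fixed bounded operator for $z\notin I_\delta$ when $H\in\mathcal{S}_\delta^n$. With these remarks in place, the two equations follow line by line.
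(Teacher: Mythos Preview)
Your proposal is correct and follows essentially the same route as the paper: differentiate $Q_\tau(z)$ and $P_\tau(z)$, substitute \eqref{eq:continuity}, and simplify via the resolvent identity $(H-z)^{-1}H = I + z(H-z)^{-1}$ together with the symmetry of $\mathcal{R}(z)$ and the constraint $\tfrac{1}{n}\langle\theta_\tau,\theta_\tau\rangle=1$. Your additional remarks on the smoothness of $\tau\mapsto\theta_\tau$ and boundedness of $\mathcal{R}(z)$ for $z\notin I_\delta$ are a welcome (if minor) refinement over the paper's presentation.
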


\begin{proof}
Let us derive the first equation. Using \eqref{eq:continuity}
\begin{align}
\frac{d}{d\tau} Q_\tau(z) 
 = 
\frac{1}{n}\langle\theta^*, (H -z)^{-1} \frac{d\theta_\tau}{d\tau}\rangle
&
=
\frac{q(\tau)}{n} \langle \theta^*, (H -z)^{-1} \theta^*\rangle 
+ 
\frac{1}{n \sqrt \lambda} \langle \theta^*, (H -z)^{-1}H \theta_\tau\rangle 
\nonumber \\ & 
-
\left( q(\tau)^2 + 
      \frac{p_1(\tau)}{\sqrt \lambda}
    \right) \frac{1}{n}\langle \theta^*, (H -z)^{-1}\theta_\tau\rangle
\end{align}
Using $(H -z)^{-1}H  = I + z (H-z)^{-1}$ in the second term in the right hand side, we immediately get the first equation of \eqref{eq:integrodiff}.
Let us now derive the second equation. Again using \eqref{eq:continuity}
\begin{align}
\frac{d}{d\tau} P_\tau(z) 
& = 
\frac{1}{n}\langle \frac{d\theta_\tau}{d\tau}, (H -z)^{-1} \theta_\tau\rangle + \frac{1}{n}\langle \theta_\tau, (H -z)^{-1} \frac{d\theta_\tau}{d\tau}\rangle
\nonumber \\ &
= \frac{q(\tau)}{n} \langle \theta^*, (H - z)^{-1} \theta_\tau\rangle + \frac{q(\tau)}{n} \langle \theta_\tau, (H - z)^{-1} \theta^*\rangle
+
\frac{1}{n\sqrt \lambda} \langle H \theta_\tau, (H-z)^{-1} \theta_\tau\rangle  
\nonumber \\ &
+ \frac{1}{n\sqrt \lambda} \langle \theta_\tau, (H-z)^{-1}H \theta_\tau\rangle 
-
2 \left( q(\tau)^2 + \frac{p_1(\tau)}{\sqrt \lambda}\right) \frac{1}{n}\langle\theta_\tau, (H-z)^{-1} \theta_\tau\rangle
\end{align}
Since $(H - zI)$ is a symmetric matrix $\langle \theta^*, (H - z)^{-1} \theta_\tau\rangle = \langle \theta_\tau, (H - z)^{-1} \theta^*\rangle$ and $\langle H \theta_\tau, (H-z)^{-1} \theta_\tau\rangle = \langle \theta_\tau, (H-z)^{-1}H \theta_\tau\rangle$. Thus using again 
$(H -z)^{-1}H  = I + z (H-z)^{-1}$ and $\langle\theta_\tau, \theta_\tau\rangle = 1$ we get the second equation of \eqref{eq:integrodiff}.
\end{proof}

\section{Concentration results}\label{concentration}

We introduce the Stieltjes transform of the semi-circle law $\mu_{\text{sc}}(s) = \frac{1}{2\pi}\sqrt{4- s^2} \chi_{[-2, 2]}(s)$,
\begin{align}\label{eq:Stieltjes}
G_{\text{sc}}(z) = \int_{\mathbb{R}} ds \frac{\mu_{\text{sc}}(s) }{s -z} = \frac{1}{2}( -z + \sqrt{z^2 -4}), \quad z\in \mathbb{C}\setminus [-2, 2].
\end{align}
It is a classical result of random matrix theory \cite{Erdoes2011Survey} that, for any $z\in \mathbb{C}\setminus [-2, 2]$,
$$\frac{1}{n} {\rm Tr} \mathcal{R}(z) \overset{\mathbb{P}}{ \underset{n \to\infty}{\longrightarrow}} G_{\text{sc}}(z)$$. However here we will need convergence in probability of {\it matrix elements} of the resolvent (for given $z$ and also uniformly in $z$). This tool is provided by recent results in random matrix theory that go under the name of {\it local semi-circle laws} \cite{alex2014}. 

Recall that $\mathcal{S}_\delta^n$ is the set of realizations of $H =\frac{1}{\sqrt n} \xi$ with eigenvalues in $I_\delta = [-2 -\delta, 2+\delta]$, $\delta >0$, and that $\lim_{n\to +\infty}\mathbb{P}(\mathcal{S}_\delta^n) = 1$. It will be convenient to use the notation 
$\mathbb{P}_\delta$ for the conditional probability law of $H$ {\it conditioned} on the event  $H\in \mathcal{S}_\delta^n$.

\subsection{Initial condition analysis}\label{subsec:initial}


We first derive natural initial conditions for the integro-differential equations \eqref{eq:integrodiff} when $\frac{1}{n}\langle\theta_0, \theta^*\rangle = q(0) = \alpha \in [-1, +1]$.
We claim (corollary \ref{le_mse} below) that the initial conditions $Q_0(z), P_0(z)$ as well as $R(z)$ concentrate on explicit functions $\bar Q_0(z), \bar P_0(z), \bar R(z)$. The main tool is the following proposition which we prove in 
section \ref{app:initial-conditions} (based on a theorem in \cite{alex2014}):
\begin{proposition}\label{local-semi-circle-version}
Fix $\delta >0$, $\epsilon>0$.
For any fixed $z \in \mathbb C \setminus I_\delta$ and any deterministic sequence of unit vectors $u^{(n)},v^{(n)} \in \mathbb{S}^{n-1}(1)$ the $n$-sphere of unit radius, we have
\begin{equation}
        \lim_{n \to \infty} \mathbb{P}_\delta\bigl( 
            \vert \langle u^{(n)}, \mathcal R(z) v^{(n)} \rangle - \langle u^{(n)}, v^{(n)}\rangle G_{\text{sc}}(z) \vert
            > \epsilon
        \bigr) = 0.
    \end{equation}
\end{proposition}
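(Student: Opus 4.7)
The plan is to reduce the statement to the isotropic local semicircle law of Bloemendal, Erdős, Knowles, Yau, and Yin [alex2014]. Under the moment hypotheses on $\xi$ stated in the paper (zero mean, variance $1+O(\delta_{ij})$, all moments finite), that theorem gives, for any fixed spectral parameter $z$ with $\mathrm{Im}(z)>0$ and any deterministic unit vectors $u^{(n)},v^{(n)}\in\mathbb{S}^{n-1}(1)$, a high-probability bound of the form
\begin{equation*}
    |\langle u^{(n)},\mathcal{R}(z)v^{(n)}\rangle - \langle u^{(n)},v^{(n)}\rangle G_{\mathrm{sc}}(z)| \leq n^{\varepsilon}\,\Psi_n(z),
\end{equation*}
where $\Psi_n(z)\to 0$ as $n\to\infty$ uniformly on compact subsets of $\mathbb{C}\setminus[-2,2]$. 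For any fixed $z$ with $\mathrm{Im}(z)>0$, this directly gives convergence in probability, which is the statement we want (with $\mathbb{P}$ rather than $\mathbb{P}_\delta$ at this stage).

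To cover the case $z\in\mathbb{R}\setminus I_\delta$, I would exploit analyticity of the resolvent off the spectrum. On the event $\mathcal{S}_\delta^n$, every eigenvalue of $H$ lies in $I_\delta$, so the scalar function $w\mapsto \langle u^{(n)},\mathcal{R}(w)v^{(n)}\rangle$ is holomorphic on $\mathbb{C}\setminus I_\delta$ and its derivative is uniformly controlled by $\mathrm{dist}(w,I_\delta)^{-2}$. Similarly, $G_{\mathrm{sc}}$ is Lipschitz near any such real $z$. Pick an auxiliary $\eta>0$, apply the isotropic local law at $z+i\eta$, and transfer the bound to $z$ by Lipschitz continuity, with an error $O(\eta\,\delta^{-2})$ that can be made arbitrarily small by first choosing $\eta$ and then letting $n\to\infty$.

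Finally, passing from $\mathbb{P}$ to the conditional law $\mathbb{P}_\delta$ is harmless: for any event $A_n$,
\begin{equation*}
    \mathbb{P}_\delta(A_n) = \frac{\mathbb{P}(A_n\cap\mathcal{S}_\delta^n)}{\mathbb{P}(\mathcal{S}_\delta^n)} \leq \frac{\mathbb{P}(A_n)}{\mathbb{P}(\mathcal{S}_\delta^n)},
\end{equation*}
and since $\mathbb{P}(\mathcal{S}_\delta^n)\to 1$, vanishing of $\mathbb{P}(A_n)$ implies vanishing of $\mathbb{P}_\delta(A_n)$. The main obstacle is not conceptual but bibliographic: one must match the precise hypotheses of [alex2014] (generalized Wigner class, subexponential or polynomial tail bounds, normalization conventions) to the assumptions made here. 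Once the correct statement is identified, the proposition follows immediately; the mild extension from complex to real $z$ outside $I_\delta$ is standard analytic continuation on the good event.
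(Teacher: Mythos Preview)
Your proposal is correct and follows essentially the same route as the paper: invoke the isotropic local semicircle law of \cite{alex2014} for $\Im z>0$ to get convergence in $\mathbb{P}$-probability, handle real $z\in\mathbb{R}\setminus I_\delta$ by a Lipschitz/derivative bound on the resolvent (valid on the event $\mathcal{S}_\delta^n$) to transfer the estimate from a nearby $z+i\eta$, and pass between $\mathbb{P}$ and $\mathbb{P}_\delta$ using $\mathbb{P}(\mathcal{S}_\delta^n)\to 1$. The only detail you left implicit is the case $\Im z<0$, which the paper dispatches by complex conjugation.
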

Applying this proposition to the three pairs of unit vectors $(u^{(n)}, v^{(n)}) = (\frac{\theta_0}{\sqrt n}, \frac{\theta^*}{\sqrt n})$, $(\frac{\theta_0}{\sqrt n}, \frac{\theta_0}{\sqrt n})$, and $(\frac{\theta^*}{\sqrt n}, \frac{\theta^*}{\sqrt n})$ we directly obtain

\begin{corollary}\label{le_mse}
    Fix $\alpha\in [-1, +1]$ and $\theta_0$ such that $\frac{1}{n}\langle \theta_0, \theta^*\rangle = \alpha$.
    For $z \in \mathbb C \setminus \mathbb{I}_\delta$ we have convergence in probability of $Q_0(z), P_0(z), R(z)$ to the Stieljes transform of the semi-circle law:
    \begin{align}
            Q_0(z) \overset{\mathbb{P}_\delta}{ \underset{n \to\infty}{\longrightarrow}} \bar Q_0(z) = \alpha G_{\text{sc}}(z), \quad
            P_0(z) \overset{\mathbb{P}_\delta}{ \underset{n \to\infty}{\longrightarrow}} \bar P_0(z) = G_{\text{sc}}(z), \quad
            R(z) \overset{\mathbb{P}_\delta}{ \underset{n \to\infty}{\longrightarrow}} \bar R(z) = G_{\text{sc}}(z). 
    \end{align}
\end{corollary}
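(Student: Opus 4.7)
The plan is to observe that Corollary \ref{le_mse} is essentially a one-line consequence of Proposition \ref{local-semi-circle-version} once the generating functions $Q_0, P_0, R$ are rewritten as bilinear forms in appropriately normalized unit vectors. Since $\theta_0, \theta^* \in \mathbb{S}^{n-1}(\sqrt n)$, the rescaled vectors $u^{(n)} := \theta_0/\sqrt n$ and $v^{(n)} := \theta^*/\sqrt n$ lie on the unit sphere $\mathbb{S}^{n-1}(1)$, so the hypotheses of the proposition are met. By assumption these are deterministic sequences in $n$, so no measurability issue arises.

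First I would rewrite, using $\|\theta_0\|_2^2 = \|\theta^*\|_2^2 = n$,
\[
Q_0(z) = \langle u^{(n)}, \mathcal R(z) v^{(n)}\rangle, \quad P_0(z) = \langle u^{(n)}, \mathcal R(z) u^{(n)}\rangle, \quad R(z) = \langle v^{(n)}, \mathcal R(z) v^{(n)}\rangle,
\]
together with the elementary identities $\langle u^{(n)}, v^{(n)}\rangle = \alpha$ (from the hypothesis $n^{-1}\langle \theta_0, \theta^*\rangle = \alpha$) and $\langle u^{(n)}, u^{(n)}\rangle = \langle v^{(n)}, v^{(n)}\rangle = 1$.

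Then I would apply Proposition \ref{local-semi-circle-version} three times, once for each pair $(u^{(n)}, v^{(n)})$, $(u^{(n)}, u^{(n)})$, $(v^{(n)}, v^{(n)})$. Each application yields convergence in probability under $\mathbb P_\delta$ of the corresponding bilinear form to $\langle \cdot, \cdot\rangle\, G_{\text{sc}}(z)$, which, with the inner-product values computed above, gives exactly the three limits $\alpha G_{\text{sc}}(z)$, $G_{\text{sc}}(z)$, $G_{\text{sc}}(z)$ claimed in the statement.

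There is no real obstacle at the level of the corollary: it is a pure bookkeeping step once the unit vectors are identified. All the substantive content, and the only place where genuine random matrix input is required, is Proposition \ref{local-semi-circle-version} itself, whose proof is deferred to the later section referenced in the text and rests on the isotropic local semicircle law of \cite{alex2014}. The only small thing to be careful about is that the convergence statement is under the conditional law $\mathbb P_\delta$ (conditioned on $H \in \mathcal S_\delta^n$); this is consistent with the proposition's formulation and is the same kind of conditioning that will be used throughout the concentration arguments to follow.
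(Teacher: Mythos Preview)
Your proposal is correct and matches the paper's own argument essentially verbatim: the paper also derives the corollary by applying Proposition~\ref{local-semi-circle-version} to the three pairs of unit vectors $(\theta_0/\sqrt n,\theta^*/\sqrt n)$, $(\theta_0/\sqrt n,\theta_0/\sqrt n)$, $(\theta^*/\sqrt n,\theta^*/\sqrt n)$ and reading off the inner products $\alpha,1,1$.
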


\subsection{Concentration of the overlap for finite times}\label{subsec:conc}

We consider the integro-differential equations \eqref{eq:integrodiff} for the limiting initial conditions $(\bar Q_0(z), \bar P_0(z)) = (\alpha G_{\text{sc}}(z), G_{\text{sc}}(z))$ and limiting $\bar R(r) = G_{\text{sc}}(z)$. 
More  explicitly we define $\bar Q_\tau(z)$, $\bar P_\tau(z)$ as the (holomorphic over $z\in\mathbb{C}\setminus I_\delta$) solutions of
\begin{align}\label{eq:limitingdiff}
\begin{cases}
    \frac{d}{d\tau}\bar Q_\tau(z) = \bar q(\tau) \bar R(z) 
                               + \frac{1}{\sqrt \lambda} (z \bar Q_\tau(z) + \bar q(\tau))
                                - \left(\bar q^2(\tau) + \frac{1}{\sqrt \lambda} \bar p_1(\tau) \right) \bar Q_\tau(z) 
    \\
    \frac12 \frac{d}{d\tau} \bar P_\tau(z) = \bar q(\tau) \bar Q_\tau(z) 
                                        + \frac{1}{\sqrt \lambda} (z \bar P_\tau(z) + 1)
                                         - \left(\bar q^2(\tau) + \frac{1}{\sqrt \lambda} \bar p_1(\tau) \right) \bar P_\tau(z)
\end{cases}
\end{align}
where {\it by definition} $\bar q(\tau) = -\oint_{\mathcal{C}} \frac{dz}{2\pi i} \bar Q_\tau(z)$ and $\bar p_1(\tau) = -\oint_{\mathcal{C}} \frac{dz}{2\pi i} z \bar P_\tau(z)$, and the initial conditions are $\bar Q_0(z) = \alpha G_{\rm sc}(z)$, $\bar P_0(z) = G_{\rm sc}(z)$.
The explicit calculation of the solutions 
$\bar Q_\tau(z)$, $\bar P_\tau(z)$ 
in section \ref{analysis-gradient-descent} shows that they exist and they are holomorphic for 
$z\in\mathbb{C}\setminus I_\delta$. 

One can show that the concentration result of corollary \ref{le_mse}  
extends to all finite times.
This can be done by a Gr\"onwall stability type argument. A difficulty with respect to the standard argument is that here we deal with an integro-differential equation instead of purely ordinary differential equation. For this reason we need a {\it uniform} (over $z$) concentration result which strengthens proposition \ref{local-semi-circle-version}. The following is proved in section \ref{app:initial-conditions}.

\begin{proposition}\label{uniform-local-semi-circle-version}
Fix $\delta>0$, $\epsilon>0$. 
Recall $\mathcal{C} = \{z\in \mathbb{C} \mid z = \rho e^{i\theta}, \theta\in [0, 2\pi]\}$ for $\rho \geq 2 + \delta$.  For any deterministic sequence of unit vectors $u^{(n)},v^{(n)} \in \mathbb{S}^{n-1}(1)$ the $n$-sphere of unit radius, we have
    \begin{equation}
        \lim_{n \to \infty} \mathbb{P}_\delta\bigl( \sup_{z\in \mathcal{C}}
            \vert \langle u^{(n)}, \mathcal R(z) v^{(n)} \rangle - \langle u^{(n)}, v^{(n)}\rangle G_{\text{sc}}(z) \vert
            > \epsilon
        \bigr) = 0.
    \end{equation}
\end{proposition}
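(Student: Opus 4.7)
The plan is to upgrade the pointwise concentration of Proposition \ref{local-semi-circle-version} to a uniform statement on the compact contour $\mathcal{C}$ by a deterministic Lipschitz bound plus an $\epsilon$-net/union bound argument. The main observation is that, conditionally on $\mathcal{S}_\delta^n$, the spectrum of $H$ is separated from $\mathcal{C}$ by a positive distance independent of $n$, so $z\mapsto \langle u^{(n)},\mathcal{R}(z)v^{(n)}\rangle$ is Lipschitz with a deterministic constant.

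First, I would fix $\delta>0$ and observe that on $\mathcal{S}_\delta^n$ one has $\operatorname{dist}(z,I_\delta)\geq d:=\rho-2-\delta>0$ for every $z\in\mathcal{C}$ (treating the boundary case $\rho=2+\delta$ by inflating the radius slightly, which is harmless since in the application $\mathcal{C}$ is only required to enclose $I_\delta$). Hence $\|\mathcal{R}(z)\|_{\rm op}\leq 1/d$ and, using $\frac{d}{dz}\mathcal{R}(z)=\mathcal{R}(z)^2$, also $\|\tfrac{d}{dz}\mathcal{R}(z)\|_{\rm op}\leq 1/d^2$. Combined with Cauchy–Schwarz and $\|u^{(n)}\|=\|v^{(n)}\|=1$, this gives the deterministic estimate
\begin{equation*}
|\langle u^{(n)},\mathcal{R}(z)v^{(n)}\rangle-\langle u^{(n)},\mathcal{R}(z')v^{(n)}\rangle|\leq \frac{|z-z'|}{d^2},\qquad z,z'\in\mathcal{C}.
\end{equation*}
Since $G_{\rm sc}$ is analytic on $\mathbb{C}\setminus[-2,2]$, it is also Lipschitz on the compact $\mathcal{C}$ with some constant $L_{\rm sc}$. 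Setting $L:=d^{-2}+L_{\rm sc}$, the error
\begin{equation*}
f_n(z):=\langle u^{(n)},\mathcal{R}(z)v^{(n)}\rangle-\langle u^{(n)},v^{(n)}\rangle G_{\rm sc}(z)
\end{equation*}
is $L$-Lipschitz on $\mathcal{C}$ on the event $\mathcal{S}_\delta^n$, with $L$ independent of $n$.

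Second, I would cover $\mathcal{C}$ by a finite $\epsilon$-net $\{z_1,\dots,z_K\}$ with $K=K(\epsilon,L)$ independent of $n$ such that every $z\in\mathcal{C}$ lies within $\epsilon/(2L)$ of some $z_k$. Applying Proposition \ref{local-semi-circle-version} at each of the fixed points $z_k$ with tolerance $\epsilon/2$ and taking a union bound over the $K$ points yields
\begin{equation*}
\mathbb{P}_\delta\!\Big(\max_{1\leq k\leq K}|f_n(z_k)|>\tfrac{\epsilon}{2}\Big)\xrightarrow[n\to\infty]{}0.
\end{equation*}
On the complementary event, for any $z\in\mathcal{C}$ pick $z_k$ with $|z-z_k|\leq\epsilon/(2L)$; then $|f_n(z)|\leq|f_n(z_k)|+L|z-z_k|\leq\epsilon$, so $\sup_{z\in\mathcal{C}}|f_n(z)|\leq\epsilon$ on an event of probability tending to one, which is exactly the claim.

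I do not foresee a substantial obstacle: the only delicate point is that the Lipschitz bound requires a strict separation $\operatorname{dist}(\mathcal{C},I_\delta)>0$, which is precisely why we conditioned on $\mathcal{S}_\delta^n$ and took $\rho$ strictly larger than $2+\delta$. Given that, the argument is the canonical promotion of pointwise to uniform convergence on a compact set via equicontinuity, and the quantitative input from \cite{alex2014} encapsulated in Proposition \ref{local-semi-circle-version} suffices to close the estimate.
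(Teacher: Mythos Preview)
Your proposal is correct and follows essentially the same approach as the paper: a finite $\epsilon$-net on $\mathcal{C}$, the pointwise result of Proposition~\ref{local-semi-circle-version} plus a union bound over the net, and a deterministic Lipschitz (derivative) bound on the resolvent matrix element---valid on $\mathcal{S}_\delta^n$ thanks to the spectral gap---to pass from the net to the whole contour. The only cosmetic differences are that the paper parametrizes by the angle $\theta$ and bounds $\partial_\theta\langle u,\mathcal{R}(\rho e^{i\theta})v\rangle$, whereas you work directly with the $z$-Lipschitz constant and explicitly include the $G_{\rm sc}$ contribution; both lead to the same conclusion.
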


Applying this proposition to appropriate pairs of unit vectors as previously we get directly:

\begin{corollary}\label{stronger}
Fix $\alpha \in [-1, +1]$ and $\theta_0$ such that $\frac{1}{n}\langle \theta_0, \theta^*\rangle = \alpha$. Let 
$\mathcal{C} = \{z\in \mathbb{C} \mid z = \rho e^{i\theta}, \theta\in [0, 2\pi]\}$ for some $\rho \geq 2 + \delta$.
Recall $\bar Q_0(z) = \alpha G_{\text{sc}}(z)$, $\bar P_0(z) = G_{\text{sc}}(z)$, $\bar R(z) = G_{\text{sc}}(z)$. Then 
$\sup_{z\in \mathcal{C}}\vert Q_0(z) - \bar Q_0(z)\vert$,  $\sup_{z\in \mathcal{C}}\vert P_0(z) - \bar P_0(z)\vert$,
$\sup_{z\in \mathcal{C}}\vert R(z) - \bar R(z)\vert$ all converge in $\mathbb{P}_\delta$-probability to zero.
\end{corollary}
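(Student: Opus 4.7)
The plan is to observe that Corollary \ref{stronger} is essentially a direct application of Proposition \ref{uniform-local-semi-circle-version} to three specific choices of deterministic unit vector pairs, together with a union bound. Since both $\theta^*$ and $\theta_0$ are fixed deterministic vectors on $\mathbb{S}^{n-1}(\sqrt n)$ (with $\langle \theta^*, \theta_0\rangle/n = \alpha$), the rescalings $u_\star^{(n)} = \theta^*/\sqrt n$ and $u_0^{(n)} = \theta_0/\sqrt n$ form deterministic sequences of unit vectors in $\mathbb{S}^{n-1}(1)$, which is exactly what Proposition \ref{uniform-local-semi-circle-version} requires.

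First, I would rewrite each generating function in the normalized form: using $\|\theta^*\|^2 = \|\theta_0\|^2 = n$,
\begin{align*}
Q_0(z) &= \langle u_0^{(n)}, \mathcal{R}(z)\, u_\star^{(n)}\rangle, \qquad
P_0(z) = \langle u_0^{(n)}, \mathcal{R}(z)\, u_0^{(n)}\rangle, \\
R(z) &= \langle u_\star^{(n)}, \mathcal{R}(z)\, u_\star^{(n)}\rangle,
\end{align*}
with associated inner products $\langle u_0^{(n)}, u_\star^{(n)}\rangle = \alpha$, $\langle u_0^{(n)}, u_0^{(n)}\rangle = 1$ and $\langle u_\star^{(n)}, u_\star^{(n)}\rangle = 1$. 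Thus the three predicted limits $\alpha G_{\rm sc}(z)$, $G_{\rm sc}(z)$, $G_{\rm sc}(z)$ are precisely $\langle u, v\rangle G_{\rm sc}(z)$ for the three pairs $(u,v)$.

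Next, I would apply Proposition \ref{uniform-local-semi-circle-version} to each of the three pairs separately. For any fixed $\epsilon>0$ this yields, for each pair, $\mathbb{P}_\delta(\sup_{z\in\mathcal{C}} |\cdots| > \epsilon) \to 0$ as $n\to\infty$. A union bound over the three events then gives simultaneous uniform convergence:
\begin{equation*}
\mathbb{P}_\delta\!\left( \sup_{z\in\mathcal{C}}|Q_0(z) - \bar Q_0(z)| > \epsilon \ \text{or}\ \sup_{z\in\mathcal{C}}|P_0(z) - \bar P_0(z)| > \epsilon \ \text{or}\ \sup_{z\in\mathcal{C}}|R(z) - \bar R(z)| > \epsilon \right) \underset{n\to\infty}{\longrightarrow} 0,
\end{equation*}
which is exactly the content of Corollary \ref{stronger}.

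There is really no substantive obstacle in this corollary itself; all the work sits upstream in Proposition \ref{uniform-local-semi-circle-version}, whose proof (deferred to the appendix) must upgrade the pointwise local semicircle law to a uniform statement on the contour $\mathcal{C}$. That uniformity typically follows from a covering/net argument on $\mathcal{C}$ combined with a Lipschitz bound on $z\mapsto \langle u, \mathcal{R}(z) v\rangle - \langle u, v\rangle G_{\rm sc}(z)$ coming from the resolvent identity $\mathcal{R}(z_1) - \mathcal{R}(z_2) = (z_1 - z_2)\mathcal{R}(z_1)\mathcal{R}(z_2)$ and the spectral bound $\|\mathcal{R}(z)\| \leq \mathrm{dist}(z, I_\delta)^{-1}$ valid on $\mathcal{S}_\delta^n$; but once Proposition \ref{uniform-local-semi-circle-version} is in hand, the corollary is immediate.
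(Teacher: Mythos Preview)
Your proposal is correct and matches the paper's approach exactly: the paper simply states that the corollary follows ``directly'' by applying Proposition~\ref{uniform-local-semi-circle-version} to the three pairs of unit vectors $(\theta_0/\sqrt n,\theta^*/\sqrt n)$, $(\theta_0/\sqrt n,\theta_0/\sqrt n)$, $(\theta^*/\sqrt n,\theta^*/\sqrt n)$, which is precisely what you spell out. Your added union bound is harmless but unnecessary, since the corollary only asserts the three convergences separately.
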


In section \ref{app:proof-gronwall} this corollary is used to prove:

\begin{proposition}\label{concentration-a-tout-temps-fini}
Fix $\alpha\in [-1, +1]$ and $\theta_0$ such that $\frac{1}{n}\langle \theta_0, \theta^*\rangle = \alpha$. Fix any $T >0$.
We have convergences of the following overlaps to the deterministic limits
$q(\tau) \overset{\mathbb{P}}{ \underset{n \to\infty}{\longrightarrow}} \bar q(\tau)$, 
$p_1(\tau) \overset{\mathbb{P}}{ \underset{n \to\infty}{\longrightarrow}} \bar p_1(\tau)$ where here convergence is with respect to the probability law $\mathbb{P}$ of the GOE.
\end{proposition}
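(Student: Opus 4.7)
The plan is a Grönwall stability argument that compares the random integro-differential system \eqref{eq:integrodiff} to its deterministic counterpart \eqref{eq:limitingdiff} along the contour $\mathcal{C}$, and then transfers sup-norm control along $\mathcal{C}$ to control of the contour integrals defining $q(\tau)$ and $p_1(\tau)$. Throughout I would work on the event $H \in \mathcal{S}_\delta^n$ (so that the spectrum lies in $I_\delta$ and the resolvent is well defined on $\mathcal{C}$) and translate the final result from $\mathbb{P}_\delta$ to $\mathbb{P}$ using $\mathbb{P}(A) \leq \mathbb{P}_\delta(A) + \mathbb{P}\bigl((\mathcal{S}_\delta^n)^c\bigr)$ together with $\mathbb{P}(\mathcal{S}_\delta^n) \to 1$.

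First I would record uniform a priori bounds, all holding on $\mathcal{S}_\delta^n$ and for every $\tau \in [0,T]$: Cauchy--Schwarz on the sphere gives $|q(\tau)|, |\bar q(\tau)| \leq 1$; the operator bound $\|H\|_{\mathrm{op}} \leq 2+\delta$ yields $|p_1(\tau)|, |\bar p_1(\tau)| \leq 2+\delta$; and $\|\mathcal{R}(z)\|_{\mathrm{op}} \leq (\rho - 2 - \delta)^{-1}$ on $\mathcal{C}$ yields uniform bounds on $Q_\tau(z), P_\tau(z), \bar Q_\tau(z), \bar P_\tau(z), R(z), \bar R(z)$. Setting $\Delta Q_\tau = Q_\tau - \bar Q_\tau$, $\Delta P_\tau = P_\tau - \bar P_\tau$ and subtracting the two systems, the right-hand side decomposes into three families of terms: driving terms of the form $(q(\tau) - \bar q(\tau))\bar R(z) + \bar q(\tau)(R(z)-\bar R(z))$, linear terms $\frac{z}{\sqrt\lambda}\Delta Q_\tau(z)$ and $\frac{z}{\sqrt\lambda}\Delta P_\tau(z)$, and nonlinear coefficient differences controlled via the Lipschitz bound $|q(\tau)^2 - \bar q(\tau)^2| \leq 2|q(\tau) - \bar q(\tau)|$.

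The crucial observation that closes the estimate is that the contour integrals defining $q(\tau),\bar q(\tau), p_1(\tau), \bar p_1(\tau)$ are dominated by sup-norms, namely
\[
|q(\tau) - \bar q(\tau)| \leq \rho \sup_{z \in \mathcal{C}} |\Delta Q_\tau(z)|,
\qquad
|p_1(\tau) - \bar p_1(\tau)| \leq \rho^2 \sup_{z \in \mathcal{C}} |\Delta P_\tau(z)|,
\]
so every term on the right-hand side can be absorbed into a single functional $F(\tau) := \sup_{z \in \mathcal{C}}|\Delta Q_\tau(z)| + \sup_{z \in \mathcal{C}}|\Delta P_\tau(z)|$. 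Integrating the difference equations in $\tau$ and applying the triangle inequality under the sup over $\mathcal{C}$ yields, for $\tau \in [0,T]$,
\[
F(\tau) \leq F(0) + K_T \sup_{z\in\mathcal{C}} |R(z) - \bar R(z)| + K_T \int_0^\tau F(s)\, ds,
\]
with $K_T$ depending only on $T, \delta, \rho, \lambda$. Grönwall's lemma then gives $F(T) \leq \bigl(F(0) + K_T \sup_{z\in\mathcal{C}}|R(z)-\bar R(z)|\bigr)\, e^{K_T T}$. Corollary \ref{stronger} ensures that both $F(0)$ and $\sup_{z \in \mathcal{C}}|R(z) - \bar R(z)|$ tend to $0$ in $\mathbb{P}_\delta$-probability, so $F(T) \to 0$ in $\mathbb{P}_\delta$-probability, and the contour-integral bounds above finally deliver $|q(\tau) - \bar q(\tau)|, |p_1(\tau) - \bar p_1(\tau)| \to 0$ in probability, uniformly on $[0,T]$.

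The main obstacle I anticipate is precisely the integro-differential coupling: naïvely one would want independent a priori control of $q(\tau) - \bar q(\tau)$ and $p_1(\tau)-\bar p_1(\tau)$ before running Grönwall, and these scalars are defined through $(Q_\tau, P_\tau)$ themselves. The resolution is to bundle everything into the single sup-norm functional $F(\tau)$ and exploit the fact that contour integration is a bounded linear operation on that sup-norm, so a self-contained Grönwall inequality can be written without circularity. A secondary technical point is the justification of differentiation under the sup, which I would avoid altogether by working with the Duhamel (integral) form of \eqref{eq:integrodiff} and \eqref{eq:limitingdiff} and bounding $\sup_z$ of the integrand.
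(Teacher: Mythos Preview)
Your overall strategy---bundling the sup-norms along $\mathcal{C}$ into a single functional $F(\tau)$, using $|q-\bar q|\le\rho\sup_{\mathcal{C}}|\Delta Q|$ and $|p_1-\bar p_1|\le\rho^2\sup_{\mathcal{C}}|\Delta P|$ to close the estimate, and then applying Gr\"onwall---is exactly the route the paper takes, and the argument you sketch is correct in structure.

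There is, however, one genuine gap in your a priori bounds. You claim that Cauchy--Schwarz on the sphere gives $|\bar q(\tau)|\le 1$, that the operator bound $\|H\|_{\mathrm{op}}\le 2+\delta$ gives $|\bar p_1(\tau)|\le 2+\delta$, and that the resolvent bound gives uniform control of $\bar Q_\tau(z),\bar P_\tau(z)$. These justifications are wrong: the barred objects are \emph{not} defined as inner products of sphere-valued vectors with respect to a resolvent; they are defined purely as solutions of the deterministic system \eqref{eq:limitingdiff} with initial data $\alpha G_{\mathrm{sc}}(z)$ and $G_{\mathrm{sc}}(z)$. No matrix $H$, no vector $\theta_\tau$, and no resolvent $\mathcal{R}(z)$ enter their definition, so Cauchy--Schwarz and $\|\mathcal{R}(z)\|_{\mathrm{op}}$ are simply not available. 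Yet your Lipschitz step $|q^2-\bar q^2|\le 2|q-\bar q|$ silently uses $|\bar q|\le 1$, and your Gr\"onwall constant $K_T$ hides bounds on $\sup_{z\in\mathcal{C}}|\bar Q_\tau(z)|$ and $\sup_{z\in\mathcal{C}}|\bar P_\tau(z)|$ (they appear when you split, e.g., $qQ-\bar q\bar Q$).

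The paper closes this gap by first invoking the explicit formula \eqref{eq:hatp0} for $\hat p(\tau)$ from Section~\ref{analysis-gradient-descent} to obtain a deterministic bound $\sup_{[0,T]}|\bar q(\tau)^2+\tfrac{1}{\sqrt\lambda}\bar p_1(\tau)|\le L_*(T)$, and then running a separate preliminary Gr\"onwall iteration on the $\bar Q$ and $\bar P$ equations alone (with this $L_*(T)$ as input) to produce explicit bounds $\sup_{z\in\mathcal{C}}|\bar Q_\tau(z)|$ and $\sup_{z\in\mathcal{C}}|\bar P_\tau(z)|$ on $[0,T]$. Only after these bounds are in hand does the main difference-Gr\"onwall go through. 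You should insert this step; the rest of your argument then matches the paper's proof.
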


\begin{remark}
With a bit more work the proof of this corollary can be strengthened to also show that for any $z \in \mathbb{C} \setminus I_\delta$ and $\tau\in [0, T]$ we have convergence in probability of $Q_\tau(z), P_\tau(z)$ to the deterministic solutions of the integro-differential equations \eqref{eq:limitingdiff}, i.e., 
$Q_\tau(z) \overset{\mathbb{P}_\delta}{ \underset{n \to\infty}{\longrightarrow}} \bar Q_\tau(z)$, 
$P_\tau(z) \overset{\mathbb{P}_\delta}{ \underset{n \to\infty}{\longrightarrow}} \bar P_\tau(z)$,
as well as convergence of all overlaps 
$q_k(\tau), p_k(\tau) \overset{\mathbb{P}}{ \underset{n \to\infty}{\longrightarrow}} \bar q_k(\tau), \bar p_k(\tau)$ ($k\geq 1$).
Since we will not need these results we omit their proof.
\end{remark}

\section{Solution of integro-differential equations and overlap}\label{analysis-gradient-descent}

In this section we analyze \eqref{eq:limitingdiff} for $z\in\mathbb{C}\setminus I_\delta$ with the initial conditions $(\bar Q_0(z), \bar P_0(z), \bar R(z)) = (\alpha G_{\text{sc}}(z), G_{\text{sc}}(z), G_{\text{sc}}(z))$. In the process we obtain
$\bar q(\tau) \equiv - \int_{\mathbb{C}}\frac{dz}{2\pi i} \bar Q_\tau(z)$. 

\begin{proof}[Proof of formulas \eqref{eq:hatq0} and \eqref{eq:hatp0} in theorem \ref{th:risktrack}]
We use a change of variable $\hat Q_\tau(z) = e^{F(\tau)} \bar Q_\tau(z)$ and $\hat P_\tau(z) = e^{2 F(\tau)} \bar P_\tau(z)$ with
$F(\tau) = \int_0^\tau ds \biggl(\bar q^2(s) + \frac{1}{\sqrt \lambda} \bar p_1(s) \biggr)$.
Similarly, we define also $\hat q(\tau) = e^{F(\tau)} \bar q(\tau), \hat p(\tau) = e^{2F(\tau)}$. We have 
$\bar q(\tau) = \hat q(\tau)/\sqrt{\hat p(\tau)}$,
and therefore in order to determine the overlap it suffices to determine $\hat q(\tau)$ and $\hat p(\tau)$.
With the change of variables equations \eqref{eq:integrodiff} become
 \begin{align}\label{eq:chgvar}
        \begin{cases}
            \frac{d}{d\tau}\hat Q_\tau(z) = \hat q(\tau)\left( \bar R(z) + \frac{1}{\sqrt \lambda} \right) + \frac{z}{\sqrt \lambda} \hat Q_\tau(z) 
            \\
            \frac12 \frac{d}{d\tau} \hat P_\tau(z) =
            \hat q(\tau) \hat Q_\tau(z) + \frac{1}{\sqrt \lambda} \hat p(\tau) + \frac{z}{\sqrt \lambda} \hat P_\tau(z)  
         \end{cases}
\end{align}

We analyze these equations in the Laplace domain. Recall the Laplace transformation $\mathcal Lf(p) = \int_0^{+\infty} d\tau e^{-p\tau} f(\tau)$,   $\Re p > a\in \mathbb{R}_+$, which is well defined as long as $\vert f(\tau)\vert \leq e^{a\tau}$. 
All functions involved below in Laplace transforms satisfy this requirement for some $a \mathbb{R}_+$ large enough independent of $n$. It will often be convenient to use the notations $\mathcal L(f(t))(p) = \int_0^{+\infty} d\tau e^{-p\tau} f(\tau)$, 
$\mathcal L Q_p(z) = \int_0^{+\infty} d\tau e^{-p\tau} Q_\tau(z)$, $\mathcal L P_p(z) = \int_0^{+\infty} d\tau e^{-p\tau} P_\tau(z)$.

{\it A) Derivation of \eqref{eq:hatq0} for $\hat q(\tau)$}.
Taking the Laplace transform of the first equation in \eqref{eq:chgvar}
\begin{equation}
p \mathcal L \hat Q_p(z) - \hat Q_0(z) = \mathcal L \hat q(p) \bigl( \bar R(z) + \frac{1}{\sqrt \lambda} \bigr) + \frac{z}{\sqrt \lambda} \mathcal L \hat Q_p(z)
\end{equation}
Notice that $\hat Q_0(z) = e^{F(0)} \bar Q_0(z) = \alpha G_{\text{sc}}(z)$ and $\bar R(z)=G_{\text{sc}}(z)$, and hence we can re-arrange the terms,
\begin{equation}
\mathcal L \hat Q_p(z) = 
\alpha \frac{\sqrt \lambda G_{\text{sc}}(z)}{ p \sqrt \lambda - z}
+ \mathcal L \hat q(p) \frac{ \sqrt \lambda G_{\text{sc}}(z) + 1 }{p\sqrt \lambda - z}.
 \label{eq:qtlaplace}
\end{equation}
Now, assuming $\Re p> \frac{2+\delta}{\sqrt{\lambda}}$ (recall $\Re p > 0$), we can choose a sufficiently small contour $\Gamma$ around the pole $z = p\sqrt\lambda$ which does not traverse the interval $I_\delta$. Then, since $G_{\text{sc}}(z)$ is holomorphic in the interior of $\Gamma$, we get   
\begin{align}
0 & =
\alpha \oint_\Gamma \frac{dz}{2\pi i}\frac{ \sqrt{\lambda}G_{\text{sc}}(z)}{ p \sqrt \lambda - z} 
+ \mathcal L \hat q(p) \oint_\Gamma \frac{dz}{2\pi i} \frac{ \sqrt{\lambda}G_{\text{sc}}(z) +1}{ p \sqrt \lambda - z}
\nonumber \\ &
= \alpha \sqrt{\lambda} G_{\text{sc}}(p\sqrt{\lambda}) + \mathcal L \hat q(p) (\sqrt{\lambda} G_{\text{sc}}(p\sqrt{\lambda}) +1)
\end{align}
Thus we find:
\begin{align}
\mathcal L \hat q(p) = - \frac{\alpha G_{\text{sc}}(p\sqrt{\lambda})}{\frac{1}{\sqrt \lambda} + G_{\text{sc}}(p\sqrt{\lambda})}
= 
\alpha \frac{1 + \frac{1}{\sqrt \lambda} G_{\text{sc}}(p\sqrt{\lambda})}{p - (1 + \frac{1}{\lambda})}
\label{eq:Lq0}
\end{align}
where the last equality can be checked from the explicit expression \eqref{eq:Stieltjes} of $G_{\text{sc}}(z)$.
%
%
%
%
It remains to invert this equation in the time domain. To do so 
we first notice that 
\begin{align}\label{eq:remark}
G_{\text{sc}}(p\sqrt{\lambda})
= 
-  \frac{1}{\sqrt\lambda} \int_{-2}^2 ds \mu_{\rm sc}(s)  \int_0^{+\infty} d\tau \, e^{(\frac{s}{\sqrt{\lambda}} -p) \tau}
= 
-  \frac{1}{\sqrt\lambda}\int_0^{+\infty} d\tau \, e^{-p\tau} M_\lambda(\tau)
\end{align}
where we recall that $M_\lambda(\tau)$ is the scaled moment generating function of the semi-circle law \eqref{eq:MGF}.
The interchange of integrals in the third equality is justified by Fubini. Using 
$\mathcal L (e^{(1+ \frac{1}{\lambda}) \tau})(p) = (p - (1 + \frac{1}{\lambda}))^{-1}$, equation \eqref{eq:Lq0} becomes 
%
$
\mathcal L \hat q(p)  = \alpha \mathcal L (e^{(1 + \frac{1}{\lambda}) t}) (p) 
- \frac{\alpha}{\lambda} \mathcal L (e^{(1 + \frac{1}{\lambda}) t}) \mathcal L M_\lambda(p)
$.
%
This is easily transformed back in the time-domain using standard properties of the Laplace transform to get \eqref{eq:hatq0}.

{\it B) A useful identity.} For the derivation of $\hat p(\tau)$ we will need the following identity derived in Appendix \ref{app:technical}
\begin{align}\label{eq:claimaboutQ}
- \oint_{\mathcal{C}} \frac{dz}{2\pi i} \hat Q_u(z) e^{\frac{z(\tau -u)}{\sqrt\lambda}} = 
\alpha M_\lambda(2\tau - u)
            + \int_0^u ds \hat q(s) M_\lambda(2\tau-u-s)
\end{align}
where we recall $\mathcal{C}$ is the circle with center the origin and radius $\rho > 2+\delta$.

{\it C) Derivation of $\hat p(\tau)$}.
Taking the Laplace transform of the second equation in \eqref{eq:chgvar} we find
 \begin{equation}
 \frac12 (p \mathcal L \hat P_p(z) - \hat P_0(z)) = 
 \mathcal L(\hat q(\tau) \hat Q_\tau(z)) (p)
 + \frac{1}{\sqrt \lambda} \mathcal L \hat p(p) + \frac{z}{\sqrt \lambda} \mathcal L \hat P_p(z) 
 \end{equation}
and using $\hat P_0(z) = e^{F(0)} \bar P_0(z) = G_{\text{sc}}(z)$ we can rearrange the terms to get
\begin{align}\label{eq:lpz}
\mathcal L \hat P_p(z) = \frac{1}{p - \frac{z\sqrt\lambda}{2}}
\biggl(
G_{\text{sc}}(z) 
+ 2\sqrt\lambda \mathcal L(\hat q(\tau) \hat Q_\tau(z)) (p)
+ \frac{2}{\sqrt\lambda} \mathcal L \hat p(p)
\biggr).
\end{align}
Then using $(p-\frac{2z}{\sqrt \lambda})^{-1} = \mathcal L (e^{\frac{2zt}{\sqrt \lambda}})(p)$ and 
\begin{equation}
2 \mathcal L (e^{\frac{2zt}{\sqrt \lambda}}) \mathcal L(\hat q(t) \hat Q_t(z)) 
= \mathcal L \bigl( 2 \int_{0}^t \hat q(u) \hat Q_u(z) e^{\frac{2z(t-u)}{\sqrt \lambda}} du \bigr),
\end{equation}
and replacing in \eqref{eq:lpz} we get
\begin{align}\label{eq:weget}
\mathcal L \hat P_p(z) = \mathcal{L}\bigl(e^{\frac{2z\tau}{\sqrt \lambda}}\bigr)(p) G_{\text{sc}}(z) + 2 \mathcal{L}\biggl(\int_{0}^\tau \hat q(u) \hat Q_u(z) e^{\frac{2z(\tau-u)}{\sqrt \lambda}} du   \biggr)(p) + \frac{2}{\sqrt \lambda}\mathcal L (e^{\frac{2z\tau}{\sqrt \lambda}})(p) \mathcal L \hat p(p).
\end{align}
Now we take $\Re p > 4 / \sqrt{\lambda}$ and choose the contour $\mathcal{C}$, encircling the interval $I_\delta$,  but such that  it does not encircle the point $z=\frac12 p \sqrt{\lambda}$, and integrate each term of \eqref{eq:weget} along this contour. First note that the contribution of the last term vanishes since $\mathcal{L} (e^{\frac{2z\tau}{\sqrt \lambda}})(p) = (p - \frac{2z}{\sqrt\lambda})^{-1}$ and the pole 
$z=\frac12 p \sqrt \lambda$ lies in the exterior of $\mathcal{C}$. Then there remains
\begin{align}
\oint_{\mathcal{C}} \frac{dz}{2\pi i} \mathcal{L}\hat P_p(z) 
= 
\oint_{\mathcal{C}} \frac{dz}{2\pi i} \mathcal{L}(e^{\frac{2z\tau}{\sqrt \lambda}})(p) G_{\text{sc}}(z) 
+ 
2\oint_{\mathcal{C}} \frac{dz}{2\pi i} \mathcal{L}\biggl(
\int_{0}^\tau \hat q(u) \hat Q_u(z) e^{\frac{2z(\tau-u)}{\sqrt \lambda}} du\biggr)(p) .
\label{eq:preparation}
\end{align}
For the left hand side we have
\begin{align}
\oint_{\mathcal{C}} \frac{dz}{2\pi i} \int_0^{+\infty} d\tau e^{-p\tau} \hat P_\tau(z)
= \int_0^{+\infty} d\tau e^{-p\tau} \oint_{\mathcal{C}} \frac{dz}{2\pi i}  \hat P_\tau(z) = - \int_0^{+\infty} d\tau e^{-p\tau}\hat p(\tau)
\label{eq:start}
\end{align}
where the first equality follows from Fubini and the second by functional calculus \cite{schwartz1958linear}.
For the first term on the right hand side of \eqref{eq:preparation} we find (see Appendix \ref{app:technical} for details)
\begin{align}
\oint_{\mathcal{C}} \frac{dz}{2\pi i} G_{\text{sc}}(z) \int_0^{+\infty} d\tau e^{-p\tau} e^{\frac{z\tau}{\sqrt \lambda}}
=
- \int_0^{+\infty} d\tau e^{-p\tau}M_\lambda(\tau) .
\label{eq:almost}
\end{align}
Finally it remains to treat the last contour integral in \eqref{eq:preparation}.
Using again Fubini and \eqref{eq:claimaboutQ} we find
\begin{align}
& \oint_{\mathcal{C}} \frac{dz}{2\pi i} \int_0^{+\infty} d\tau e^{-p\tau}
\int_{0}^\tau \hat q(u) \hat Q_u(z) e^{\frac{2z(\tau-u)}{\sqrt \lambda}} du
 =
\int_0^{+\infty} d\tau e^{-p\tau} \int_{0}^\tau \hat q(u)  \oint_{\mathcal{C}} \frac{dz}{2\pi i} \hat Q_u(z) e^{\frac{2z(\tau-u)}{\sqrt \lambda}} du
\nonumber \\ &
= 
- \int_0^{+\infty} d\tau e^{-p\tau} \int_{0}^\tau du \hat q(u) \biggl[
\alpha M_\lambda(2\tau - u)
            + \int_0^u ds \hat q(s) M_\lambda(2\tau-u-s)\biggr]
\nonumber \\ &
=
- \int_0^{+\infty} d\tau e^{-p\tau}\biggl[ \alpha \int_{0}^\tau du \hat q(u) M_\lambda(2\tau - u) + 
 \frac{1}{2} \int_0^\tau \int_0^\tau du ds \hat q(s) M_\lambda(2\tau-u-s)\biggr]
\label{eq:atlast}
\end{align}
Putting together \eqref{eq:preparation}, \eqref{eq:start}, \eqref{eq:almost}, \eqref{eq:atlast} we obtain \eqref{eq:hatp0} in the Laplace domain. Going back to the time domain we obtain \eqref{eq:hatp0}.
\end{proof}

\section{Conclusion and future work}

Tracking gradient descent dynamics and their variants for different scores and loss functions can be used to provide meaningful insights on a learning algorithm and for example, help monitor its progress and avoid over-fitting. 
We have seen in this work that for 
the rank-one matrix recovery problem in the regime of large dimensions, probabilistic concentrations naturally occur that can be captured by the local semi-circle laws in random matrix theory obtained in the last decade. In particular, suitable generating functions constructed out of the resolvent of the noise matrix concentrate around the solutions of a set of deterministic integro-differential equations. We have been able to completely solve these equations thereby tracking the dynamics for all times. It is also observed that the analytical solution
provides a good approximation for the expected behavior of the learning algorithm, even for dimensions as low as $n<100$.
We expect that the method and integro-differential equations derived here can be generalized to different models. For instance, one may be able to apply it to certain neural-network architectures, and in particular the random feature models. This would allow to better understand the dynamical emergence of interesting behaviors such as the double descent phenomenon.


\bibliographystyle{plain}

\bibliography{refs}

\newpage
\appendix

\section{Analysis of the cost}\label{cost-analysis}

\begin{proof}[Proof of theorem \ref{thmrisktracktrue}]
Expanding the Frobenius norm in the cost and using $\Vert \theta_\tau\Vert^2 = \Vert \theta^*\Vert^2 = n$ we find
\begin{align}
\mathcal{H}(\theta_\tau) & = \frac{1}{2 n^2}\bigl\{-2 {\rm Tr} Y \theta_\tau \theta_{\tau}^T + {\rm Tr} (\theta_\tau \theta_{\tau}^T \theta_{\tau}\theta_{\tau}^T)\bigr\} 
-
\frac{1}{2 n^2}\bigl\{-2 {\rm Tr} Y \theta^* \theta^{*T} + {\rm Tr} (\theta^* \theta^{*T} \theta^{*}\theta^{*T})\bigr\} 
\nonumber \\ &
=
\frac{1}{n^2} \langle \theta^*, Y \theta^* \rangle   - \frac{1}{n^2} \langle \theta_\tau, Y \theta_\tau \rangle .
\end{align}
Using that $Y = \theta^* \theta^{*T} + \frac{n}{\sqrt \lambda} H$ (recall $H = \frac{1}{\sqrt n} \xi$) we get
\begin{align}
\mathcal H(\theta_\tau) & = \bigl(1+ \frac{1}{\sqrt\lambda} \frac{\langle \theta^*, H \theta^*\rangle}{n}\bigr) 
- 
\bigl( \frac{\langle \theta_\tau, \theta^* \rangle^2}{n^2} + 
 \frac{1}{\sqrt \lambda} \frac{\langle \theta_\tau, H \theta_t\rangle}{n}
 \bigr) 
  \nonumber \\ &
  =
  \bigl(1+ \frac{1}{\sqrt\lambda} \frac{\langle \theta^*, H \theta^*\rangle}{n}\bigr)
    - \bigl( q(\tau)^2 + 
  \frac{p_1(\tau)}{\sqrt \lambda}
  \bigr).
 \label{eq:costqp}
\end{align}
By the law of large numbers $\frac{\langle \theta^*, H \theta^*\rangle}{n} \overset{\mathbb{P}}{\underset{n \to\infty}{\longrightarrow}} 0$
and since $q(\tau) \overset{\mathbb{P}}{\underset{n \to\infty}{\longrightarrow}} \bar q(\tau)$ and 
$p_1(\tau) \overset{\mathbb{P}}{\underset{n \to\infty}{\longrightarrow}} \bar p_1(\tau)$ we have 
\begin{align}\label{eq:Rqp}
\mathcal{H}(\theta_\tau) \overset{\mathbb{P}}{\underset{n \to\infty}{\longrightarrow}} 1 - \biggl( \bar q(\tau)^2 + 
  \frac{\bar p_1(\tau)}{\sqrt \lambda}
  \biggr) .
\end{align}
Now it remains to recall the definition of $F(\tau)$ and $\hat p_0(\tau) = e^{2F(\tau)}$, to see that
\begin{align}\label{eq:qpF}
\bar q(\tau)^2 + \frac{\bar p_1(\tau)}{\sqrt \lambda}
=
\frac{dF(\tau)}{d\tau} = \frac{1}{2}\frac{d}{d\tau} \ln \hat p(\tau) .
\end{align}
The result of the theorem follows from \eqref{eq:Rqp} and \eqref{eq:qpF}.
\end{proof}
    
\section{Proof of propositions \ref{local-semi-circle-version} and \ref{uniform-local-semi-circle-version}}\label{app:initial-conditions}

The proof is based the following {\it local} semi-circle law (theorem 2.12 in \cite{alex2014}):

\begin{theorem}[isotropic local semi-circle law \cite{alex2014}]\label{th:ilsl}
For any $\omega \in (0,1)$ consider the following domain in the upper half-plane
$$
S(\omega, n) = \left\{ z \in \mathbb{C} \mid |\Re(z)| \leq \frac{1}{\omega} , \frac{1}{n^{1-\omega}} \leq \Im(z) \leq \frac{1}{\omega} \right\}.
$$
 Then for all $\delta, D>0$, there exists $n_0 \in \mathbb N$ such that for all $n > n_0$, and any unit vectors $u, v \in \mathbb S_n(1)$:
    \begin{equation}
        \sup_{z \in S(\omega, n)} \mathbb{P}\biggl( 
            \vert \langle u, \mathcal R(z) v\rangle - \langle u, v\rangle G_{\text{sc}}(z) \vert
            > n^\delta \biggl[ \sqrt \frac{\Im G_{\text{sc}}(z)}{n \Im z} + \frac{1}{n \Im z}  \biggr]
        \biggr) < \frac{1}{n^D}
    \end{equation}
where $\mathbb{P}$ is the probability law of the GOE.
\end{theorem}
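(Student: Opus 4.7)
The plan is to follow the now-standard self-consistent equation plus moment-method approach behind local semi-circle laws. First one establishes an entrywise estimate for the resolvent entries $G_{ij}(z) := [\mathcal{R}(z)]_{ij}$, then lifts it to arbitrary unit vectors $u,v$ via a high-moment polynomial bound, all while maintaining uniformity over the domain $S(\omega,n)$.

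\textbf{Entrywise local law.} Using the Schur complement one writes
\begin{equation*}
G_{ii}(z)^{-1} = h_{ii} - z - \sum_{k,l \neq i} h_{ik}\, G^{(i)}_{kl}(z)\, h_{li},
\end{equation*}
where $G^{(i)}$ is the resolvent of the $(n-1)\times(n-1)$ minor obtained by deleting row/column $i$. A Hanson--Wright-type large deviation bound shows that this quadratic form concentrates around $n^{-1}\operatorname{Tr} G^{(i)}(z)$ with fluctuation of order $\sqrt{\Im G_{\rm sc}(z)/(n\eta)} + 1/(n\eta)$, where $\eta := \Im z$. Substituting into the Schur identity produces a perturbed version of the scalar equation $G_{\rm sc}(z)^{-1} = -z - G_{\rm sc}(z)$, and a stability analysis of this equation pins each $G_{ii}(z)$ to $G_{\rm sc}(z)$ with the same error size. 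Off-diagonal entries $G_{ij}$ obey analogous resolvent identities and inherit the same bound.

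\textbf{Isotropic extension.} Writing
\begin{equation*}
\langle u, \mathcal{R}(z) v\rangle - \langle u, v\rangle G_{\rm sc}(z) = \sum_{i,j} \bar u_i v_j \bigl(G_{ij}(z) - \delta_{ij} G_{\rm sc}(z)\bigr),
\end{equation*}
I would bound its $2k$-th moment for arbitrarily large integer $k$. Recursive expansion of each $G_{ij}$ via the resolvent identities generates monomials in the independent entries $h_{ij}$; most index configurations contribute zero in expectation because some $h_{ij}$ appears an odd number of times, and the surviving diagrams are controlled via Ward's identity $\sum_j |G_{ij}|^2 = \Im G_{ii}/\eta$ together with Cauchy--Schwarz against the weights $\bar u_i v_j$. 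Markov's inequality with $k$ large then converts the moment bound into the super-polynomial probability decay $n^{-D}$, and the small factor $n^\delta$ in the statement absorbs the $k$-dependent combinatorial prefactors.

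\textbf{Uniformity and main obstacle.} Uniformity over $z \in S(\omega,n)$ is obtained by a bootstrap/continuity argument: the estimate is essentially trivial at $\eta \sim 1$ and is propagated down to $\eta \geq n^{-1+\omega}$ in small multiplicative steps, using Lipschitz regularity of the resolvent in $z$ to re-establish stability at each step. The main technical obstacle is this stability analysis: the self-consistent equation becomes delicate near the spectral edges $z = \pm 2$, where $G_{\rm sc}$ has a square-root singularity and the stability constant degrades, so one must balance the worsening stability against the size of the quadratic-form fluctuations in order to extract precisely the error $n^\delta\bigl[\sqrt{\Im G_{\rm sc}(z)/(n\eta)} + 1/(n\eta)\bigr]$. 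Carrying this balance out uniformly over the whole of $S(\omega,n)$, and not only in the bulk, is the heart of the argument.
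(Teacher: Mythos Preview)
This theorem is not proved in the paper: it is quoted verbatim as theorem~2.12 of \cite{alex2014} and used as a black box to establish Propositions~\ref{local-semi-circle-version} and~\ref{uniform-local-semi-circle-version}. There is therefore no ``paper's own proof'' to compare your attempt against.

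That said, your sketch is a fair high-level summary of the strategy in \cite{alex2014} and the surrounding literature (Schur complement / self-consistent equation for the entrywise law, high-moment expansion with Ward's identity for the isotropic extension, and a bootstrap in $\eta$ for uniformity over $S(\omega,n)$). If anything, you understate the work needed in the isotropic step: the combinatorics of the moment expansion for $\sum_{i,j}\bar u_i v_j(G_{ij}-\delta_{ij}G_{\rm sc})$ with general deterministic weights is substantially more delicate than the entrywise case, and in \cite{alex2014} it is handled through a careful polynomialization scheme rather than a direct diagrammatic count. But for the purposes of this paper none of this is required --- the theorem is an imported tool, not a result to be reproved.
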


\begin{proof}[\it Proof of proposition \ref{local-semi-circle-version}]
First we note that for $\Im z \neq 0$ since $\lim_{n\to +\infty}\mathbb{P}(\mathcal{S}_\delta^n) =1$ we have 
\begin{align}
\lim_{n\to +\infty}\mathbb{P} \bigl(
            \vert \langle u, \mathcal R(z) v\rangle - \langle u, v\rangle G_{\text{sc}}(z)\vert > \epsilon \bigr)
            =
            \lim_{n\to +\infty}\mathbb{P}_\delta \bigl(
            \vert \langle u, \mathcal R(z) v\rangle - \langle u, v\rangle G_{\text{sc}}(z)\vert > \epsilon \bigr).
\end{align}
We consider fist the cases $\Im z$ strictly positive, negative, and then give the extra argument needed for 
$\Im z =0$.  

First we take $\Im z > 0$. We can find $n_1 \in \mathbb N, \omega \in (0,1)$ such that $z\in S(\omega, n_1)$ and henceforth, for all $n \geq n_1$, $z \in S(\omega, n)$.
Taking $\delta = \frac14, D=1$ and applying theorem \ref{th:ilsl} yields the existence of $n_0$ such that for all $n \geq \max(n_0, n_1)$:
    \begin{equation}
        \mathbb{P}\biggl( 
                \vert \langle u, \mathcal R(z) v\rangle - \langle u, v\rangle G_{\text{sc}}(z)\vert
                > n^\frac14 \biggl[ \sqrt \frac{\Im G_{\text{sc}}(z)}{n \Im z} + \frac{1}{n \Im z}  \biggr]
            \biggr) < \frac{1}{n} .
    \end{equation}
Set $l(n,z) = n^\frac14 \biggl[ \sqrt \frac{\Im G_{\text{sc}}(z)}{n \Im z} + \frac{1}{n \Im z} \biggr]$. Since $\lim_{n\to \infty} l(n,z) = 0$, we can find $n_2$ such that for all $n \geq n_2$ we have $l(n,z) < \epsilon$.
Thus for all $n \geq \max(n_0,n_1,n_2)$ we have the set inclusion in the GOE
    \begin{equation}
        \left\{H :
            \vert \langle u, \mathcal R(z) v\rangle - \langle u, v\rangle G_{\text{sc}}(z) \vert > \epsilon
        \right\} \subset \left\{H :
            \vert \langle u, \mathcal R(z) v\rangle - \langle u, v\rangle G_{\text{sc}}(z)\vert | > l(n, z)
        \right\}
    \end{equation}
    and therefore
    \begin{equation}
        \mathbb{P} \bigl(
            \vert \langle u, \mathcal R(z) v\rangle - \langle u, v\rangle G_{\text{sc}}(z)\vert > \epsilon \bigr) < \frac{1}{n} .
    \end{equation}
 Applying this inequality to a deterministic sequence $(u^{(n)}, v^{(n)})$ on the unit sphere and taking the limit $n \to \infty$ concludes the proof for 
  $\Im z >0$.
  
To deal with $\Im z < 0$ it suffices to remark that $\vert \langle u, \mathcal R(z) v\rangle - \langle u, v\rangle G_{\text{sc}}(z)\vert
= \vert \langle u, \mathcal R(\bar z) v\rangle - \langle u, v\rangle G_{\text{sc}}(\bar z)\vert$. Alternatively one could use a version of theorem \ref{th:ilsl} for the lower half-plane.  

Consider now $z = x$ with $x\in\mathbb{R} \setminus I_\delta$ and $H\in \mathcal{S}_\delta^n$. Take a complex number 
$x + i y$, $0 < y \leq \frac{\epsilon}{2}\vert x - (2 + \delta)\vert^2$. From the mean value theorem we have 
\begin{align}
& \vert (\langle u, \mathcal R(x) v\rangle - \langle u, v\rangle G_{\text{sc}}(x))
-
(\langle u, \mathcal R(x+iy) v\rangle - \langle u, v\rangle G_{\text{sc}}(x+iy_k))\vert 
\nonumber \\ &
\leq 
\vert  y\vert  \sup_{y > 0} \vert \frac{d}{dy} \langle u, \mathcal{R}(x+iy) v\rangle\vert .
\label{eq:annex}
\end{align}
Since for $H\in \mathcal{S}_\delta^n$
\begin{align}
\vert \frac{d}{dy} \langle u, \mathcal{R}(x+iy) v\rangle\vert = \vert\langle u, (x+ iy - H)^{-2} v\rangle\vert \leq \frac{1}{(x - (2+\delta))^2 + y^2}
\end{align}
we deduce from \eqref{eq:annex} and the triangle inequality
\begin{align}
\vert \langle u, \mathcal R(x) v\rangle - \langle u, v\rangle G_{\text{sc}}(x)\vert 
& \leq 
\vert \langle u, \mathcal R(x+iy) v\rangle - \langle u, v\rangle G_{\text{sc}}(x+iy_k)\vert 
+
\frac{y}{\vert x - (2 + \delta)\vert^2}
\nonumber \\ &
\leq 
\vert \langle u, \mathcal R(x+iy) v\rangle - \langle u, v\rangle G_{\text{sc}}(x+iy_k)\vert 
+ \frac{\epsilon}{2} .
\end{align}
Thus for realizations $H\in \mathcal{S}_\delta^n$, the event $\vert \langle u, \mathcal R(x) v\rangle - \langle u, v\rangle G_{\text{sc}}(x)\vert > \epsilon$ implies the event $\vert \langle u, \mathcal R(x+iy) v\rangle - \langle u, v\rangle G_{\text{sc}}(x+iy)\vert \geq \frac{\epsilon}{2}$ for any $0< y \leq \frac{\epsilon}{2}\vert x - (2+\delta)\vert^2$. In other words
\begin{align}
\mathbb{P}_\delta\bigl(\vert \langle u, \mathcal R(x) v\rangle - \langle u, v\rangle G_{\text{sc}}(x)\vert > \epsilon \bigr)
\leq 
\mathbb{P}_\delta\bigl(\vert \langle u, \mathcal R(x+iy) v\rangle - \langle u, v\rangle G_{\text{sc}}(x+iy)\vert \geq \frac{\epsilon}{2} \bigr).
\end{align}
By the previous results for $\Im z > 0$ we conclude that these probabilities tend to zero as $n\to +\infty$. 

\end{proof}

\begin{proof}[\it Proof of proposition \ref{uniform-local-semi-circle-version}]
The proof uses a discretization argument together with the union bound. Consider the discrete set of $N$ points on the contour $\mathcal{C}$, $z_k = \rho e^{i \theta_k}$, $\theta_k= \frac{2\pi k}{N}$, $k=0, \dots, N-1$. First, Observe that from the union bound
\begin{align}
\mathbb{P}\bigl(\max_{k=0, \cdots, N} \vert \langle u^{(n)}, \mathcal R(z_k) v^{(n)} \rangle & - \langle u^{(n)}, v^{(n)}\rangle G_{\text{sc}}(z_k) \vert > \epsilon \bigr) 
\nonumber \\ &
\leq 
\sum_{k=0}^N \mathbb{P}\bigl(\vert \langle u^{(n)}, \mathcal R(z_k) v^{(n)} \rangle - \langle u^{(n)}, v^{(n)}\rangle G_{\text{sc}}(z_k) \vert > \epsilon \bigr)
\end{align}
thus from proposition \eqref{local-semi-circle-version} 
\begin{align}\label{eq:prob1}
\lim_{n\to +\infty} \mathbb{P}\bigl(\max_{k=0, \cdots, N} \vert \langle u^{(n)}, \mathcal R(z_k) v^{(n)} \rangle & - \langle u^{(n)}, v^{(n)}\rangle G_{\text{sc}}(z_k) \vert > \epsilon \bigr) = 0
\end{align}
Second, for any $z = \rho e^{i\theta}\in \mathcal{C}$ there exist a $\theta_k$ such that $\vert \theta - \theta_k\vert \leq \frac{1}{N}$. 
Applying the triangle inequality $\vert b\vert \leq \vert a\vert + \vert b -a\vert$ for 
$a = \langle u^{(n)}, \mathcal R(z) v^{(n)} \rangle - \langle u^{(n)}, G_{\rm sc}(z) v^{(n)} \rangle$ and 
$b = \langle u^{(n)}, \mathcal R(z_k) v^{(n)} \rangle - \langle u^{(n)}, G_{\rm sc}(z_k) v^{(n)} \rangle$, and the mean value theorem, 
we get
\begin{align}
\vert \langle u^{(n)}, \mathcal R(z_k) v^{(n)} \rangle - \langle u^{(n)}, G_{\rm sc}(z_k) v^{(n)} \rangle\vert
& \leq 
\vert \langle u^{(n)}, \mathcal R(z) v^{(n)} \rangle - \langle u^{(n)}, \mathcal R_H(z) v^{(n)} \rangle\vert
\nonumber \\ &
+
\frac{1}{N} \sup_{\theta\in [0, 2\pi]} \vert \frac{d}{d\theta} \langle u^{(n)}, \mathcal R(\rho e^{i\theta}) v^{(n)} \rangle\vert
\end{align}
We can take the supremum of the right hand side over 
$z\in\mathcal{C}$ and then the maximum of the right hand side over $k=0, \dots, N-1$ to deduce
\begin{align}
\max_{k=0, \dots, N}\vert \langle u^{(n)}, \mathcal R(z) v^{(n)} \rangle - \langle u^{(n)}, G_{\rm sc}(z) v^{(n)} \rangle\vert
& \leq 
\sup_{z\in\mathcal{C}} \vert \langle u^{(n)}, \mathcal R(z_k) v^{(n)} \rangle - \langle u^{(n)}, \mathcal R(z_k) v^{(n)} \rangle\vert
\nonumber \\ &
+
\frac{1}{N} \sup_{\theta\in [0, 2\pi]} \vert \frac{d}{d\theta} \langle u^{(n)}, \mathcal R(\rho e^{i\theta}) v^{(n)} \rangle\vert
\label{eq:triangle}
\end{align}
Since 
\begin{align}
\frac{d}{d\theta} \langle u^{(n)}, \mathcal R(\rho e^{i\theta}) v^{(n)} \rangle = i\rho e^{i\theta} \langle u^{(n)}, (\rho e^{i\theta} - H)^{-2} v^{(n)} \rangle
\end{align}
we deduce from Cauchy-Schwarz, that with probability tending to one as $n\to + \infty$
\begin{align}\label{eq:prob2}
\frac{1}{N} \sup_{\theta\in [0, 2\pi]} \vert \frac{d}{d\theta} \langle u^{(n)}, \mathcal R(\rho e^{i\theta}) v^{(n)} \rangle\vert
\leq 
\frac{\rho}{N (\rho - 2)^2}
\end{align}
Therefore taking $N > \frac{2\rho}{\epsilon (\rho -2)^2}$ we find from \eqref{eq:prob1}, \eqref{eq:prob2} and \eqref{eq:triangle}
\begin{align}
\lim_{n\to + \infty}
\mathbb{P}\bigl(\sup_{z\in\mathcal{C}} \vert \langle u^{(n)}, \mathcal R(z) v^{(n)} \rangle & - \langle u^{(n)}, v^{(n)}\rangle G_{\text{sc}}(z) \vert \geq  \frac{\epsilon}{2} \bigr)
= 0
\end{align}
for any $\epsilon > 0$. This concludes the proof.
\end{proof}

\section{Proof of proposition \ref{concentration-a-tout-temps-fini}}\label{app:proof-gronwall}

We assume the condition 
$H\in \mathcal{S}_\delta^n$ so that 
$\Vert \mathcal{R}(z)\Vert_{\rm op} \leq (\rho -2)^{-1}$ for
all $z\in \mathcal{C} = \{z\in \mathbb{C} \mid z = \rho e^{i\theta}, \theta\in [0, 2\pi]\}$ and $\rho > 2+\delta$. The condition is relaxed at the very end.

The proof of proposition \ref{concentration-a-tout-temps-fini} is based on a Gronwall type argument. 
As explained in section \ref{concentration} the difficulty here is that we have an integro-differential equation instead of a plain ordinary differential equation and the usual Lipshitz condition is not a priori satisfied. For this reason, given that $H\in \mathcal{S}_\delta^n$, we need preliminary bounds 
on $\sup_{z\in\mathcal{C}} \vert Q_\tau(z)\vert $, $\sup_{z\in\mathcal{C}} \vert P_\tau(z)\vert $,
$\sup_{z\in\mathcal{C}} \vert R(z)\vert$,  $\sup_{z\in\mathcal{C}} \vert \bar R(z)\vert$ and on 
$\sup_{z\in\mathcal{C}} \vert \bar Q_\tau(z)\vert$, $\sup_{z\in\mathcal{C}} \vert\bar P_\tau(z)\vert$, for $\tau\in [0, T]$. 
Here we do not seek the best possible bounds but rather we just need that all quantities are bounded (with high probability for the first three). 

For the first {\it four} quantities the bound easily follows from their definition \eqref{eq:genfunctions}.  By Cauchy-Schwartz we obtain
that $\sup_{z\in\mathcal{C}} \vert Q_\tau(z)\vert $, $\sup_{z\in\mathcal{C}} \vert P_\tau(z)\vert$ and $\sup_{z\in\mathcal{C}} \vert R(z)\vert$ are upper bounded by $(\rho -2)^{-1}$. For $\sup_{z\in\mathcal{C}} \vert\bar R(z)\vert$ we can use the integral representation to get the same (loose) bound.

The remaining {\it two} quantities are here defined through the solution of the integro-differential equation \eqref{eq:limitingdiff} which we take as a starting point to prove a bound. In section \ref{analysis-gradient-descent} we compute exactly the combination $\bar q(\tau)^2 + \frac{1}{\lambda}\bar p_1(\tau) \equiv \frac{1}{2}\ln \hat p(\tau)$ and find $\hat p(\tau)$ given by formula \eqref{eq:hatp0}. It can be checked that this is a continuous function for any compact time interval, so
$\sup_{\tau\in [0, T]} \vert \bar q(\tau)^2 + \frac{1}{\sqrt\lambda} \bar p_1(\tau)\vert \leq L_*(T) < +\infty$ for any $T>0$ (in fact one can even take $L_*$ independent of $T$ but we will not need this information). 
Then, integrating the first equation in \eqref{eq:limitingdiff} over $[0, \tau]$, using the triangle inequality, and then taking suprema, we deduce 
\begin{align}
\sup_{z\in \mathcal{C}}\vert \bar Q_\tau(z)\vert 
\leq &
\sup_{z\in \mathcal{C}}\vert \bar Q_0(z)\vert 
+
\bigl(\frac{\rho}{\rho -2} +  \frac{2\rho}{\sqrt \lambda}  + \rho^2 L_*(T) \bigr)\int_0^\tau ds\, \sup_{z\in \mathcal{C}}\vert \bar Q_s(z)\vert 
\end{align}
Iterating this inequality a standard calculation yields any $\tau\in [0, T]$
\begin{align}\label{eq:bound1}
\sup_{z\in \mathcal{C}}\vert \bar Q_\tau(z)\vert \leq \sup_{z\in \mathcal{C}}\vert \bar Q_0(z) \vert \, e^{T\bigl(\frac{\rho}{\rho -2} +  \frac{2\rho}{\sqrt \lambda}  + \rho^2 L_*(T) \bigr)} \leq  \frac{\alpha}{\rho -2} e^{T\bigl(\frac{\rho}{\rho -2} +  \frac{2\rho}{\sqrt \lambda}  + \rho^2 L_*(T) \bigr)}
\end{align}
where we used $\bar Q_0(z) = \alpha G_{\rm sc}(z)$, and for $\vert G_{\rm sc}(z)\vert \leq \frac{1}{\rho -2}$ for $z\in \mathcal{C}$. The definition of $\bar q(\tau)$ in terms of a contour integral implies immediately 
$\sup_{\tau\in [0, T]} \vert\bar q(\tau)\vert \leq L(T)$ where $L(T)$ is the right hand side of \eqref{eq:bound1} multiplied by $\rho$. 
Now, integrating the second equation in \eqref{eq:limitingdiff} over $[0, \tau]$, using the triangle inequality, and then taking suprema again, we deduce 
\begin{align}
\frac{1}{2}\sup_{z\in \mathcal{C}}\vert \bar P_\tau(z)\vert 
\leq &
\frac{1}{2}\sup_{z\in \mathcal{C}}\vert \bar P_0(z)\vert 
+
\frac{\alpha^2 \rho \tau}{(\rho -2)^2} e^{2T\bigl(\frac{\rho}{\rho -2} +  \frac{2\rho}{\sqrt \lambda}  + \rho^2 L_*(T) \bigr)}
+
\frac{\tau}{\sqrt\lambda}
\nonumber \\ &
+ 
\bigl(\frac{\rho}{\sqrt\lambda} + L_*(T)\bigr) \int_0^\tau \sup_{z\in\mathcal{C}} \vert \bar P_s(z)\vert
\end{align}
Again a standard calculation yields (using the initial condition $\bar P_0(z) = G_{\rm sc}(z)$)
\begin{align}\label{eq:bound2}
\sup_{z\in \mathcal{C}}\vert \bar P_\tau(z)\vert \leq 
\biggl(\frac{1}{\rho -2} + \frac{2\alpha^2 \rho T}{(\rho -2)^2} e^{2T\bigl(\frac{\rho}{\rho -2} +  \frac{2\rho}{\sqrt \lambda}  + \rho^2 L_*(T) \bigr)} + \frac{2\tau}{\sqrt\lambda}\biggr)
e^{T(\frac{\rho}{\sqrt\lambda} + L_*(T))}
\end{align}
Note that this implies the bound $\sup_{\tau\in [0, T]}\vert \bar p_1(\tau)\vert \leq L_1(T)$ where $L_1(T)$ is the right hand side of \eqref{eq:bound2} multiplied by $\rho^2$.

We now have all the elements to adapt a Gronwall type argument.

\begin{proof}[{\it Proof of proposition \ref{concentration-a-tout-temps-fini}}]
We start by deriving preliminary bounds 
We set $Q_\tau(z) - \bar Q_\tau(z) = \Delta^{Q}_\tau(z)$, $P_\tau(z) - \bar P_\tau(z) = \Delta^{P}_\tau(z)$, 
$R(z) - \bar R(z) = \Delta^{R}(z)$, $q(\tau) - \bar q(\tau) = \delta^{q}(\tau)$, 
$p_1(\tau) - \bar p_1(\tau) = \delta^{p_1}(\tau)$. Note for later use that all the $\sup_{z\in\mathcal{C}}\vert \cdot \vert$ of these differences are bounded by some finite positive constant depending only on $\rho, \alpha, \lambda, T$. Taking the difference of \eqref{eq:limitingdiff} and \eqref{eq:integrodiff} we find after a bit of algebra
\begin{align}
\frac{d}{d\tau} \Delta^{Q}_\tau(z) 
 = & 
 \, \delta^{q}(\tau) \Delta^{R}(z) + \delta^{q}(\tau) \bar R(z) + \bar q(\tau)
\Delta^R(z) 
+ \frac{1}{\sqrt{\lambda}}\bigl(z \Delta^{Q}_\tau(z) + \delta^{q}(\tau)\bigr)
\nonumber \\ &
-
(q(\tau) + \bar q(\tau))\delta^{q}(\tau) \Delta^{Q}_\tau(z) - (q(\tau) + \bar q(\tau))\delta^{q}(\tau) \bar Q_\tau(z) 
- 
\bar q(\tau)^2\Delta^Q_\tau(z) 
\nonumber \\ &
- \frac{1}{\sqrt\lambda}\bigl(\delta^{p_1}(\tau) \Delta^{Q}_\tau(z) - \delta^{p_1}(\tau) \bar Q_\tau(z) - 
\bar p_1(\tau)\Delta^Q_\tau(z) \bigr)
\end{align}
and 
\begin{align}
\frac{d}{d\tau} \Delta^{P}_\tau(z) 
 = & 
 \, \delta^{q}(\tau) \Delta^{Q}_\tau(z) + \delta^{q}(\tau) \bar Q_\tau(z) + \bar q(\tau)
\Delta^Q_\tau(z) 
+ \frac{1}{\sqrt{\lambda}}z \Delta^{P}_\tau(z)
\nonumber \\ &
-
(q(\tau) + \bar q(\tau))\delta^{q}(\tau) \Delta^{P}_\tau(z) - (q(\tau) + \bar q(\tau))\delta^{q}(\tau) \bar P_\tau(z) 
- 
\bar q(\tau)^2\Delta^P_\tau(z) 
\nonumber \\ &
- \frac{1}{\sqrt\lambda}\bigl(\delta^{p_1}(\tau) \Delta^{P}_\tau(z) - \delta^{p_1}(\tau) \bar P_\tau(z) - 
\bar p_1(\tau)\Delta^P_\tau(z) \bigr)
\end{align}
After
integrating the above equations over the interval $[0, \tau]$, using the triangle inequality, and the inequalities
$\vert\delta^{q}(\tau)\vert \leq \rho \sup_{z\in \mathcal{C}} \vert\Delta^{Q}_\tau(z)\vert$, 
$\vert\delta^{p_1}(\tau)\vert \leq \rho^2 \sup_{z\in \mathcal{C}} \vert\Delta^{P}_\tau(z)\vert$, 
$\vert q(\tau)\vert \leq 1$, 
$\sup_{\tau\in [0, T]} \vert \bar q(\tau)\vert < L(T)$, 
$\sup_{\tau\in [0, T]} \vert \bar p_1(\tau)\vert < L_1(T)$, we deduce (with $L = max(L(T), L_1(T)$)
\begin{align}
 \sup_{z\in \mathcal{C}} & \vert \Delta^{Q}_\tau(z) \vert
 \leq  
 \sup_{z\in \mathcal{C}}\vert \Delta^{Q}_0(z) \vert
 +
 \rho \sup_{z\in\mathcal{C}}\vert \Delta^{R}(z)\vert \int_0^\tau ds\, \sup_{z\in \mathcal{C}} \vert\Delta^{Q}_s(z)\vert
 + 
 \rho \sup_{z\in \mathcal{C}}\vert \bar R(z)\vert \int_0^\tau ds\,\sup_{z\in \mathcal{C}} \vert\Delta^{Q}_s(z)\vert 
 \nonumber \\ &
 + 
 L\tau \sup_{z\in\mathcal{C}}\vert \Delta^R(z) \vert
+ 
\frac{2\rho}{\sqrt{\lambda}} \int_0^\tau ds\, \sup_{z\in\mathcal{C}}\vert \Delta^{Q}_s(z)\vert
+
(1 + L)\rho \int_0^\tau ds\, (\sup_{z\in \mathcal{C}}\vert \Delta^{Q}_s(z)\vert)^2 
\nonumber \\ &
+ 
(1 + L)\rho \int_0^\tau ds\, (\sup_{z\in \mathcal{C}}\vert \Delta^{Q}_s(z)\vert)^2 \sup_{z\in\mathcal{C}}\vert \bar Q_s(z)\vert 
+
L^2 \int_0^\tau ds\, \sup_{z\in\mathcal{C}}\vert \Delta^Q_s(z) \vert
\nonumber \\ &
+
\frac{\rho^2}{\sqrt\lambda}\int_0^\tau ds\, \sup_{z\in\mathcal{C}} \vert \Delta^P_s(z)\vert \sup_{z\in\mathcal{C}}\vert \Delta^{Q}_s(z)\vert
+
\frac{\rho^2}{\sqrt\lambda}\int_0^\tau ds\, \sup_{z\in\mathcal{C}} \vert \Delta^P_s(z)\vert \sup_{z\in\mathcal{C}}\vert Q_s(z)\vert
\nonumber \\ &
+
\frac{L}{\sqrt\lambda}\int_0^\tau ds\, \sup_{z\in\mathcal{C}} \vert \Delta^Q_s(z)\vert
\end{align}
and 
\begin{align}
\sup_{z\in\mathcal{C}}  &  \vert \Delta^{P}_\tau(z) \vert
\leq  
\sup_{z\in\mathcal{C}}\vert \Delta^{P}_0(z) \vert
+
\rho \int_0^\tau ds\, \sup_{z\in\mathcal{C}}\vert \Delta^{Q}_s(z)\vert^2
+
\rho\int_0^\tau ds\, \sup_{z\in\mathcal{C}}\vert \Delta^{Q}_s(z)\vert 
\sup_{z\in\mathcal{C}}\vert \bar Q_s(z) \vert
\nonumber \\ &
+
L \int_0^\tau ds\, \sup_{z\in\mathcal{C}}\vert \Delta^Q_s(z)\vert 
+ 
\frac{\rho}{\sqrt{\lambda}} \int_0^\tau ds\, \sup_{z\in\mathcal{C}}\vert \Delta^{P}_s(z)\vert
+
(1 + L) \rho \int_0^\tau ds\, \sup_{z\in\mathcal{C}}\vert\Delta^{Q}_s(z)\vert 
\sup_{z\in\mathcal{C}}\vert\Delta^{P}_s(z)\vert 
\nonumber \\ &
+
(1+L)\rho \int_0^\tau ds\, \sup_{z\in\mathcal{C}}\vert\Delta^{Q}_s(z)\vert \sup_{z\in\mathcal{C}}\vert \bar P_s(z) \vert
+
L^2 \int_0^\tau ds\, \sup_{z\in\mathcal{C}}\vert \Delta^P_s(z) \vert
\nonumber \\ &
+
\frac{\rho^2}{\sqrt\lambda} \int_0^\tau ds\, \sup_{z\in\mathcal{C}}\vert \Delta^{P}_s(z) \vert^2
+
\frac{\rho^2}{\sqrt\lambda} \int_0^\tau ds\, \sup_{z\in\mathcal{C}}\vert \Delta^{P}_s(z) \vert \sup_{z\in\mathcal{C}}\vert \bar P_s(z)\vert 
+
\frac{L}{\sqrt\lambda} \int_0^\tau ds\,\sup_{z\in\mathcal{C}}\vert \Delta^P_s(z) \vert
\end{align}
Now, using \eqref{eq:bound1} and \eqref{eq:bound2} we can "linearize" the right hand side to obtain
two inequalities of the form (where $C(\rho, \alpha, \lambda, T)$ is a suitable constant)
\begin{align}\label{eq:ineq1}
\sup_{z\in\mathcal{C}} \vert \Delta^Q_\tau(z)\vert  \leq \sup_{z\in\mathcal{C}} \vert \Delta^Q_0(z)\vert 
+ L\tau \sup_{z\in\mathcal{C}} \Delta^R(z)
+ C(\rho, \alpha, \lambda, T) \int_0^\tau ds\, \bigl\{ \sup_{z\in\mathcal{C}} \vert \Delta^Q_s(z)\vert + \sup_{z\in\mathcal{C}} \vert \Delta^P_s(z)\vert \bigr\}
\end{align}
and 
\begin{align}\label{eq:ineq2}
\sup_{z\in\mathcal{C}} \vert \Delta^P_\tau(z)\vert  \leq \sup_{z\in\mathcal{C}} \vert \Delta^P_0(z)\vert 
+ C(\rho, \alpha, \lambda, T) \int_0^\tau ds\, \bigl\{ \sup_{z\in\mathcal{C}} \vert \Delta^Q_s(z)\vert + \sup_{z\in\mathcal{C}} \vert \Delta^P_s(z)\vert \bigr\}
\end{align}
Summing \eqref{eq:ineq1} and \eqref{eq:ineq2} and iterating the resulting integral inequality we deduce
\begin{align}
\sup_{z\in\mathcal{C}} \vert \Delta^Q_\tau(z)\vert + \sup_{z\in\mathcal{C}} \vert \Delta^P_\tau(z)\vert 
\leq 
\bigl\{ \sup_{z\in\mathcal{C}} \vert \Delta^Q_0(z)\vert  + \sup_{z\in\mathcal{C}} \vert \Delta^P_0(z)\vert + LT\sup_{z\in\mathcal{C}} \Delta^R(z)\bigr\} e^{2 T C(\rho, \alpha, \lambda, T)}
\end{align}
By corollary \ref{stronger} we conclude that for $\tau\in [0, T]$
$\sup_{z\in\mathcal{C}} \vert \Delta^Q_\tau(z)\vert$ and $\sup_{z\in\mathcal{C}} \vert \Delta^P_\tau(z)\vert$ converge in $\mathbb{P}_\delta$-probability to zero. 

Finally, we can look at the overlaps. Observe that $\vert q(\tau) - \bar q(\tau)\vert  = \vert\int_{\mathcal{C}} \frac{dz}{2\pi i} \Delta^Q_\tau(z)\vert
\leq \rho \sup_{z\in\mathcal{C}} \vert \Delta^Q_\tau(z)\vert$ and 
$\vert p_1(\tau) - \bar p_1(\tau)\vert  = \vert\int_{\mathcal{C}} \frac{dz}{2\pi i} z \Delta^P_\tau(z)\vert
\leq \rho^2 \sup_{z\in\mathcal{C}} \vert \Delta^P_\tau(z)\vert$. Therefore $\vert q(\tau) - \bar q(\tau)\vert$ and $\vert p_1(\tau) - \bar p_1(\tau)\vert$ converge with $\mathbb{P}_\delta$-probability to $0$. But since $\lim_{n\to +\infty}\mathbb{P}(H\in \mathcal{S}_\delta^n) =1$ it is easy to see (by the law of total probability) that $\vert q(\tau) - \bar q(\tau)\vert$ and $\vert p_1(\tau) - \bar p_1(\tau)\vert$ also converge with $\mathbb{P}$-probability to $0$.

\end{proof}

\section{Enforcing the spherical constraint in gradient dynamics}\label{app:constraint}

The second term in equation \eqref{eq:mainode} enforces the spherical constraint $\theta_t\in \mathbb{S}^{n-1}(\sqrt n)$ at all times. This is well known but we briefly recall how to derive it for completeness. Since the $n$ dimensional sphere is embedded in $\mathbb{R}^n$ the covariant gradient $D_\theta$ can be obtained by projecting the usual gradient $\nabla_\theta$ on a tangent plane. This projection is obtained by subtracting the component along a radius of the sphere, i.e., 
$\frac{\theta}{\sqrt n}\langle \frac{\theta}{\sqrt n}, \nabla_\theta\mathcal{H}(\theta)\rangle$. Therefore gradient descent reads
\begin{align}
\frac{d\theta_t}{dt} = \eta D_\theta \mathcal{H}(\theta_t) = \eta\bigl( \nabla_\theta\mathcal{H}(\theta_t) -  \frac{\theta_t}{n}\langle \theta_t, \nabla_\theta \mathcal{H}(\theta_t)\rangle\bigr).
\end{align}
It is easily checked that $\frac{d \Vert \theta_t\Vert_2^2}{dt} = 0$ and since $\theta_0\in \mathbb{S}^{n-1}(\sqrt n)$ we have 
$\theta_t\in \mathbb{S}^{n-1}(\sqrt n)$ for all times. Indeed
\begin{align}
\frac{d\Vert \theta_t\Vert_2^2}{dt} 
= 2 \langle \theta_t, \frac{d\theta_t}{dt}\rangle 
= 2\eta\bigl( \langle \theta_t, \nabla_\theta\mathcal{H}(\theta_t)\rangle -  \frac{\langle\theta_t, \theta_t\rangle}{n} \langle \theta_t, \nabla_\theta\mathcal{H}(\theta_t)\rangle  \bigr) = 0.
\end{align}

\section{Strict saddle property}\label{app:landscape}

We say that the strict saddle property is satisfied if the critical points of the cost are {\it strict} saddles or minima (a strict saddle has by definition at least one strictly negative eigenvalue of the Hessian).
It is known from \cite{pmlr-v49-lee16} that for a cost satisfying the strict saddle property, gradient descent with small enough discrete time steps converges to a minimum, almost surely with respect to the initial condition. In the present context (as shown below) the critical points are given by the eigenvectors of $A\equiv\frac{\sqrt\lambda}{n}Y = \frac{\sqrt{\lambda}}{n}\theta^* \theta^{*T} + \frac{1}{\sqrt n}\xi$ - call them $v_i\in \mathcal{S}^{n-1}(\sqrt n)$, $i=1,\ldots, n$ - and the Hessian at $v_i$ is proportional to $\alpha_i I - A$ where $\alpha_i$ is the corresponding eigenvalue. For a random $n\times n$ matrix and {\it fixed} $\lambda$ the spectrum is almost surely non-degenerate,\footnote{However for a fixed realization when $\lambda$ varies we can have eigenvalue crossings.} i..e., $\alpha_1 < \alpha_2 < \ldots < \alpha_n$, so the strict saddle property is almost surely satisfied. Moreover the top eigenvector $v_n$ has positive definite Hessian and is a minimum, while for the other ones are strict saddles with non-zero positive and negative eigenvalues. Now, for 
$\lambda > 1$ we know, that for $n$ large enough with high probability, 
 $\{\alpha_1 < \cdots < \alpha_{n-1}\} \subset [-2, 2]$, $\alpha_n \approx \sqrt\lambda + 1/\sqrt\lambda > 2$ and 
$n^{-1}\vert\langle \theta_*, v_n\rangle\rangle\vert \approx \sqrt{1- 1/\lambda}$ (where $a\approx b$ means $\vert a - b\vert =o_n(1)$)
\cite{Pch2004TheLE, FP2006}. This explains that for $\lambda >1$ gradient descent with a small enough discrete time steps will converge to $v_n$ and the overlap approach $\pm \sqrt{1 -1/\lambda}$.


The critical points on the sphere $\mathcal{S}^{n-1}(\sqrt n)$ satisfy $D_\theta \mathcal{H}(\theta) = 0$ where 
$D_\theta = (1 - \frac{1}{n}\theta \theta^T)\nabla_\theta$ is the covariant derivative. We have
\begin{align}
D_\theta \mathcal{H}(\theta) 
\propto 
\frac{1}{n}\langle\theta, A \theta\rangle \theta  - A \theta = 0
\end{align}
and has $n$ solutions $\theta=v_i$, $i=1, \ldots, n$. The Hessian matrix on the sphere is (up to a positive prefactor)
\begin{align}
D_\theta D_\theta^T \mathcal{H}(\theta) 
 \propto 
   (1 - \frac{1}{n}\theta \theta^T) \bigl(\frac{1}{n}\langle \theta, A \theta\rangle I  - A\bigr)
\end{align}
and for each critical point $\theta = v_i$ we find $D_\theta D_\theta^T \mathcal{R}(v_i) \propto \frac{1}{n^2}(\alpha_i I - A)$. This has $n-1$ eigenvectors 
$v_j$, $j\neq i$ (perpendicular to $v_i$ and tangent to the sphere) with eigenvalues $\alpha_i - \alpha_j$, $j\neq i$, and one eigenvector $v_i$ with $0$ eigenvalue. For fixed $\lambda$ there is no degeneracy $\alpha_1 < \alpha_2 < \ldots < \alpha_n$, almost surely and $v_n$ is a minimum while 
$v_j$, $j\neq n$ are strict saddles.

%
%

\section{Analysis of the stationary equation}\label{app:limiting-behavior-stoc}

The stationary equations corresponding to \eqref{eq:integrodiff} are given by setting the time derivatives on the left hand side to zero.
\begin{align}\label{eq:pde}
\begin{cases}
\bar q^\infty \left( \bar R(z) + \frac{1}{\sqrt \lambda} \right)
 + \left(\frac{z}{\sqrt \lambda} - (\bar q^\infty)^2 - \frac{1}{\sqrt \lambda} \bar p_1^\infty \right) \bar Q_\infty(z) = 0
 \\
 \bar q^\infty \bar Q_\infty(z) + \frac{1}{\sqrt \lambda}
 + \left( \frac{z}{\sqrt \lambda} - (\bar q^\infty)^2 - \frac{1}{\sqrt \lambda} \bar p_1^\infty \right) \bar P_\infty(z) = 0
\end{cases}
\end{align}
where $q^\infty \equiv - \int_{\mathcal{C}} \frac{dz}{2\pi i}\, Q_\infty(z)$, $p_1^\infty \equiv - \int_{\mathcal{C}} \frac{dz}{2\pi i}\, z P_\infty(z)$, $\bar R(z) = G_{\rm sc}(z)$, and $\mathcal{C} = \{z\in\mathbb{C}\mid z=\rho e^{i\theta}, \theta\in [0, 2\pi]\}$, $\rho >2$.
Here we show how to derive all possible solutions of these equations. 
One expects that the set of solutions contain the limiting solution for $\tau\to +\infty$ and we check that this is indeed the case. But it should be noted that there exist other solutions that are not physical in the sense that they are not attainable from the limiting time evolution.

From \eqref{eq:pde} we get
\begin{align}\label{eq:coupled}
\begin{cases}
\bar Q_\infty(z) = \bar q^\infty \frac{ \sqrt\lambda G_{\rm sc}(z) + 1}{\sqrt \lambda (\bar q^\infty)^2 + \bar p_1^\infty - z}
\\
\bar P_\infty(z) = (\bar q^\infty)^2 \sqrt \lambda \frac{ \sqrt\lambda G_{\rm sc}(z) + 1}{\left( \sqrt \lambda (\bar q^\infty)^2 + \bar p_1^\infty - z \right)^2} + \frac{1}{\sqrt \lambda (\bar q^\infty)^2 + \bar p_1^\infty - z}
\end{cases}
\end{align}
Let us first assume that $\vert \sqrt \lambda (\bar q^\infty)^2 + \bar p_1^\infty\vert \leq 2$. We integrate the second equation over the contour $\mathcal{C}$. One can show that integral of the first term on the right hand side vanishes. Thus we find the condition by $\bar p_1^\infty = \sqrt\lambda (\bar q^\infty)^2 + \bar p_1^\infty$ which implies $\bar  q^\infty = 0$. This implies in turn that $Q_\infty(z) = 0$, 
$P_\infty(z) = (\bar p_1^\infty - z)^{-1}$ and $\vert p_1^\infty\vert \leq 2$.

Now assume that $\vert \sqrt \lambda (\bar q^\infty)^2 + \bar p_1^\infty\vert \geq 2$. Integrating the first equation of \eqref{eq:coupled} over $\mathcal{C}$ we find
    \begin{equation}
        \bar q^\infty = \sqrt \lambda \bar q^\infty 
        \int_{z \in \mathcal{C}} \frac{dz}{2\pi i}
        \frac{ G_{\text{sc}}(z)
        }{
            z - (\sqrt\lambda\bar (q^\infty)^2 + \bar p_1^\infty)
        }
        \label{eq:first_eq_solve_q0}
    \end{equation}
The solution $\bar q^\infty = 0$ is again a possibility $Q_\infty(z) = 0$, 
$P_\infty(z) = (\bar p_1^\infty - z)^{-1}$ and $\vert \bar p_1^\infty\vert > 2$. 

Now assume that $\bar q^\infty\neq 0$ (and still $\vert\sqrt \lambda (\bar q^\infty)^2 + \bar p_1^\infty\vert \geq 2$). Computing the contour integral we find the equation $1 = - \sqrt\lambda G_{\rm sc}(\sqrt \lambda (\bar q^\infty)^2 + \bar p_1^\infty)$ and this implies 
\begin{equation}\label{eq:global}
\sqrt \lambda (\bar q^\infty)^2 + \bar p_1^\infty = \frac{1}{\sqrt\lambda} + \sqrt\lambda \geq 2
\end{equation}
for all $\lambda$.
Integrating the second equation in \eqref{eq:coupled} over $\mathcal{C}$ we find
 \begin{align}
        - \frac{1}{\lambda} = (\bar q^\infty)^2 
            \int_{z \in \mathcal{C}} \frac{dz}{2\pi i}
        \frac{ G_{\text{sc}}(z)
        }{
            \left( z - (\sqrt \lambda (\bar q^\infty)^2 + \bar p_1^\infty) \right)^2
        } = - (\bar q^\infty)^2 \frac{d G(z)}{dz}\vert_{\sqrt \lambda (\bar q^\infty)^2 + \bar p_1^\infty}
    \end{align}
Then using the explicit expression of $G(z)$ we find that $(\bar q^\infty)^2 = 1 - \frac{1}{\lambda}$, if  $\lambda \geq 1$, and $(\bar q^\infty)^2 = \frac{1}{\lambda}(\frac{1}{\lambda} - 1)$, if $\lambda \leq 1$.
Furthermore if 
$\lambda \geq 1$ we have from \eqref{eq:coupled} and \eqref{eq:global}
\begin{align}\label{eq:atlast1}
\begin{cases}
Q_\infty(z) = (1 - \frac{1}{\lambda}) \frac{ \sqrt\lambda G_{\rm sc}(z) + 1}{\sqrt\lambda + \frac{1}{\sqrt\lambda} - z}
\\
P_\infty(z) = (1 - \frac{1}{\lambda})  \sqrt\lambda \frac{ \sqrt\lambda G_{\rm sc}(z) + 1}{\left( \sqrt\lambda + \frac{1}{\sqrt\lambda} - z \right)^2} 
+ \frac{1}{\sqrt\lambda +\frac{1}{\sqrt\lambda} - z}
\end{cases}
\end{align}
Note that multiplying the second equation in \eqref{eq:atlast1} by $z$ and integrating over $\mathcal{C}$ yields $\bar p_1^\infty = \frac{2}{\sqrt\lambda}$. This consistent with \eqref{eq:global}.
Finally $\lambda < 1$ we have from \eqref{eq:coupled} and \eqref{eq:global} 
\begin{align}\label{eq:atlast2}
\begin{cases}
Q_\infty(z) = \frac{1}{\lambda}(\frac{1}{\lambda} - 1)\frac{ \sqrt\lambda G_{\rm sc}(z) + 1}{\sqrt\lambda + \frac{1}{\sqrt\lambda} - z}
\\
P_\infty(z) = \frac{1}{\lambda}(\frac{1}{\lambda} - 1) \sqrt\lambda \frac{ \sqrt\lambda G_{\rm sc}(z) + 1}{\left( \sqrt\lambda + \frac{1}{\sqrt\lambda} - z \right)^2} 
+ \frac{1}{\sqrt\lambda +\frac{1}{\sqrt\lambda} - z}
\end{cases}
\end{align}
As a consistency check we can see \eqref{eq:atlast2} and \eqref{eq:global} both imply $\bar p_1^\infty = \frac{2}{\sqrt\lambda} -\frac{1}{\lambda\sqrt\lambda} +\sqrt\lambda$. 

We conclude by noting that the solutions that are attainable from the time evolution are $(q^\infty = 0, \vert p_1^\infty\vert \leq 2)$ and
$(q^\infty = \pm \sqrt{1-\frac{1}{\lambda}}, p_1^\infty = \frac{2}{\lambda})$. The first one is "attained" from an initial condition 
with $\alpha = \frac{1}{n}\langle\theta^*, \theta_0\rangle =0$. In this case gradient descent "does not start" and $q^\infty = q(0) =0$, $p_1^\infty = p_1(0) = \frac{1}{n}\langle \theta_0, H \theta_0\rangle$ and $p_1(0) \leq 2$ with high probability. The other two solutions correspond to the initial conditions $\alpha = \frac{1}{n}\langle\theta^*, \theta_0\rangle$ with $\alpha >0$ and $\alpha <0$.

\section{Intermediate identities}\label{app:technical}

We derive a number of identities requiring interchange of integrals.

{\it A) Derivation of \eqref{eq:claimaboutQ}.}
To prove \eqref{eq:claimaboutQ} we start with \eqref{eq:qtlaplace} in the form
\begin{equation}
\mathcal L \hat Q_p(z) = \alpha G_{\text{sc}}(z) \mathcal L (e^{\frac{zt}{\sqrt\lambda}})(p)
 + \mathcal L \hat q(p) \mathcal L (e^{\frac{zt}{\sqrt\lambda}})(p) \bigl(G_{\text{sc}}(z) + \frac{1}{\sqrt \lambda}\bigr)
\end{equation}
and invert it back to the time domain
\begin{equation}\label{eq:genexplicit}
\hat Q_\tau(z) = \alpha G_{\text{sc}}(z) e^{\frac{z\tau}{\sqrt\lambda}}
+ G_{\text{sc}}(z)\int_0^\tau ds \hat q(s)  e^{\frac{z(\tau-s)}{\sqrt\lambda}}  + \frac{1}{\sqrt \lambda} \int_0^\tau ds \hat q(s)  e^{\frac{z(\tau-s)}{\sqrt\lambda}} .
\end{equation}
So this generating function is entirely known. Now we multiply this equation by $e^{\frac{z(\tau -u)}{\sqrt\lambda}}$ and integrate along $\mathcal{C}$. It is easy to see that, by Fubini's theorem, for the last term on the right hand side, the contour integral and the $s$-integral can be exchanged. Therefore the contour integral of the last term on the right hand side vanishes because $e^{\frac{z(2\tau-s -u)}{\sqrt\lambda}}$ is holomorphic in the whole complex plane.
For the other two terms on the right hand side we use the semi-circle law representation of $G_{\rm sc}(z)$ to obtain (see below for details) to obtain
\begin{align}\label{eq:prev}
\oint_{\mathcal{C}} \frac{dz}{2\pi i} G_{\text{sc}}(z) e^{\frac{z(2\tau-u)}{\sqrt\lambda}} = -  M_\lambda(2\tau - u)
\end{align}
and 
\begin{align}\label{eq:sim}
\oint_{\mathcal{C}} \frac{dz}{2\pi i} G_{\text{sc}}(z)\int_0^\tau ds \hat q(s)  e^{\frac{z(2\tau-s -u)}{\sqrt\lambda}}
= - \int_0^\tau ds \hat q(s) M_\lambda(2\tau - s - u) .
\end{align}
Putting  together \eqref{eq:prev}, \eqref{eq:sim} and \eqref{eq:genexplicit} we obtain the claimed identity \eqref{eq:claimaboutQ}.

{\it B) Derivation of \eqref{eq:prev}.}
From the semi-circle law representation of $G_{\rm sc}$
\begin{align}
\oint_{\mathcal{C}} \frac{dz}{2\pi i} G_{\text{sc}}(z) e^{\frac{z(2\tau-u)}{\sqrt\lambda}} 
= 
\oint_{\mathcal{C}} \frac{dz}{2\pi i} \int_{-2}^2 ds \frac{\mu_{\rm sc}(s)}{s-z} e^{\frac{z(2\tau-u)}{\sqrt\lambda}} 
\end{align}
It is easy to see that Fubini's theorem can be applied to interchange the integrals. Indeed the contour integral over 
$\mathcal{C}$ can be parametrized so that we then have two integrals with bounded functions over bounded intervals. So
\begin{align}
\oint_{\mathcal{C}} \frac{dz}{2\pi i} G_{\text{sc}}(z) e^{\frac{z(2\tau-u)}{\sqrt\lambda}} 
=
\int_{-2}^2 ds \mu_{\rm sc}(s) \oint_{\mathcal{C}} \frac{dz}{2\pi i} \frac{e^{\frac{z(2\tau -u)}{\sqrt\lambda}} }{s-z}
=
- \int_{-2}^2 ds \mu_{\rm sc}(s)  e^{\frac{s(2\tau-u)}{\sqrt\lambda}}
= - M_\lambda(2\tau - u)
\end{align}

{\it C) Derivation of \eqref{eq:sim}.}
We proceed similarly. First,
\begin{align}
\oint_{\mathcal{C}} \frac{dz}{2\pi i} G_{\text{sc}}(z) \int_0^\tau ds \hat q(s)  e^{\frac{z(2\tau-s -u)}{\sqrt\lambda}}
=
\oint_{\mathcal{C}} \frac{dz}{2\pi i} \int_{-2}^2 dx \int_0^\tau ds \mu_{\rm sc}(x) \hat q(s)  
\frac{e^{\frac{z(2\tau-s -u)}{\sqrt\lambda}}}{x-z}
\end{align}
Again, it is clear that the contour integral can be parametrized so that we all integrals are over bounded intervals and all functions are bounded, so that Fubini's theorem applies. Thus 
\begin{align}
\oint_{\mathcal{C}} \frac{dz}{2\pi i} G_{\text{sc}}(z)\int_0^\tau ds \hat q(s)  e^{\frac{z(2\tau-s -u)}{\sqrt\lambda}}
&
= \int_0^\tau ds \hat q(s) \int_{-2}^2 dx \mu_{\rm sc}(x) \oint_{\mathcal{C}} \frac{dz}{2\pi i} \frac{e^{\frac{z(2\tau-s -u)}{\sqrt\lambda}}}{x-z}
\nonumber \\ &
= 
- \int_0^\tau ds \hat q(s) M_\lambda(2\tau - s - u) 
\end{align}

{\it D) Derivation of \eqref{eq:almost}}. 
Again, using Fubini and then Cauchy's theorem,
\begin{align}
\oint_{\mathcal{C}} \frac{dz}{2\pi i} G_{\text{sc}}(z) \int_0^{+\infty} d\tau e^{-p\tau} e^{\frac{z\tau}{\sqrt \lambda}}
& =
\int_0^{+\infty} d\tau e^{-p\tau} \oint_{\Gamma^\prime} \frac{dz}{2\pi i} G_{\text{sc}}(z) e^{\frac{z\tau}{\sqrt \lambda}}
\nonumber \\ &
= 
\int_0^{+\infty} d\tau e^{-p\tau} 
\oint_{\mathcal{C}} \frac{dz}{2\pi i} e^{\frac{z\tau}{\sqrt \lambda}} \int_{-2}^2 ds \frac{\mu_{\text{sc}}(s)}{s - x}
\nonumber \\ &
= 
\int_0^{+\infty} d\tau e^{-p\tau} 
\int_{-2}^2 ds \mu_{\text{sc}}(s)
\oint \frac{dz}{2\pi i} \frac{e^{\frac{z\tau}{\sqrt \lambda}}}{s - z} 
\nonumber \\ &
= - \int_0^{+\infty} d\tau e^{-p\tau} \int_{-2}^2 ds \mu_{\text{sc}}(s) e^{\frac{s\tau}{\sqrt \lambda}} 
\nonumber \\ &
= - \int_0^{+\infty} d\tau e^{-p\tau}M_\lambda(\tau)
\end{align}

\section{Asymptotic analysis of $\bar q$}\label{app:asymptotic}

\subsection{limit when $\lambda > 1$}
We deduce the limiting behavior for $\lambda > 1$. The next order correction is given in \ref{app:second_order}.
Rewriting the first term from theorem \ref{th:risktrack}, we have for $\tau \in \mathbb R^+$
\begin{equation}
    e^{-(1+\frac{1}{\lambda}) \tau} \hat q(\tau) = \alpha \left[
        1 - \frac{1}{\lambda} \int_0^\tau e^{-(1+\frac{1}{\lambda}) s} M_\lambda(s) \dd s
    \right] .
\end{equation}
We notice that in the limit $t \to \infty$, the right hand side of the integral is the laplace transform
\begin{equation}
    \int_0^\infty e^{-(1+\frac{1}{\lambda}) s} M_\lambda(s) \dd s = \mathcal L M_\lambda \left(1+\frac{1}{\lambda} \right)
\end{equation}
and we have seen the connection with resolvent in \eqref{eq:remark} 
\begin{equation}
    \mathcal L M_\lambda \left(1+\frac{1}{\lambda} \right) = - \sqrt \lambda G_{\text{sc}}\left( (1+\frac{1}{\lambda})\sqrt \lambda\right) .
\end{equation}
But $X^2 + (1+\frac{1}{\lambda}) \sqrt \lambda X + 1 = 0$ has two roots: $\{ - \sqrt \lambda; \frac{-1}{\sqrt{\lambda}} \}$.
To ensure $G_{\text{sc}}(z) \in \mathbb C_+$ when $z \in \mathbb C_+$, 
we have $-\sqrt{\lambda}$ for $\lambda < 1$ and $\frac{-1}{\sqrt{\lambda}}$ for $\lambda > 1$.
Thus we conclude
\begin{equation}
    \lim_{\tau \to \infty} e^{-(1+\frac{1}{\lambda}) \tau}  \hat q(\tau) = 
    \left\{\begin{matrix}
        0 & (\lambda < 1)\\
        \alpha (1 - \frac{1}{\lambda}) & (\lambda > 1)
    \end{matrix}\right.
\end{equation}
Therefore, in the regime $\lambda > 1$, we find the asymptotic behavior for $\tau\to\infty$
\begin{equation}
    \hat q(\tau) \sim \alpha e^{(1+\frac{1}{\lambda}) \tau}  (1-\frac{1}{\lambda}) .
    \label{eq:q0_equivalence}
\end{equation}

A careful analysis of the terms entering $\hat p(\tau)$ shows the main contribution stems from the last term, on the square $\mathcal C = [\sqrt{\tau}, \tau]^2$ (as the integral can be neglected on $[0, \tau]^2 \setminus \mathcal C$):
\begin{equation}
    \hat p(\tau) \simeq \int_{\sqrt \tau}^\tau \int_{\sqrt \tau}^\tau  \hat q(u) q(v) M_\lambda(2\tau-u-v) \dd u \dd v .
\end{equation}
Using the approximation of $\hat q(t)$ in \eqref{eq:q0_equivalence} for large $t \in \mathcal C$, we can further approximate
\begin{equation}
    \hat p(\tau) \simeq \alpha^2 \left(1 - \frac{1}{\lambda} \right)^2 
    \int_{\sqrt \tau}^\tau \int_{\sqrt \tau}^\tau  
    e^{(1+\frac{1}{\lambda}) (u+v)\tau}
    M_\lambda(2\tau-u-v) \dd u \dd v
\end{equation}
and a change of variables $u=\tau-x, v=\tau-y$ provides
\begin{equation}
    \hat p(\tau) \simeq \alpha^2 
    e^{2 (1+\frac{1}{\lambda}) \tau}
    \left(1 - \frac{1}{\lambda} \right)^2 
    \iint_{[0,\tau(1-\frac{1}{\sqrt{\tau}})]^2} 
    e^{-(1+\frac{1}{\lambda}) (x+y)}
    M_\lambda(x+y) \dd y \dd x .
    \label{eq:first_p_equivalent}
\end{equation}
Now, notice the integral converges towards a non-zero value $K_\lambda$ when $\tau \to \infty$
\begin{equation}
    K_\lambda = \iint_{[0,\infty]^2} 
    e^{-(1+\frac{1}{\lambda}) (x+y)}
    M_\lambda(x+y) \dd y \dd x .
\end{equation}
Using a further change of variable $s = x+y$ we find 
\begin{equation}
    K_\lambda = 
    \int_{x=0}^\infty \int_{s = x}^\infty 
    e^{-(1+\frac{1}{\lambda}) s}
    M_\lambda(s) \dd s \dd x =
    \int_{s = 0}^\infty \int_{x=0}^s 
    e^{-(1+\frac{1}{\lambda}) s}
    M_\lambda(s) \dd x \dd s .
\end{equation}
Hence again, we find a connection with a Laplace transform (with a derivative from the additional $s$ term inside the integral)
\begin{equation}
    K_\lambda = 
    \int_{s = 0}^\infty 
    e^{-(1+\frac{1}{\lambda}) s}
    s M_\lambda(s) \dd x 
    = - (\mathcal L M_\lambda)' \left(1+\frac{1}{\lambda}\right)
\end{equation}
As $\mathcal L M_\lambda(p) = -\sqrt{\lambda} G_{\text{sc}}(p \sqrt{\lambda})$, and considering that $G_{\text{sc}} '(z) = -\frac{G_{\text{sc}}(z)}{2G_{\text{sc}}(z) + z }$, and that $G_{\text{sc}}( (1+\frac{1}{\lambda})\sqrt{\lambda}) = -\frac{1}{\sqrt{\lambda}}$ in the case when $\lambda > 1$, we conclude
\begin{equation}
    K_\lambda = 
    \lambda \frac{
        \frac{1}{\sqrt{\lambda}}
    }{ -2 \frac{1}{\sqrt{\lambda}}+ (1+\frac{1}{\lambda})\sqrt{\lambda} }
    = \frac{1}{1 - \frac{1}{\lambda}} .
    \label{eq:K}
\end{equation}
Finally, with \eqref{eq:K} and \eqref{eq:first_p_equivalent} we find
\begin{equation}
    \label{asym_p}
    \hat p(\tau) \sim \alpha^2 \left(1 - \frac{1}{\lambda} \right) e^{2 (1+\frac{1}{\lambda}) \tau}
\end{equation}
and for $\alpha>0$, we can conclude 
$\lim_{\tau \to \infty} \bar q(\tau) = \sqrt{1 - \frac{1}{\lambda}}$.

\subsection{Asymptotic analysis of $\lambda <1$} \label{app:ref_intervals}
The case $\lambda <1$ is computationally more involved as $\hat q(\tau)$ converges to $0$, and hence we need to find the rate of convergence towards $0$ of this term and that of $\hat p(\tau)$ in order to deduce the one from $\bar q(\tau)$. Though it is not the main topic of the paper, we provide some calculus elements to achieve this. We start with a lemma to find a suitable expression for $\hat q(\tau)$. Most of the calculations has been checked with Mathematica (the link will be provided in the final version).

\begin{lemma}
    $\hat q(\tau)$ has the following equivalent form:
    \begin{equation}
        \hat q(\tau) 
        =  \alpha \left(1-\frac{1}{\lambda}\right) e^{(1+\frac{1}{\lambda}) \tau}  \mathbb I_{(1,+\infty)}(\lambda)
        + \frac{2 \alpha}{\pi \lambda}  e^{ \frac{2}{\sqrt{\lambda}} \tau } \int_0^\pi 
        e^{ \frac{2}{\sqrt{\lambda}} (\cos(\theta) - 1) \tau } 
        \frac{\sin(\theta)^2}{(1+\frac{1}{ \lambda})  - \frac{2}{\sqrt{\lambda}} \cos(\theta)} \dd \theta
        \label{lemma:new_q}
    \end{equation}
\end{lemma}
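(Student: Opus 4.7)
The plan is to start from the explicit formula \eqref{eq:hatq0}, insert the integral representation of $M_\lambda$ from \eqref{eq:MGF}, swap integrals by Fubini, and evaluate the resulting $s$-integral in closed form. The branch subtlety arising from $G_{\text{sc}}$ evaluated at the point $\sqrt\lambda + 1/\sqrt\lambda$ will produce the indicator $\mathds{1}_{(1,+\infty)}(\lambda)$.

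Writing $M_\lambda(s)=\frac{2}{\pi}\int_0^\pi d\theta\,(\sin\theta)^2 e^{\frac{2s}{\sqrt\lambda}\cos\theta}$ and setting $A(\theta) = \frac{2}{\sqrt\lambda}\cos\theta - (1+\frac{1}{\lambda}) = \frac{2}{\sqrt\lambda}(\cos\theta - 1) - (1 - \frac{1}{\sqrt\lambda})^2$, Fubini gives
\begin{equation*}
\int_0^\tau ds\, e^{-(1+\frac{1}{\lambda})s} M_\lambda(s) = \frac{2}{\pi} \int_0^\pi d\theta\, (\sin\theta)^2 \frac{e^{\tau A(\theta)} - 1}{A(\theta)}.
\end{equation*}
Note $A(\theta)\le 0$ with equality only possibly at $\theta=0$ when $\lambda=1$, and the integrand $(\sin\theta)^2/A(\theta)$ extends continuously across that point (since $\sin^2\theta/(2(\cos\theta-1)) = -(1+\cos\theta)/2$), so no singularity issue arises. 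Splitting the numerator into two pieces, substituting into \eqref{eq:hatq0}, and using $e^{(1+\frac{1}{\lambda})\tau} e^{\tau A(\theta)} = e^{\frac{2\tau}{\sqrt\lambda}\cos\theta}$, we get
\begin{equation*}
\hat q(\tau) = \alpha\Bigl(1 + \frac{I(\lambda)}{\lambda}\Bigr) e^{(1+\frac{1}{\lambda})\tau} + \frac{2\alpha}{\pi\lambda} e^{\frac{2\tau}{\sqrt\lambda}} \int_0^\pi d\theta\, \frac{(\sin\theta)^2 e^{\frac{2\tau}{\sqrt\lambda}(\cos\theta - 1)}}{(1+\frac{1}{\lambda}) - \frac{2}{\sqrt\lambda}\cos\theta},
\end{equation*}
where $I(\lambda) := \frac{2}{\pi}\int_0^\pi d\theta\, (\sin\theta)^2/A(\theta)$. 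The second term already matches the target expression, so it only remains to identify $1 + I(\lambda)/\lambda$ with $(1 - 1/\lambda)\mathds{1}_{(1,+\infty)}(\lambda)$.

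The key identification uses the Stieltjes representation of $G_{\text{sc}}$. Parametrizing $s = 2\cos\theta$ in \eqref{eq:Stieltjes} yields $G_{\text{sc}}(z) = \frac{2}{\pi}\int_0^\pi d\theta\, \sin^2\theta/(2\cos\theta - z)$. Evaluating at $z = \sqrt\lambda + 1/\sqrt\lambda$ and pulling out a factor of $\sqrt\lambda$ from the denominator gives the clean identity $I(\lambda) = \sqrt\lambda\, G_{\text{sc}}(\sqrt\lambda + 1/\sqrt\lambda)$. The explicit formula \eqref{eq:Stieltjes} together with $(\sqrt\lambda + 1/\sqrt\lambda)^2 - 4 = (\sqrt\lambda - 1/\sqrt\lambda)^2$ forces selection of the branch: for $\lambda > 1$ one gets $G_{\text{sc}}(\sqrt\lambda + 1/\sqrt\lambda) = -1/\sqrt\lambda$, hence $I(\lambda) = -1$ and $1 + I/\lambda = 1 - 1/\lambda$; for $\lambda < 1$ one gets $G_{\text{sc}}(\sqrt\lambda + 1/\sqrt\lambda) = -\sqrt\lambda$, hence $I(\lambda) = -\lambda$ and $1 + I/\lambda = 0$.

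The main obstacle is the sign/branch subtlety in step three. The integrand in the definition of $I(\lambda)$ is real and the $\lambda > 1$ versus $\lambda < 1$ cases are formally identical from the integral standpoint, but the closed-form value of $G_{\text{sc}}$ at the edge-like point $\sqrt\lambda + 1/\sqrt\lambda$ forces a discontinuous choice that exactly accounts for the spectral phase transition and produces the indicator. Care is needed to justify this selection rigorously (either by direct contour arguments or by continuity in $\Im z$ from above), but once the two cases are correctly evaluated, collecting terms yields \eqref{lemma:new_q}.
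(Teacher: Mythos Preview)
Your proof is correct and follows essentially the same route as the paper: insert the integral form of $M_\lambda$ into \eqref{eq:hatq0}, swap integrals by Fubini, evaluate the $s$-integral, and split into the constant piece plus the $\theta$-integral. The only difference is in how the constant $I(\lambda)=\frac{2}{\pi}\int_0^\pi\frac{\sin^2\theta}{A(\theta)}\,d\theta$ is computed: the paper proves a standalone identity (Proposition~\ref{prop:bioche}) via the Weierstrass substitution $u=\tan(\theta/2)$ and partial fractions, whereas you recognize $I(\lambda)=\sqrt\lambda\,G_{\text{sc}}(\sqrt\lambda+1/\sqrt\lambda)$ directly and read off the two branch values from \eqref{eq:Stieltjes}---a slightly more economical shortcut that reuses machinery already available in the paper.
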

\begin{proof}
Starting with $\hat q(\tau)$ from \eqref{th:risktrack}, one can use a similar expression of $M_\lambda$
\begin{equation}
    \frac{e^{-(1+\frac{1}{ \lambda}) \tau}}{\alpha} \hat q(\tau) 
    = 1 - \frac{2}{\pi \lambda} \int_0^\pi \int_0^\tau 
    e^{ \left(\frac{2}{\sqrt{\lambda}} \cos(\theta) - (1+\frac{1}{ \lambda})  \right) s}  \sin(\theta)^2 \dd s \dd \theta
\end{equation}
The inward integral can further be integrated (notice the constant term in the exponent is non-zero)
\begin{equation}
    \frac{e^{-(1+\frac{1}{ \lambda}) \tau}}{\alpha} \hat q(t) 
    = 1 -  \frac{2}{\pi \lambda} \int_0^\pi \left(e^{ \left(\frac{2}{\sqrt{\lambda}} \cos(\theta) - (1+\frac{1}{ \lambda}) \right) \tau } - 1 \right)  \frac{\sin(\theta)^2}{\frac{2}{\sqrt{\lambda}} \cos(\theta) - (1+\frac{1}{ \lambda})} \dd \theta
    \label{eq:q_eq_simplify}
\end{equation}
Using proposition \ref{prop:bioche} with the constant $a = \frac{1+\frac{1}{{\lambda}}}{\frac{2}{\sqrt \lambda}}>1$, one can simplify
\begin{equation}
    a - \sqrt{a^2 - 1}
    = \sqrt \lambda \frac{1 + \frac{1}{\lambda} - |1 - \frac{1}{\lambda}|}{2}
    = \left\{ \begin{matrix}
        \frac{1}{\sqrt \lambda} & (\lambda > 1)\\
        \sqrt \lambda & (\lambda < 1)\\
    \end{matrix} \right.
\end{equation}
and we finally find
\begin{equation}
    \frac{2}{\pi \lambda} \int_0^\pi \frac{\sin(\theta)^2}{\frac{2}{\sqrt{\lambda}} \cos(\theta) - (1+\frac{1}{ \lambda})} \dd \theta = 
    \frac{1}{\pi \sqrt{\lambda}} \int_0^\pi \frac{\sin(\theta)^2}{\cos(\theta) - a} \dd \theta 
    =  \left\{ \begin{matrix}
        -\frac{1}{\lambda} & (\lambda > 1)\\
        -1& (\lambda < 1)\\
    \end{matrix} \right.
    \label{eq:sim2_q}
\end{equation}
using the solution \eqref{eq:sim2_q} in \eqref{eq:q_eq_simplify} concludes the proof.
\end{proof}

\begin{proposition}
    \label{prop:bioche}
    For any $a > 1$, we have:
    \begin{equation}
        \int_0^\pi \frac{\sin(\theta)^2}{ \cos(\theta) - a} \dd \theta 
        =  \pi (\sqrt{a^2 - 1} - a)
    \end{equation}
    
\end{proposition}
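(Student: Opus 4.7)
The plan is to prove Proposition \ref{prop:bioche} by decomposing the integrand algebraically and then reducing the remaining rational integral to a standard form via the Weierstrass substitution. The advantage of this route is that no contour deformation is needed, and the critical dependence on $\sqrt{a^2-1}$ emerges naturally at the end.

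First, I would perform polynomial division of $\sin^2\theta = 1 - \cos^2\theta$ by $\cos\theta - a$. A direct check shows
\begin{equation}
\frac{\sin^2\theta}{\cos\theta - a} = -(\cos\theta + a) + \frac{1-a^2}{\cos\theta - a},
\end{equation}
which splits the integral into an elementary part and a rational-trigonometric part. The elementary part integrates immediately, since $\int_0^\pi \cos\theta\, d\theta = 0$ and $\int_0^\pi a\, d\theta = a\pi$, yielding a contribution of $-a\pi$.

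The remaining task is to evaluate $J(a) := \int_0^\pi \frac{d\theta}{\cos\theta - a}$ for $a>1$. I would apply the Weierstrass substitution $t = \tan(\theta/2)$, which maps $[0,\pi]$ bijectively onto $[0,+\infty)$ and converts the integrand into a rational function of $t$:
\begin{equation}
J(a) = \int_0^\infty \frac{2\,dt}{(1-a) - (1+a)t^2} = -\frac{2}{a+1}\int_0^\infty \frac{dt}{\frac{a-1}{a+1} + t^2}.
\end{equation}
Since $a>1$, the denominator has no real zeros and the standard formula $\int_0^\infty \frac{dt}{b^2 + t^2} = \frac{\pi}{2b}$ with $b = \sqrt{(a-1)/(a+1)}$ gives $J(a) = -\pi/\sqrt{a^2-1}$.

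Combining the two pieces,
\begin{equation}
\int_0^\pi \frac{\sin^2\theta}{\cos\theta - a}\, d\theta = -a\pi + (1-a^2)\cdot\frac{-\pi}{\sqrt{a^2-1}} = -a\pi + \pi\sqrt{a^2-1} = \pi(\sqrt{a^2-1}-a),
\end{equation}
which is the claimed identity. There is no real obstacle here: everything reduces to elementary trigonometric manipulations and a standard arctangent integral, with the only point of care being to track signs (note $\cos\theta - a < 0$ throughout $[0,\pi]$, consistent with the right-hand side being negative for $a>1$). An alternative proof using residues on the unit circle $|z|=1$ with $z = e^{i\theta}$ is equally viable—the roots $a\pm\sqrt{a^2-1}$ of $z^2-2az+1$ lie reciprocally, one inside and one outside the unit disk—but the Weierstrass route above is shorter.
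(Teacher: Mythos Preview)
Your proof is correct. Both you and the paper invoke the Weierstrass substitution $t=\tan(\theta/2)$ (as suggested by Bioche's rules), but you apply it at a different stage. The paper substitutes directly in the full integrand, producing the rational function $\frac{8u^2}{[(1-a)-(1+a)u^2](1+u^2)^2}$, and then carries out a multi-step partial-fraction decomposition before integrating each piece. You instead first perform the polynomial division $\frac{\sin^2\theta}{\cos\theta-a}=-(\cos\theta+a)+\frac{1-a^2}{\cos\theta-a}$, which immediately dispatches the $-a\pi$ contribution and leaves only the elementary integral $J(a)=\int_0^\pi\frac{d\theta}{\cos\theta-a}$; the Weierstrass substitution then lands directly on a standard arctangent form without any partial fractions. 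Your route is shorter and avoids the algebraic bookkeeping in the paper's version, at the cost of requiring the (easy) initial insight to split off the polynomial part; the paper's version is more mechanical but longer.
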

\begin{proof}
Bioche's rules suggest a change of variable $u=\tan(\frac{\theta}{2})$, we find on the left-hand side
\begin{equation}
    \int_0^\pi \frac{\sin(\theta)^2}{ \cos(\theta) - a} \dd \theta = 
    \int_0^\infty \frac{ 
        \left( \frac{2u}{u^2+1} \right)^2
    }{
        \frac{1-u^2}{1+u^2} - a
    } \frac{2 \dd u}{1+u^2}
    = 
    \int_0^\infty \frac{ 
        8 u^2
    }{
        [ (1-a) - (1+a) u^2 ] (1+u^2)^2
    } \dd u
\end{equation}
Using the constant $K = \frac{a-1}{a+1}$ (or equivalently $a = \frac{1+K}{1-K}$) we can rewrite
\begin{equation}
    \int_0^\pi \frac{\sin(\theta)^2}{ \cos(\theta) - a} \dd \theta 
    = -4 (1-K)
    \int_0^\infty \frac{ 
     u^2
    }{
        (K + u^2) (1+u^2)^2
    } \dd u
\end{equation}
and make a classical partial fraction decomposition of the inward term of the integral
\begin{eqnarray}
    \frac{ 
        u^2
       }{
           (K + u^2) (1+u^2)^2
       }  
       & = & \frac{1}{(1-K)^2} \left( \frac{u^2}{K+u^2} - \frac{u^2}{1+u^2} \right) - \frac{1}{1-K} \frac{u^2}{(1+u^2)^2} \\
       & = & \frac{1}{(1-K)^2} \left( \frac{1}{1+u^2} -\frac{K}{K+u^2} \right) 
            - \frac{1}{1-K} \left[ \frac{1}{1+u^2} - \frac{1}{(1+u^2)^2}  \right] \\
       & = & \frac{K}{(1-K)^2} \left( \frac{1}{1+u^2} -\frac{1}{K+u^2} \right) 
            + \frac{1}{1-K} \frac{1}{(1+u^2)^2} 
\end{eqnarray}
Then on the one hand, with change of variable $u = \tan(x)$ we have:
\begin{equation}
    \int_0^\infty \frac{\dd u}{(1+u^2)^2} = \int_0^{\frac{\pi}{2}} \frac{\dd x}{1+\tan^2(x)}
    = \int_0^{\frac{\pi}{2}} \cos^2(x) \dd x = \frac{\pi}{4}
\end{equation}
On the other hand, with change of variable $u = \sqrt{K} \tan(x)$ we have:
\begin{equation}
    \int_0^\infty \frac{\dd u}{K+u^2} =  \int_0^{\frac{\pi}{2}} \dd x = \frac{\pi}{2} \frac{1}{\sqrt K}
\end{equation}
Thus:
\begin{equation}
    -4 (1-K) \int_0^\infty \frac{ 
        u^2 \dd u
       }{
           (K + u^2) (1+u^2)^2
       } =
       -\pi \left[
            \frac{2 K}{1-K} \left( 1 - \frac{1}{\sqrt K} \right) 
            + 1
       \right]
\end{equation}
and:
\begin{equation}
    \frac{2K}{1-K} \left( 1 - \frac{1}{\sqrt K} \right) + 1 = a - \sqrt{a^2 - 1}
\end{equation}
\end{proof}

Going back to the case $\lambda < 1$, we can simplify the expression from equation \eqref{lemma:new_q}
\begin{equation}
    \hat q(\tau) 
    =  \frac{2 \alpha}{\pi \lambda}  e^{ \frac{2}{\sqrt{\lambda}} \tau } \int_0^\pi 
    e^{ \frac{2}{\sqrt{\lambda}} (\cos(\theta) - 1) \tau } 
    \frac{\sin(\theta)^2}{(1+\frac{1}{ \lambda})  - \frac{2}{\sqrt{\lambda}} \cos(\theta)} \dd \theta
    \label{eq:simplify2}
\end{equation}
Further, with $u=\frac{2}{\sqrt{\lambda}} (1-\cos(\theta))$ we rewrite \eqref{eq:simplify2} to apply Watson's lemma
\begin{equation}
    \hat q(\tau) 
    =  \frac{2 \alpha}{\pi \lambda}  e^{ \frac{2}{\sqrt{\lambda}} \tau } 
    \left(\frac{\sqrt{\lambda}}{2}\right)^\frac12
    \int_0^{\sqrt \lambda}
    e^{ -u \tau } 
    \frac{ 
        (u(2- \frac{\sqrt{\lambda}}{2}u))^\frac12
    }
    {
        (1+\frac{1}{ \lambda})  - \frac{2}{\sqrt{\lambda}} (1-\frac{\sqrt{\lambda}}{2}u)} \dd u
\end{equation}
Therefore, Watson's lemma provides the asymptotic equivalence
\begin{equation}
    \hat q(\tau) 
    \sim  
    \frac{2 \alpha}{\pi \lambda}  e^{ \frac{2}{\sqrt{\lambda}} \tau } 
    \left(\frac{\sqrt{\lambda}}{2}\right)^\frac32
    \frac{2^\frac12 \Gamma(\frac32) \tau^{-\frac32} }{(1+\frac{1}{ \lambda})  - \frac{2}{\sqrt{\lambda}}} 
\end{equation}
With $\Gamma(\frac32) = \frac{\sqrt{\pi}}{2}$ we have therefore
\begin{equation}
    \hat q(\tau) 
    \sim  
    \frac{\alpha \tau^{-\frac32} 
    e^{ \frac{2}{\sqrt{\lambda}} \tau } 
    }{2 \sqrt{\pi} \lambda^\frac14 \left(1 - \frac{1}{\sqrt{\lambda}}\right)^2}  
    \label{eq:watsonlemma}
\end{equation}

The remaining term $\hat p(\tau)$ can further be analyzed by splitting each integral from theorem \ref{th:risktrack} and analyzing the terms with the asymptotic form $e^{\frac{4\tau}{\sqrt\lambda }} \tau^{-\frac32}$.
For instance, we get easily the first term for which we have
\begin{equation}
    M_\lambda(2 \tau) = \frac{\sqrt{\lambda}}{2\tau} I_1\left( \frac{4\tau}{\sqrt{\lambda}}\right)
    \sim
    \frac{
        \sqrt{\lambda} 
        e^{\frac{4\tau}{\sqrt\lambda }}
    }{
        2t \sqrt{2 \pi \frac{4 \tau}{\sqrt \lambda } }
    }
    \sim 
    \frac{
        \lambda^\frac34 
        e^{\frac{4\tau}{\sqrt\lambda }}
    }{
        2^\frac52 \sqrt{\pi} \tau^\frac32
    }
    \label{eq:M_approx}
\end{equation}
The other terms require more technical considerations. We will use both former approximations from the equivalence relations \eqref{eq:watsonlemma} and \eqref{eq:M_approx}. However, these approximations are only valid for large $\tau$ while the integral for the second term is applied on the whole range $[0, \tau]$. Therefore, we split the integration intervals into two segments, say $[0, \sqrt \tau]$ and $[\sqrt \tau, \tau]$, and apply the approximations in the domains where it is valid. 

Starting with the second term, as $2\tau - s > \tau$ for all $s \in [0, \tau]$, we can already apply the relation \eqref{eq:M_approx} and split further the integrals:
\begin{equation}
    \int_0^\tau \hat q(s) M_\lambda(2\tau-s) \dd s \simeq
    \frac{\lambda^\frac34 e^{\frac{4}{\sqrt{\lambda}} \tau}}{2 \sqrt{\pi}}
    \left[
    \int_0^{\sqrt \tau} \hat q(s) 
    \frac{
        e^{-\frac{2}{\sqrt{\lambda}} s}
    }{(2\tau-s)^\frac32}
    \dd s 
    + 
    \int_{\sqrt \tau}^\tau \hat q(s) 
    \frac{
        e^{-\frac{2}{\sqrt{\lambda}} s}
    }{(2\tau-s)^\frac32}
    \dd s 
    \right]
    \label{eq:p_split}
\end{equation}
Then the integrand on the first segment of \eqref{eq:p_split} is further approximated using $\frac{1}{(2\tau-s)^\frac32} = \frac{1}{(2\tau)^\frac32}$. Indeed, as $s \leq \sqrt{\tau}$ we have $s = o(\tau)$. In the end we retrieve the laplace transform of $\hat q$:
\begin{equation}
    \int_0^{\sqrt \tau} \hat q(s) 
    \frac{
        e^{-\frac{2}{\sqrt{\lambda}} s}
    }{(2\tau-s)^\frac32}
    \dd s  \simeq 
    \frac{1}{(2\tau)^\frac32} 
    \int_0^{\sqrt \tau} \hat q(s) 
        e^{-\frac{2}{\sqrt{\lambda}} s}
    \dd s 
    \simeq
    \frac{1}{(2\tau)^\frac32} \mathcal L \hat q \left(\frac{2}{\sqrt \lambda }\right)
\end{equation}
From \eqref{eq:Lq0} which remains valid at $z=2$ with $G_{\text{sc}}(2) = -1$, we can even derive further the constant term
\begin{equation}
    \mathcal L \hat q \left(\frac{2}{\sqrt \lambda }\right) = 
    \frac{-\alpha G_{\text{sc}}(2)}{
        \frac{1}{\sqrt{\lambda}} + G_{\text{sc}}(2)
    } = \frac{\alpha}{\frac{1}{\sqrt \lambda} - 1}
\end{equation}
In the second segment of the integral in \eqref{eq:p_split}, we use the approximation from \eqref{eq:watsonlemma} and use change of variable $r = \frac{s}{\tau}$
\begin{equation}
    \int_{\sqrt \tau}^\tau \hat q(s) 
    \frac{
        e^{-\frac{2}{\sqrt{\lambda}} s}
    }{(2\tau-s)^\frac32}
    \dd s 
    \simeq 
    \frac{\alpha}{2 \sqrt{\pi} \lambda^\frac14
    \left[(1+\frac{1}{ \lambda})  - \frac{2}{\sqrt{\lambda}} \right] } 
    \int_{\frac{1}{\sqrt \tau}}^1 \frac{1}{r^\frac32 (2-r)^\frac32} \frac{\tau}{\tau^{\frac32 + \frac32}} \dd r
    \label{eq:p_eq_1}
\end{equation}
The integral from the right side can be solved:
\begin{equation}
    \int_{\frac{1}{\sqrt \tau}}^1 \frac{\dd r}{r^\frac32 (2-r)^\frac32} = \frac{
        1 - \frac{1}{\sqrt \tau}
    }{
        \sqrt{ 
            \frac{1}{\sqrt \tau}
            \left(2 - \frac{1}{\sqrt \tau}\right)
        }
    } \sim \frac{\tau^\frac14}{\sqrt 2}
    \label{eq:p_eq_2}
\end{equation}
Putting things together with \eqref{eq:p_eq_2} in \eqref{eq:p_eq_1} we get
\begin{equation}
    \int_{\sqrt \tau}^\tau \hat q(s) 
    \frac{
        e^{-\frac{2}{\sqrt{\lambda}} s}
    }{(2\tau-s)^\frac32} \dd s
    \simeq 
    \frac{\alpha \tau^{-\frac74} }{2 \sqrt{2\pi} \lambda^\frac14
    \left[(1+\frac{1}{ \lambda})  - \frac{2}{\sqrt{\lambda}} \right] } 
\end{equation}
So, the main contribution comes from the first integral of equation \eqref{eq:p_split} with the coefficient $\tau^{-\frac32}$
\begin{equation}
    2 \alpha \int_0^{\tau} \hat q(s) 
    M_\lambda(2\tau - s)
    \dd s 
    \sim
    \frac{
        \alpha^2 \lambda^\frac34 e^{\frac{4}{\sqrt{\lambda}} \tau} \tau^{-\frac32}
    }{
        2^\frac32 \sqrt{\pi} \left( \frac{1}{\sqrt \lambda} - 1 \right)
    }
    \label{eq:second_contrib}
\end{equation}

The third term with the double-integral requires extending the previous calculation idea on each rectangle:
$I_1 = [0,\sqrt{\tau}]^2$, $I_2=[0,\sqrt{\tau}]\times[\sqrt{\tau},\tau] $, $I_2'=[\sqrt{\tau},\tau]\times[0,\sqrt{\tau}] $ and $I_3 = [\sqrt{\tau},\tau]^2$. As we will see, only the integral on $I_1$ brings a contribution of order $\tau^{-\frac32}$ and the others can be neglected.

\paragraph{Interval $I_1 = [0, \sqrt \tau]^2$}
On this interval, $2\tau - u - v \gg 1$ so we consider
\begin{equation}
    \iint_{I_1} \hat q(u) \hat q(v) M_\lambda(2\tau - u -v) \dd u \dd v
    \simeq 
    \frac{\lambda^\frac34 e^{\frac{4}{\sqrt{\lambda}} \tau}}{2 \sqrt{\pi}}
    \iint_{I_1} \hat q(u) \hat q(v) \frac{e^{-\frac{2}{\sqrt \lambda}(u+v)}}{(2 \tau - u - v)^\frac32} \dd u \dd v
\end{equation}
also, on $I_1$ we have $\frac{1}{(2 \tau - u - v)^\frac32} \simeq \frac{1}{(2\tau)^\frac32}$, thus we are left to consider:
\begin{equation}
    \iint_{I_1} \hat q(u) \hat q(v) e^{-\frac{2}{\sqrt \lambda}(u+v)} \dd u \dd v
    \simeq \left[\mathcal L \hat q \left( \frac{2}{\sqrt \lambda} \right) \right]^2
\end{equation}
hence with \eqref{eq:q_eq_simplify} we find
\begin{equation}
    \iint_{I_1} \hat q(u) \hat q(v) M_\lambda(2\tau - u -v) \dd u \dd v
    \simeq 
    \frac{
        \alpha^2 \lambda^\frac34 e^{\frac{4}{\sqrt{\lambda}} \tau} \tau^{-\frac32}
    }{
        2^\frac52 \sqrt{\pi} \left( \frac{1}{\sqrt \lambda} - 1 \right)^2
    }
    \label{eq:third_contrib}
\end{equation}

\paragraph{Interval $I_2 = [0, \sqrt \tau] \times [\sqrt \tau, \tau]$} here we still have $2\tau - u - v \gg 1$ but also $v \geq \sqrt \tau \gg 1$ so with \eqref{eq:watsonlemma} we first get
\begin{equation}
    \iint_{I_2} \hat q(u) \hat q(v) M_\lambda(2\tau - u -v) \dd u \dd v
    \simeq 
    \frac{\alpha 
    }{2 \sqrt{\pi} \lambda^\frac14 \left[ (1+\frac{1}{ \lambda})  - \frac{2}{\sqrt{\lambda}}\right]}  
    \iint_{I_2} \hat q(u) v^{-\frac32} 
    e^{ \frac{2}{\sqrt{\lambda}} v } 
    M_\lambda(2\tau - u - v)
    \dd u \dd v
\end{equation}
and then:
\begin{equation}
    \iint_{I_2} \hat q(u) v^{-\frac32} 
    e^{ \frac{2}{\sqrt{\lambda}} v } 
    M_\lambda(2\tau - u - v)
    \dd u \dd v
    \simeq
    \frac{\lambda^\frac34 e^{\frac{4}{\sqrt{\lambda}} \tau}}{2 \sqrt{\pi}}
    \iint_{I_2} \hat q(u) v^{-\frac32} 
    e^{ \frac{2}{\sqrt{\lambda}} v } 
    \frac{e^{-\frac{2}{\sqrt \lambda}(u+v)}}{(2 \tau - u - v)^\frac32} \dd u \dd v
\end{equation}
so
\begin{equation}
    \iint_{I_2} \hat q(u) v^{-\frac32} 
    e^{ \frac{2}{\sqrt{\lambda}} v } 
    M_\lambda(2\tau - u - v)
    \dd u \dd v
    \simeq
    \frac{\lambda^\frac34 e^{\frac{4}{\sqrt{\lambda}} \tau}}{2 \sqrt{\pi}}
    \int_0^{\sqrt \tau} \hat q(u)e^{-\frac{2}{\sqrt \lambda}u}
    \int_{\sqrt \tau}^{\tau} 
    \frac{1}{v^\frac32 (2 \tau - u - v)^\frac32} \dd v
    \dd u
\end{equation}
At fixed $u \in [0, \sqrt \tau]$ With change of variable $s = \frac{v}{\tau}$ we find
\begin{equation}
    \int_{\sqrt \tau}^{\tau} 
    \frac{1}{v^\frac32 (2 \tau - u - v)^\frac32} \dd v
    = \int_{\frac{1}{\sqrt \tau}}^{1} 
    \frac{1}{\tau^\frac32 s^\frac32 (\tau (2-s) - u)^\frac32} \tau \dd s
    = \frac{1}{\tau^2} 
    \int_{\frac{1}{\sqrt \tau}}^{1} 
    \frac{1}{s^\frac32 \left((2-s) - \frac{u}{\tau}\right)^\frac32} \dd s
\end{equation}
Because $u \leq \sqrt \tau$ we have $\frac{u}{\tau} = o(1)$. Notice we have
\begin{eqnarray}
    \int_{\frac{1}{\sqrt \tau}}^{1} 
    \frac{\dd s}{s^\frac32 \left((2-s) - \frac{u}{\tau}\right)^\frac32} 
    & = & \left[ 
        \frac{
            2(\frac{u}{\tau} + 2s - 2)
        }{
            (\frac{u}{\tau} - 2)^2 \sqrt{ s (2 - s - \frac{u}{\tau}) }
        }
     \right]_{\frac{1}{\sqrt \tau}}^1 \\
     & = & \frac{2}{(\frac{u}{\tau} - 2)^2 }
     \left[ 
         \frac{u}{\tau \sqrt{1 - \frac{u}{\tau}}}
         - \frac{ \frac{u}{\tau} + \frac{2}{\sqrt \tau} - 2 }{
             \sqrt{ \frac{1}{\sqrt \tau} (2 - \frac{1}{\sqrt \tau} - \frac{u}{\tau})
             }
             }
     \right]
     \sim \frac{\tau^\frac14}{\sqrt 2}
\end{eqnarray}
Hence we have a term in $\tau^{-\frac74}$ so the term on $I_2$ can be neglected compared to $I_1$:
\begin{equation}
    \iint_{I_2} \hat q(u) v^{-\frac32} 
    e^{ \frac{2}{\sqrt{\lambda}} v } 
    M_\lambda(2\tau - u - v)
    \dd u \dd v
    \sim
    \frac{\lambda^\frac34 e^{\frac{4}{\sqrt{\lambda}} \tau} \tau^{-\frac74}}{2 \sqrt{2 \pi}}
    \mathcal L \hat q \left( \frac{2}{\sqrt \lambda} \right) 
    \label{eq:semgent_I2}
\end{equation}
Notice finally that the interval $I_2' = [\sqrt \tau, \tau] \times [0, \sqrt \tau] $ is similar as the integrand is symmetric in its arguments.

\paragraph{Interval $I_3 = [\sqrt \tau, \tau]^2$} we can approximate both $\hat q(u), \hat q(v)$
\begin{equation}
    \iint_{I_3} \hat q(u) \hat q(v) M_\lambda(2\tau - u -v) \dd u \dd v
    \simeq 
    \frac{\alpha^2
    }{4 \pi \lambda^\frac12 \left[ (1+\frac{1}{ \lambda})  - \frac{2}{\sqrt{\lambda}}\right]^2}  
    \iint_{I_3}  (uv)^{-\frac32} 
    e^{ \frac{2}{\sqrt{\lambda}} (u+v) } 
    M_\lambda(2\tau - u - v)
    \dd u \dd v
\end{equation}
Let's focus on the right hand side integral:
\begin{equation}
    f(\tau) = \iint_{I_3}  (uv)^{-\frac32} 
    e^{ \frac{2}{\sqrt{\lambda}} (u+v) } 
    M_\lambda(2\tau - u - v)
    \dd u \dd v
\end{equation}
Now, using change of variable $u = \tau - x, v = \tau - y$ we have:
\begin{equation}
    f(\tau)
    = e^{\frac{4}{\sqrt \lambda} \tau}
    \iint_{[0, \tau(1-\frac{1}{\sqrt \tau})]^2}
    (\tau - x)^{-\frac32}  (\tau - y)^{-\frac32}  e^{ \frac{-2}{\sqrt{\lambda}} (x+y) }  M_\lambda(x+y) \dd x \dd y
\end{equation}
with $s = x+y$:
\begin{eqnarray}
    e^{\frac{-4}{\sqrt \lambda} \tau}f(\tau)
    & = & 
    \int_0^{\tau(1-\frac{1}{\sqrt \tau})} \int_x^{x+\tau(1-\frac{1}{\sqrt \tau})}
    (\tau - x)^{-\frac32}  (\tau - s + x)^{-\frac32}  e^{ \frac{-2}{\sqrt{\lambda}} s }  M_\lambda(s) \dd s \dd x \\
    & = & 
    \int_0^{2\tau(1-\frac{1}{\sqrt \tau})} 
    \int_{\max\left(s - \tau(1-\frac{1}{\sqrt \tau}), 0\right)}^{\min\left(\tau(1-\frac{1}{\sqrt \tau}), s\right)}
    (\tau - x)^{-\frac32}  (\tau - s + x)^{-\frac32} \dd x e^{ \frac{-2}{\sqrt{\lambda}} s }  M_\lambda(s) \dd s \\
    & = & 
    \int_0^{\tau(1-\frac{1}{\sqrt \tau})} 
    \int_0^s
    (\tau - x)^{-\frac32}  (\tau - s + x)^{-\frac32} \dd x e^{ \frac{-2}{\sqrt{\lambda}} s }  M_\lambda(s) \dd s \\
    & + &
    \int_{\tau(1-\frac{1}{\sqrt \tau})}^{2\tau(1-\frac{1}{\sqrt \tau})} 
    \int_{s-\tau(1-\frac{1}{\sqrt \tau})}^{\tau(1-\frac{1}{\sqrt \tau})} 
    (\tau - x)^{-\frac32}  (\tau - s + x)^{-\frac32} \dd x e^{ \frac{-2}{\sqrt{\lambda}} s }  M_\lambda(s) \dd s
\end{eqnarray}
On the first integral, we find
\begin{equation}
    \int_0^s
    (\tau - x)^{-\frac32}  (\tau - s + x)^{-\frac32} \dd x 
    = \left[ \frac{2 (2x - s)}{
        (2\tau - s)^2 \sqrt{(\tau - x)(\tau + x - s)}
    } \right]_0^s
    = \frac{
        4s
    }{
        (2 \tau - s)^2 \sqrt{(\tau-s)\tau}
    }
\end{equation}
However, $s \leq \tau - \sqrt{\tau}$ so $\sqrt \tau \leq \tau - s$ and $\tau + \sqrt{\tau} \leq 2 \tau - s$ so:
\begin{equation}
    \frac{
        4s
    }{
        (2 \tau - s)^2 \sqrt{(\tau-s)\tau}
    } \leq 
    \frac{
        4 \tau (1 - \frac{1}{\sqrt \tau})
    }{
        \tau^2 (1 + \frac{1}{\sqrt{\tau}})^2 \tau^\frac12 \tau^\frac14
    }
    = 4 \tau^{-\frac74} (1 + o_\tau(1))
\end{equation}
Therefore, we find:
\begin{equation}
    \int_0^{\tau(1-\frac{1}{\sqrt \tau})} 
    \int_0^s
    (\tau - x)^{-\frac32}  (\tau - s + x)^{-\frac32} \dd x e^{ \frac{-2}{\sqrt{\lambda}} s }  M_\lambda(s) \dd s
    \leq 4 \tau^{-\frac74} (1  + o_\tau(1)) \mathcal L M_\lambda \left( \frac{2}{\sqrt \lambda} \right) 
\end{equation}
Noticeably, $\mathcal L M_\lambda \left( \frac{2}{\sqrt \lambda} \right) = -\sqrt \lambda G_{\text{sc}}(2) = \sqrt{\lambda}$. In the asymptotic limit, this term can be neglected due to $\tau^{-\frac74}$ compared to $\tau^{-\frac32}$.\\
Similarly, we find
\begin{equation}
    \int_{s-\tau(1-\frac{1}{\sqrt \tau})}^{\tau(1-\frac{1}{\sqrt \tau})}
    (\tau - x)^{-\frac32}  (\tau - s + x)^{-\frac32} \dd x
    = \frac{
        4 (2 \tau - s - 2 \sqrt{\tau})
    }{
        (2 \tau - s)^2 \tau^\frac14 \sqrt{ 2 \tau - \sqrt \tau - s}
    }
\end{equation}
then, in $[\tau - \sqrt{\tau}, 2(\tau - \sqrt{\tau})]$ we approximate $M_\lambda$ with its asymptotic expression. So we are left to evaluate 
\begin{equation}
    K(\tau) = \int_{\tau(1-\frac{1}{\sqrt \tau})}^{2\tau(1-\frac{1}{\sqrt \tau})} 
    \frac{
        4 (2 \tau - s - 2 \sqrt{\tau})
    }{
        s^\frac32 (2 \tau - s)^2 \tau^\frac14 \sqrt{ 2 \tau - \sqrt \tau - s} 
    }
    \dd s
\end{equation}
Notice that $2 \tau - s - 2 \sqrt{\tau} \leq \tau (1 - \frac{1}{\sqrt \tau})$, 
and $2 \tau^\frac12 \leq 2 \tau - s $ and $\tau^\frac12 \leq 2 \tau - \sqrt \tau - s$, hence
\begin{equation}
    0 \leq K(\tau) \leq 
    \frac{4 \tau (1 - \frac{1}{\sqrt \tau})}{ (2 \tau^\frac12)^2 \tau^\frac14 \sqrt{\tau^\frac12} }
    \int_{\tau(1-\frac{1}{\sqrt \tau})}^{2\tau(1-\frac{1}{\sqrt \tau})} 
    \frac{\dd s}{s^\frac32 }
\end{equation}
So
\begin{equation}
    0 \leq K(\tau) \leq 
    \frac{(1 - \frac{1}{\sqrt \tau})}{ \tau^\frac12 }
    \left[ -\frac{2}{s^\frac12 } \right]_{\tau(1-\frac{1}{\sqrt \tau})}^{2\tau(1-\frac{1}{\sqrt \tau})} 
\end{equation}

with a change of variable $u = s - (\tau - \sqrt{\tau})$ we find
\begin{equation}
    K(\tau) = \frac{1}{\tau^\frac14}
    \int_0^{\tau(1-\frac{1}{\sqrt \tau})} 
    \frac{
        4 ( \tau (1 - \frac{1}{\sqrt{\tau}}) - u )
    }{
        ( \tau (1 - \frac{1}{\sqrt{\tau}}) + u )^\frac32 
        (\tau(1  + \frac{1}{\sqrt{\tau}}) - u)^2  
        \sqrt{ \tau - u} 
    }
    \dd u
\end{equation}
with another change of variable $u = \tau r$ we find:
\begin{equation}
    K(\tau) = 
    \frac{
        4 
    }{
        \tau^\frac94
    }
    \int_0^{1-\frac{1}{\sqrt \tau}} 
    \frac{
        ( 1 - \frac{1}{\sqrt{\tau}} - r )
    }{
        ( 1 - \frac{1}{\sqrt{\tau}} + r )^\frac32 
        (1  + \frac{1}{\sqrt{\tau}} - r)^2  
        \sqrt{ 1 - r} 
    }
    \dd r
\end{equation}
Though this integral can be completely solved, we are only interested in bounding it. In particular, we find:
\begin{equation}
    K(\tau) \leq 
    \frac{
        4 
    }{
        \tau^\frac94
    }
    \int_0^{1-\frac{1}{\sqrt \tau}} 
    \frac{
        ( 1 - r )
    }{
        ( 1 - \frac{1}{\sqrt{\tau}} )^\frac32 
        (1  - r)^2  
        \sqrt{ 1 - r} 
    }
    \dd r
    =     \frac{
        4 
    }{
        \tau^\frac94 ( 1 - \frac{1}{\sqrt{\tau}} )^\frac32 
    }
    \int_0^{1-\frac{1}{\sqrt \tau}} 
    \frac{
      \dd r
    }{
        (1  - r)^\frac32
    }
\end{equation}
So 
\begin{equation}
    K(\tau) \leq 
    \frac{
        4 
    }{
        \tau^\frac94 ( 1 - \frac{1}{\sqrt{\tau}} )^\frac32 
    }
    \left[ \frac{
      2
    }{
        (1  - r)^\frac12
    } \right]_0^{1 - \frac{1}{\sqrt{\tau}} }
    = 
    \frac{
        8 (1 - \frac{1}{\tau^\frac14})
    }{
        \tau^\frac84 ( 1 - \frac{1}{\sqrt{\tau}} )^\frac32 
    }
    = 8 \tau^{-\frac84} (1 + o(1))
\end{equation}
In the end, the integral on $I_3$ can also be neglected.

\paragraph{conclusion} summing up all the main contributions from \eqref{eq:M_approx}, \eqref{eq:second_contrib} and \eqref{eq:third_contrib} we find
\begin{eqnarray}
    \lim_{\tau \to \infty} \tau^{\frac32}\e^{- \frac{4 \tau}{\sqrt{\lambda}}}\hat p(\tau) & = & 
    \frac{
        \lambda^\frac34 
    }{
        2^\frac52 \sqrt{\pi}
    }
    +
    \frac{
        \alpha^2 \lambda^\frac34
    }{
        2^\frac32 \sqrt{\pi} \left( \frac{1}{\sqrt \lambda} - 1 \right)
    }
    +
    \frac{
        \alpha^2 \lambda^\frac34 
    }{
        2^\frac52 \sqrt{\pi} \left( \frac{1}{\sqrt \lambda} - 1 \right)^2
    } \\
    & =&
    \frac{
        \lambda^\frac34 
    }{
        2^\frac52 \sqrt{\pi}
    } \left[
        1 + \alpha^2 \left(
            \frac{2}{\frac{1}{\sqrt \lambda} - 1}
            + \frac{1}{\left( \frac{1}{\sqrt \lambda} - 1 \right)^2}
        \right)
    \right] 
\end{eqnarray}
and thus:
\begin{equation}
    \frac{1}{\sqrt{\hat p(\tau)}}
    \sim 
    \frac{
        2^\frac54 \pi^\frac14
    }{
        \lambda^\frac38 
    } \left[
        1 - \alpha^2 + \frac{\alpha^2}{\lambda (\frac{1}{\sqrt \lambda} - 1)^2 }
    \right]^{-\frac12}
    \tau^{\frac34} e^{-\frac{2}{\sqrt \lambda} \tau}
\end{equation}
Using back \eqref{eq:watsonlemma} we find
\begin{equation}
    \label{asym_neg}
    \bar q(\tau)
    \sim  
    \frac{ \alpha \left( \frac{2}{\pi} \right)^\frac14  }{
        \lambda^\frac58 
        \left(1 - \frac{1}{\sqrt{\lambda}}\right)^2
        \sqrt{
        1 - \alpha^2 + \frac{\alpha^2}{\lambda (\frac{1}{\sqrt \lambda} - 1)^2 }
    }} \tau^{- \frac34}
\end{equation}
Numerical evaluations from the functions of theorem \ref{th:risktrack} match correctly this expression for different values of $(\alpha,\lambda)$, see figure \ref{fig:asymptotics} (a) for instance.

\subsection{Asymptotic analysis of $\lambda > 1$} \label{app:second_order}

Using the previous analysis for $\hat q(\tau)$ (\eqref{lemma:new_q} and \eqref{eq:watsonlemma}), we have an additional term:
\begin{equation}
    \hat q(\tau) = \alpha \left( 1 - \frac{1}{\lambda} \right)  e^{(1+\frac{1}{\lambda})\tau}
    +     \frac{\alpha \tau^{-\frac32} 
    e^{ \frac{2}{\sqrt{\lambda}} \tau } 
    }{2 \sqrt{\pi} \lambda^\frac14 \left(1 - \frac{1}{\sqrt{\lambda}}\right)^2}  
    + o( \tau^{-\frac32} e^{ \frac{2}{\sqrt{\lambda}} \tau } )
\end{equation}

Now, for $\hat p(\tau)$, we have already seen the leading asymptotics in equation \eqref{asym_p}. For the next correction,
we postulate through computer analysis that there exists a non-null constant $C \in \mathbb R_+^*$ such that it takes the form:
\begin{equation}
    \hat p(\tau) =  \alpha^2 \left( 1 - \frac{1}{\lambda} \right) e^{2(1+\frac{1}{\lambda})\tau} \left[ 1 
    - 2 \tau^{-\frac32} e^{-2(1-\frac{1}{\sqrt \lambda})^2 \tau} (C + o(1)) \right]
\end{equation}
Hence the expression:
\begin{equation}
    \frac{1}{\sqrt{\hat p(\tau)}} = 
    \frac{ e^{-(1+\frac{1}{\lambda})\tau}  }{|\alpha| \sqrt{1-\frac{1}{\lambda}}}
    \left[ 1 
    + \tau^{-\frac32} e^{-2(1-\frac{1}{\sqrt \lambda})^2 \tau} (C + o(1)) \right]
\end{equation}

Putting things together, we find:
\begin{equation}
    \bar q(\tau) = \text{sign}(\alpha) \sqrt{1-\frac{1}{\lambda}}
    \left( 
        1 + \frac{  \tau^{-\frac32} 
        e^{ -(1-\frac{1}{\sqrt \lambda})^2 \tau }  (1+o(1))
        }{2 (1-\frac{1}{\lambda}) \sqrt{\pi} \lambda^\frac14 \left(1 - \frac{1}{\sqrt{\lambda}}\right)^2}  
    \right)
    \left( 1 
    + \tau^{-\frac32} e^{-2(1-\frac{1}{\sqrt \lambda})^2 \tau} (C + o(1)) \right)
    \label{eq:bar_q_geq1}
\end{equation}
Hence the exponential term in the expression of $\hat q$ dominates the one in the expression of $\hat p$. Therefore, expanding the asymptotic expansion provides the result:
\begin{equation}
    \label{asym_pos}
    \bar q(\tau) - \text{sign}(\alpha) \sqrt{1-\frac{1}{\lambda}}
    \sim\frac{  \text{sign}(\alpha)  
        }{2 \sqrt{\pi} \lambda^\frac14 \sqrt{1-\frac{1}{\lambda}}  \left(1 - \frac{1}{\sqrt{\lambda}}\right)^2}  
        \tau^{-\frac32}  e^{ -(1-\frac{1}{\sqrt \lambda})^2 \tau } 
\end{equation}
More specifically, equation \eqref{eq:bar_q_geq1} shows that the second order term of $\hat q$ dominates the one of $\frac{1}{\sqrt{\hat p}}$ when we compute the final contribution in equation \eqref{asym_pos}. Therefore, this fact can be emphasized with the equivalent limiting behavior:
\begin{equation}
    \label{asym_lim}
        \bar q(\tau) - \text{sign}(\alpha) \sqrt{1-\frac{1}{\lambda}}
   \sim
        \frac{1}{|\alpha| \sqrt{1 - \frac{1}{\lambda}}} \left(\hat q(\tau) e^{-(1+\frac{1}{\lambda})\tau} -  \alpha \left(1 - \frac{1}{\lambda}\right)\right)
\end{equation}
This form is actually more convenient because a numerical evaluation $\bar q(\tau)$ for large $\tau$ requires extra precision and computational resources due to the double-integral within the $\hat p(\tau)$ term. Therefore, it appears to be easier to observe the equivalent behavior in \eqref{asym_lim} rather than in \eqref{asym_pos}.
To illustrate this phenomenon, one can evaluate:
\begin{eqnarray}
    \psi(\tau) & = &  
    |\alpha| \sqrt{1 - \frac{1}{\lambda}}
    \left( \bar q(\tau) - \text{sign}(\alpha) \sqrt{1-\frac{1}{\lambda}} \right) e^{(1-\frac{1}{\sqrt \lambda})^2 \tau}\\
    \phi(\tau) & = & \left(\hat q(\tau) e^{-(1+\frac{1}{\lambda})\tau} -  \alpha \left(1 - \frac{1}{\lambda}\right)\right) e^{(1-\frac{1}{\sqrt \lambda})^2 \tau}\\
    \mathcal A(\tau) & = & \frac{  \alpha
    }{2 \sqrt{\pi} \lambda^\frac14   \left(1 - \frac{1}{\sqrt{\lambda}}\right)^2}  
    \tau^{-\frac32} 
\end{eqnarray}
and expect to observe $\psi(\tau) \sim \phi(\tau) \sim \mathcal A(\tau)$ when $\tau \to \infty$ for any $\lambda >1$ and $\alpha \neq 0$. See figure \ref{fig:asymptotics} (b) as an example where the computation of $\psi(\tau)$ had to be stopped earlier in time to cope with computational limits of the math library Scipy.

\begin{figure}
    \centering
    \subfigure[$\lambda=0.5$, $\alpha = 0.1$]{\includegraphics[width=7cm]{./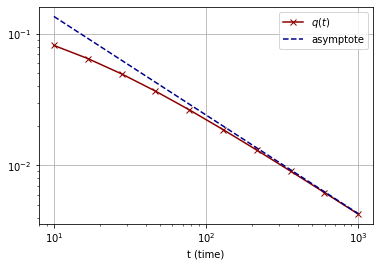} }
    \qquad
    \subfigure[$\lambda=5$, $\alpha = 0.1$]{\includegraphics[width=7cm]{./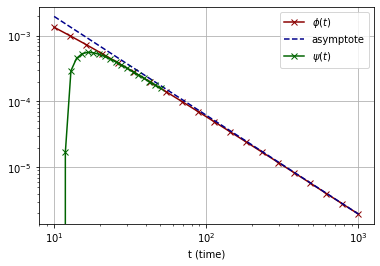} }
    \caption{
        Example of a numerical evaluation of theorem \ref{th:risktrack} and comparisons with their respective asymptotes in log-scale for $\lambda < 1$ in (a) and $\lambda > 1$ in (b).
    }
    \label{fig:asymptotics}
\end{figure}

\subsection{Asymptotic analysis for $\lambda = 1$}
In the special case $\lambda = 1$ where the regime changes, one can write explicitly:
\begin{equation}
    \hat q(\tau) = \alpha e^{2\tau} \left( 
    1 - 1 + e^{-2\tau}(I_0(2\tau) + I_1(2\tau))
    \right) = \alpha \left[
        I_0(2\tau) + I_1(2\tau)
    \right]
\end{equation}
and we find the first term of the asymptotic expansion in $\tau \to \infty$:
\begin{equation}\label{qchapeau}
    \hat q(\tau) \sim \alpha \frac{e^{2\tau}}{\sqrt{\pi \tau}} .
\end{equation}
Some further analysis lead us to a similar estimate for $\hat p(\tau)$
\begin{equation}
    \sqrt{\hat p(\tau)} \sim |\alpha| \frac{e^{2t}}{(2 \pi \tau)^{\frac14}}
\end{equation}
and thus to conclude using \eqref{qchapeau} (for $\alpha > 0$):
\begin{equation}
    \bar q(\tau) \sim \left( \frac{2}{\pi \tau} \right)^\frac14
\end{equation}

Using similar arguments as the case $\lambda < 1$ (see section \ref{app:ref_intervals}), we can check that the main asymptotic contribution in $\tau^{-\frac12}$ comes from the the third term of $\hat p$ on the interval $I_3$. 
Indeed, the first term in $M_1(2\tau)$ is obviously in $\tau^{-\frac32}$.
The second term can also be neglected, notice that we have:
\begin{equation}
    \int_{\sqrt \tau}^\tau \hat q(s) \frac{e^{-2s}}{(2\tau-s)^\frac32} \dd s \sim
    \frac{\alpha}{\sqrt{\pi}}
    \int_{\sqrt \tau}^\tau 
    \frac{\dd s}{\sqrt s (2\tau - s)^\frac32}
    \sim 
    \frac{\alpha}{\sqrt \pi \tau}
\end{equation}
Also we don't have a constant term with the laplace transform of $\hat q$. Instead  for any $t>0$
\begin{equation}
    \int_0^t \hat q(s) e^{-2s} \dd s
    = \frac{\alpha}{2} (e^{-2 t} (1 + 4 t) I_0(2 t) + 
   4 t e^{-2 t} I_1(2 t) - 1)
\end{equation}
In particular when $t=\sqrt \tau$ and $\tau \to \infty$:
\begin{equation}
    \int_0^{\sqrt \tau} \hat q(s) e^{-2s} \dd s
    \sim  \frac{4 \alpha \sqrt{\tau}}{\sqrt{4 \pi \sqrt{\tau}}} \sim
    \frac{2 \alpha \tau^\frac14}{\sqrt{\pi }}
    \label{eq:lq_1_asym}
\end{equation}
Hence with the additional term in $\tau^{-\frac32}$ this gives a term in $\tau^{-\frac54}$.
We proceed similarly for the third term with the $4$ segments $I_1, I_2, I_2', I_3$.
\paragraph{Interval $I_1 = [0, \sqrt \tau]^2$} Similar considerations using the result \eqref{eq:lq_1_asym} lead to the asymptotics:
\begin{equation}
    \iint_{I_1} \hat q(u) \hat q(v) e^{-2(u+v)} \dd u \dd v \sim
    \frac{4 \alpha^2 \tau^\frac12}{\pi }
\end{equation}
Hence with the additional term in $\tau^{-\frac32}$ this gives a term in $\tau^{-1}$.

\paragraph{Interval $I_2 = [0, \sqrt \tau] \times [\sqrt \tau, \tau]$}
We get:
\begin{equation}
    \iint_{I_2} \hat q(u) \hat q (v) M_1(2\tau - u -v) \dd u \dd v \simeq
    \frac{\alpha}{2 \pi} \iint_{I_2} \hat q(u) \frac{e^{2v}}{\sqrt v} 
    \frac{ e^{2(2\tau - u -v)}}{ (2\tau-u-v)^\frac32}\dd u \dd v
\end{equation}
We can compute further the integral considering $u = o(\tau)$:
\begin{equation}
    \begin{array}{ccl}
        \int_{v} \frac{1}{\sqrt{v} (2 \tau - u -v)^\frac32} \dd v
        & = &
        \frac{2}{2 \tau - u}
        \left[
            \sqrt \frac{v}{2\tau - u - v}
        \right]_{\sqrt \tau}^\tau \\
        & = & 
        \frac{2}{2 \tau - u} \left[
        \sqrt{ \frac{\tau}{2\tau - u} } - 
        \sqrt{ \frac{\sqrt{\tau}}{2\tau - u - \sqrt{\tau}} } \right] \\
        & \sim & \tau^{-1}
    \end{array}
\end{equation}
Finally, using \eqref{eq:lq_1_asym} gives:
\begin{equation}
    \iint_{I_2} \hat q(u) \hat q (v) M_1(2\tau - u -v) \dd u \dd v 
    \sim
    \frac{\alpha^2}{\pi^\frac32}  e^{4\tau} \tau^{-\frac34}
\end{equation}

\paragraph{Interval $I_3 = [\sqrt \tau, \tau]^2$}
On this interval we have:
\begin{equation}
    \iint_{I_3} \hat q(u) \hat q (v) M_1(2\tau - u -v) \dd u \dd v \simeq
    \frac{\alpha^2 }{\pi}
    \iint_{I_3} e^{2(u+v)} \frac{I_1(2(2\tau - u - v))}{(2\tau - u -v)\sqrt{uv}} \dd u \dd v
\end{equation}
Let's focus on the right hand side integral:
\begin{equation}
    f(\tau) = \iint_{I_3} e^{2(u+v)} \frac{I_1(2(2\tau - u - v))}{(2\tau - u -v)\sqrt{uv}} \dd u \dd v
\end{equation}
With $x = \tau - u$, $y = \tau - v$ we find:
\begin{equation}
    e^{-4 \tau} f(\tau)
    = 
    \iint_{[0,\tau - \sqrt \tau]^2} e^{-2(x+y)} \frac{I_1(2(x+y))}{(x+y)\sqrt{(\tau - x)(\tau - y)}} \dd x \dd y
\end{equation}
Now, consider further the change of variable: $x=(\tau - \sqrt{\tau} ) r$ and $y=(\tau - \sqrt{\tau} ) s$ .
we have:
\begin{equation}
    \sqrt{\tau} e^{-4 \tau} f(\tau)
    = 
    \iint_{[0,1]^2} 
    \sqrt{\tau}
    \frac{
        e^{-2(\tau - \sqrt{\tau} )(r+s)} I_1(2(\tau - \sqrt{\tau} )(r+s))
    }{
        (r+s)\sqrt{ \left( \frac{1}{1 - \frac{1}{\sqrt{\tau}}} - r \right)\left(\frac{1}{1 - \frac{1}{\sqrt{\tau}}} - s\right)}
    } \dd r \dd s
\end{equation}
Now, for all $r,s \in [0,1]^2\setminus\{(0,0)\}$, we have:
\begin{equation}
    \lim_{\tau \to \infty} \sqrt{\tau}
    \frac{
        e^{-2(\tau - \sqrt{\tau} )(r+s)} I_1(2(\tau - \sqrt{\tau} )(r+s))
    }{
        (r+s)\sqrt{ \left( \frac{1}{1 - \frac{1}{\sqrt{\tau}}} - r \right)\left(\frac{1}{1 - \frac{1}{\sqrt{\tau}}} - s\right)}
    } = \frac{1}{\sqrt{4\pi}(r+s)^\frac32 \sqrt{(1-r)(1-s)}}
\end{equation}
and it can be shown that this function is integrable:
\begin{equation}
    \iint_{[0,1]^2} 
    \frac{\dd r \dd s}{\sqrt{4\pi}(r+s)^\frac32 \sqrt{(1-r)(1-s)}}
      =  \sqrt \frac{\pi}{2}
\end{equation}
Further, for all $r,s \in [0,1]^2\setminus\{(0,0)\}$ and for instance $\tau \geq 4$:
\begin{equation}
    \sqrt{\tau} I_1(2(\tau - \sqrt{\tau} )(r+s))
    \leq \frac{1}{\sqrt{4 \pi (1 - \frac{1}{\sqrt{\tau}} )(r+s)}} \leq \frac{\sqrt 2}{\sqrt{4 \pi(r+s)}}
\end{equation}
and 
\begin{equation}
    \frac{1}{\sqrt{ \left( \frac{1}{1 - \frac{1}{\sqrt{\tau}}} - r \right)\left(\frac{1}{1 - \frac{1}{\sqrt{\tau}}} - s\right)}}  \leq 
    \frac{1}{\sqrt{(1-r)(1-s)}}
\end{equation}
Hence for all $\tau \geq 4$, the integrand is dominated by its limit times $\sqrt 2$.

In conclusion, we have the main contribution term
\begin{equation}
    \iint_{I_3} \hat q(u) \hat q (v) M_1(2\tau - u -v) \dd u \dd v \sim \frac{\alpha^2}{\sqrt{2 \pi \tau}}
\end{equation}

\subsection{Asymptotic analysis conclusion}
We have seen the case $\lambda < 1$ in \eqref{asym_neg} and $\lambda > 1$ in \eqref{asym_pos}. 
So compared to the first case $\lambda < 1$, the convergence towards the limit is reached with an exponential term $\exp\{ -(1-\frac{1}{\sqrt \lambda})^2 \tau \} $ in the asymptotic limit for $\lambda > 1$. It confirms the result that the convergence happens faster as $\lambda$ grows to infinity, and that the exponential term vanishes as $\lambda$ gets close to $1$ - with an additional singularity in the denominator.


\section{Additional experiments}\label{app:add-experiments}
A link to reproduce all the examples will be available in the final version.

\subsection{Limiting gradient descent}
We illustrate the predicted time evolution for cases $\alpha$ very close to $0$ and $\alpha$ very close to $1$ in figure \ref{fig:add_theoretical_curves}. Since $\alpha = 0$ leads to a null overlap evolution, a slight non-zero initial value of $\alpha$ is required to initiate the learning algorithm. The smaller the $\alpha$ the more is the asymptotic regime delayed. The opposite case $\alpha = 1$ brings another insight, namely 
when $\theta_0 = \pm \theta^*$ the effect of the noise inexorably disturbs the signal towards a lower limiting overlap (for $\lambda < \infty$).
\begin{figure}
    \centering
    \subfigure[$\alpha = 0.01$]{\includegraphics[width=7cm]{./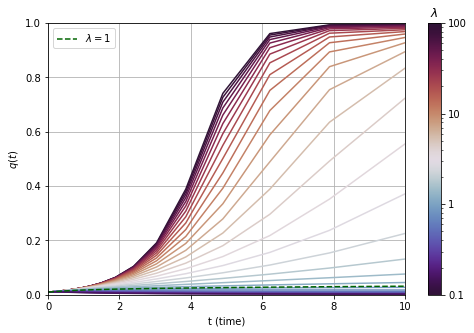} }
    \qquad
    \subfigure[$\alpha = 1.0$]{\includegraphics[width=7cm]{./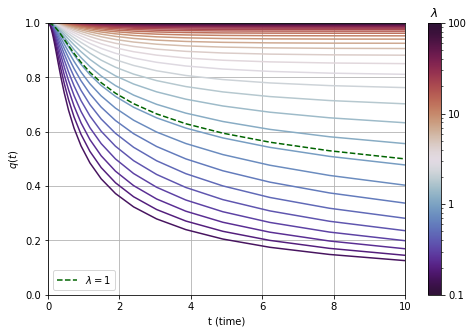} }
    \caption{Comparison of the overlap over time with different configurations of $\lambda$ parameter, and between two different values of $\alpha$.}
    \label{fig:add_theoretical_curves}
\end{figure}

\subsection{Comparison with experimental gradient descent algorithm}\label{subsubsec:experiments}

%

The theoretical gradient descent prediction is compared with the experimental values when taking the data dimension $n$ sufficiently large over multiple runs with new samples of the noise matrix. Discrete step size gradient descent is performed while keeping $\theta_t$ on $\mathcal S_d(\sqrt n)$. We choose a $\delta_t > 0$ sufficiently small and consider discrete times $t_k = k \delta_t$ for $k \in \mathbb N$.  We update $\theta_{t_k}$  in two steps: first with the gradient descent
$\theta_{t_k+\frac{\delta t}{2}} = \theta_{t_k} - \eta \delta_t \nabla \mathcal H(\theta_t)$, and secondly projecting back on the sphere $\theta_{t_{k+1}} = \sqrt{n} \theta_{t_k+\frac{\delta t}{2}} \lVert \theta_{t_k+\frac{\delta t}{2}}\rVert^{-1}$. These steps are implemented using Tensorflow in Python and run seamlessly on a standard single computer configuration. The initial vectors $\theta_0$ and $\theta^*$ are chosen deterministically as $\sqrt n \theta_0 = \alpha \e_1 + \sqrt{1-\alpha^2} \e_2 $ and $\sqrt n \theta^* = \e_1$ with $(e_i)_{1 \leq i \leq n}$ the canonical basis of $\mathbb R^n$, while the noise matrix $H$ is generated randomly. To account for the randomness of $H$ at each execution, we perform $100$ runs and give the quantiles for quantities of interest.

\begin{figure}
    \centering
    \subfigure[Overlap]{\includegraphics[width=6cm]{./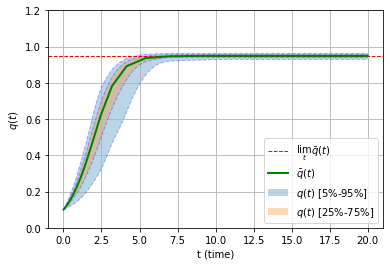} }
    \qquad
    \subfigure[Cost]{\includegraphics[width=6cm]{./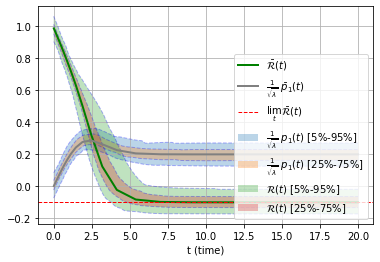} }
\caption{$\lambda = 10, n = 70, \alpha = 0.1, \delta_t=0.1$}
\label{fig:experimental_runs}
\end{figure}
As shown in figure  \ref{fig:experimental_runs}, the learning curve matches the theoretical limiting curve with some fluctuations. As illustrated below, these fluctuations diminish as $n$ is increased. Noticeably, in the regime where $\lambda > 1$, smaller values of $\lambda$ require higher values of $n$ to keep the same concentration.
Therefore, the formula from theorem \ref{th:risktrack} provides a good theoretical framework to predict the behavior of the experimental learning algorithm. Such formulas potentially allow to benchmark the time-evolution of gradient descent techniques and provide guidelines for early-stopping commonly used in machine learning.

We provide a range of further different experiments for different values of $\lambda$, $\alpha$, $n$. 

Let us first comment the regime $\lambda >1$ illustrated on figures \ref{fig:experimental_runs2}, \ref{fig:experimental_runs3}, 
\ref{fig:experimental_runs5}.
Figure \ref{fig:experimental_runs2} clearly shows that increasing $n$ up to $1000$ concentrates the experimental curves  around the expect limiting overlap and cost $\bar q, \bar{ \mathcal{H}}$. We also see even more clearly the characteristic change of $p_1$ with a "self-healing" process at some specific point in the dynamics of the learning algorithm
(recall that $p_1$ is a similarity measure between the reconstructed matrix $\theta_t\theta_t^T$ and the noise matrix $H$). This is also seen in Figures \ref{fig:experimental_runs3} and \ref{fig:experimental_runs5} for different values of $\lambda$ and $\alpha$.
Figures \ref{fig:experimental_runs2} and \ref{fig:experimental_runs3} only differ in the value $\lambda$: we observe that decreasing this parameter closer to $1$ not only decreases the overlap, but also increases the deviation from the limiting theoretical overlap $\bar q$ - and thus as $\lambda$ decreases higher values of $n$ would thus be needed to match closely $\bar q$.

Finally, in the regime $\lambda <1$, we observe on Figure \ref{fig:experimental_runs4} that similarity measure 
$p_1$ explodes and overtakes the risk.

\begin{figure}
    \centering
    \subfigure[Overlap]{\includegraphics[width=6cm]{./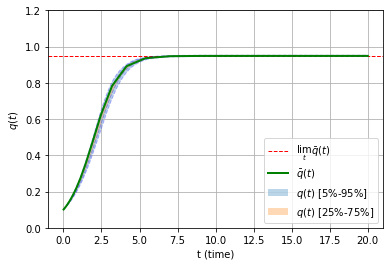} }
    \qquad
    \subfigure[Cost]{\includegraphics[width=6cm]{./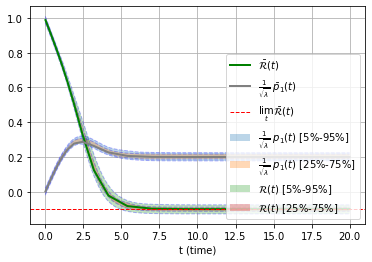} }
    \caption{$\lambda = 10, n = 1000, \alpha = 0.1, \delta_t=0.1$}
    \label{fig:experimental_runs2}
\end{figure}
\begin{figure}
    \centering
    \subfigure[Overlap]{\includegraphics[width=6cm]{./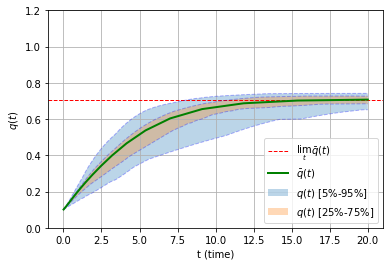} }
    \qquad
    \subfigure[Cost]{\includegraphics[width=6cm]{./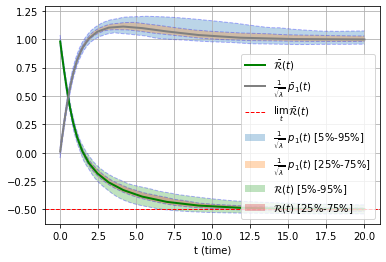} }
    \caption{$\lambda = 2, n = 1000, \alpha = 0.1, \delta_t=0.1$}
    \label{fig:experimental_runs3}
\end{figure}
\begin{figure}
    \centering
    \subfigure[Overlap]{\includegraphics[width=6cm]{./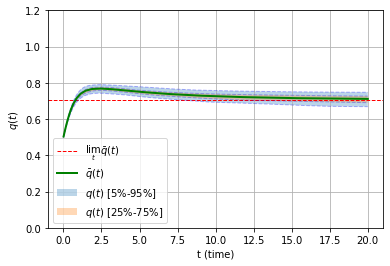} }
    \qquad
    \subfigure[Cost]{\includegraphics[width=6cm]{./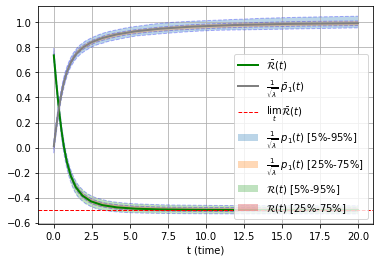} }
    \caption{$\lambda = 2, n = 1000, \alpha = 0.5, \delta_t=0.1$}
    \label{fig:experimental_runs5}
\end{figure}
\begin{figure}
    \centering
    \subfigure[Overlap]{\includegraphics[width=6cm]{./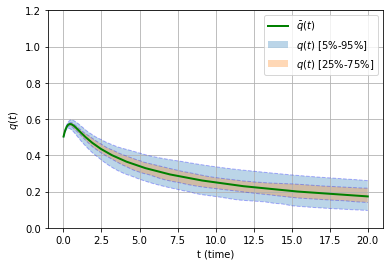} }
    \qquad
    \subfigure[Cost]{\includegraphics[width=6cm]{./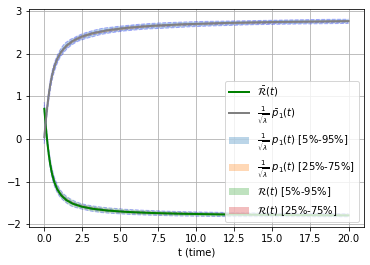} }
    \caption{$\lambda = 0.5, n = 1000, \alpha = 0.5, \delta_t=0.1$. Note the different scale for the cost.}
    \label{fig:experimental_runs4}
\end{figure}

\end{document}